\newtheorem{theorem}{Theorem}[section]
\newtheorem{lemma}[theorem]{Lemma}
\newtheorem{corollary}[theorem]{Corollary}
\newtheorem{proposition}[theorem]{Proposition}
\theoremstyle{definition}
\newtheorem{definition}{Definition}
\newtheorem*{remark}{Remark}
\numberwithin{equation}{section}
\renewcommand{\tilde}{\widetilde}
\renewcommand{\Pr}{\mathbb{P}}
\newcommand{\tr}{\mathrm{tr}}
\newcommand{\hX}{\hat{\mathbf{X}}}
\renewcommand{\Pr}{\mathbb{P}}
\def\thm@space@setup{%
 \thm@preskip=\parskip \thm@postskip=0pt
}
\begin{document}
\title{Limit theorems for eigenvectors of the normalized Laplacian for random graphs}
\author{Minh Tang and Carey E. Priebe \\ Department of Applied Mathematics and Statistics \\ Johns Hopkins University}
\maketitle

\begin{abstract}
  We prove a central limit theorem for the components of the eigenvectors corresponding to the $d$ largest eigenvalues of the normalized Laplacian matrix of a finite dimensional random dot product graph. As a corollary, we show that for stochastic blockmodel graphs, the rows of the spectral embedding of the normalized Laplacian converge to multivariate normals and furthermore the mean and the covariance matrix of each row are functions of the associated vertex's block membership. Together with prior results for the eigenvectors of the adjacency matrix, we then compare, via the Chernoff information between multivariate normal distributions, how the choice of embedding method impacts subsequent inference. We demonstrate that neither embedding method dominates with respect to the inference task of recovering the latent block assignments.   
\end{abstract}


\section{Introduction}
\label{sec:introduction}
Statistical inference on graphs is a burgeoning field of research in machine learning and statistics, with numerous applications to social network, neuroscience, etc. Many statistical inference procedures for graphs involve a preprocessing step of finding a representation of the vertices as points in some low-dimensional Euclidean space. This representation is usually given by the truncated eigendecomposition of the adjacency matrix or related matrices such as the combinatorial Laplacian or the normalized Laplacian. For example, given a point cloud lying in some purported low-dimensional manifold in a high-dimensional ambient space, many manifold learning or non-linear dimension reduction algorithms such as Laplacian eigenmaps \cite{belkin03:_laplac} and diffusion maps \cite{coifman06:_diffus_maps} use the eigenvectors of the normalized Laplacian constructed from a neighborhood graph of the points as a low-dimensional Euclidean representation of the point cloud before performing inference such as clustering or classification. Spectral clustering algorithms such as the normalized cuts algorithm \cite{shi_malik} proceed by embedding a graph into a low-dimensional Euclidean space followed by running $K$-means on the embedding to obtain a partitioning of the vertices. Some network comparison procedures embed the graphs and then compute a kernel-based distance measure between the resulting point clouds \cite{tang14:_nonpar,Asta}. 

The choice of the matrix used in the embedding step and its effect on subsequent inference is, however, rarely addressed in the literature. 
In a recent pioneering work, the authors of \cite{bickel_sarkar_2013} addressed this issue by analyzing, in the context of stochastic blockmodel graphs where the subsequent inference task is the recovery of the block assignments, a metric given by the average distance between the vertices of a block and its cluster centroid for the spectral embedding of the adjacency matrix and the normalized Laplacian matrix.  The metric is then used as a surrogate measure for the performance of the subsequent inference task, i.e., the metric is a surrogate measure for the error rate in recovering the vertices to block assignments. The stochastic blockmodel \cite{holland} is a popular generative  model for random graphs with latent community structure and many results are known regarding consistent recovery of the block assignments; see for example \cite{rohe2011spectral,sussman12,Bickel2009,lyzinski13:_perfec,rinaldo_2013,mossel:ptrf,Choi2010,Snijders1997Estimation,mcsherry} and the references therein. 

It was shown in \cite{bickel_sarkar_2013} that for two-block stochastic blockmodels, for a large regime of parameters the normalized Laplacian spectral embedding reduces the within-block variance (occasionally by a factor of four) while preserving the between-block variance, as compared to that of the adjacency spectral embedding.
This suggests that for a large region of the parameters space for two-block stochastic blockmodels, the spectral embedding of the Laplacian is to be preferred over that of the adjacency matrix for subsequent inference. However, we observed that the metric in \cite{bickel_sarkar_2013} is intrinsically tied to the use of $K$-means as the clustering procedure, i.e., a smaller value of the metric for the Laplacian spectral embedding as compared to that for the adjacency spectral embedding only implies that clustering the Laplacian spectral embedding using $K$-means is possibly better than clustering the adjacency spectral embedding using $K$-means. 

Motivated by the above observation, one main goal of this paper is to propose a metric that is {\em independent} of any specific clustering procedure, i.e., a metric that characterizes the minimum error achievable by {\em any} clustering procedure that uses only the spectral embedding, for the recovery of block assignments in stochastic blockmodel graphs. We achieve this by establishing distributional limit results for the eigenvectors corresponding to the few largest eigenvalues of the adjacency or Laplacian matrix and then characterizing, through the notion of statistical information, the distributional differences between the blocks for either embedding method. Roughly speaking, smaller statistical information implies less information to discriminate between the blocks of the stochastic blockmodel. 

More specifically, the limit result in \cite{athreya2013limit} states that, for stochastic blockmodel graphs, conditional on the block assignments the scaled eigenvectors corresponding to the few largest eigenvalues of the adjacency matrix converge to a multivariate normal (see e.g., Theorem~\ref{THM:NORMALITY-ASE}) as the number of vertices increases. Furthermore, the associated covariance matrix is not necessarily spherical and hence $K$-means clustering for the adjacency spectral embedding does not always yield minimum error for recovering the block assignment. Analogous limit results (see e.g., Theorem~\ref{THM:NORMALITY-LSE}) for the eigenvectors of the normalized Laplacian matrix then facilitate comparison between the two embedding methods via the classical notion of Chernoff information \cite{chernoff_1952}. The Chernoff information is a supremum of the Chernoff $\alpha$-divergences for $\alpha \in (0,1)$ and characterizes the error rate of the Bayes decision rule in hypothesis testing; the Chernoff $\alpha$-divergence is an example of a $f$-divergence \cite{Csizar,Ali-Shelvey} and it satisfies the information processing lemma and is invariant with respect to invertible transformations \cite{Liese_Vadja}. 

Our paper is thus structured as follows. We recall in Section~\ref{sec:setup} the definition of random dot product graphs, stochastic blockmodel graphs, and spectral embedding of the adjacency and Laplacian matrices. We then state in Section~\ref{sec:limit-result-adjacency} several limit results for the eigenvectors of the adjacency spectral embedding. These results are generalizations of results from \cite{athreya2013limit,tang14:_semipar}. The main technical contribution of this paper, namely analogous limit results for the eigenvectors of the Laplacian spectral embedding, are then given in Section~\ref{sec:limit-result-Laplacian}. We then discuss the implications of these limit results in Section~\ref{sec:simulations}; in particular Section~\ref{sec:chernoff2} characterizes, via the notion of Chernoff statistical information, the {\em large-sample optimal} error rate of spectral clustering procedures. We demonstrate that neither embedding method dominates for the inference task of recovering block assignments in stochastic blockmodels. We conclude the paper with some brief remarks on potential extensions of the results presented herein. Proofs of stated results are given in the appendix.


\section{Background and Setting}
\label{sec:setup}
We first recall the notion of a random dot product graph \cite{nickel2006random}.
\begin{definition}
\label{def:rdpg}
  Let $F$ be a distribution on a set $\mathcal{X}\subset \mathbb{R}^d$
  satisfying $x^{\top} y \in[0,1]$ for all $x,y\in
  \mathcal{X}$. We say $(\mathbf{X},\mathbf{A})\sim \mathrm{RDPG}(F)$
  with sparsity factor $\rho_n \leq 1$ if the following hold. Let
  $X_1,\dotsc, X_n {\sim} F$ be independent random variables and define
\begin{equation}
\label{eq:defXP}
\mathbf{X}=[X_1 \mid \cdots \mid X_n]^\top\in \mathbb{R}^{n\times d}\text{
  and } \mathbf{P}= \rho_n \mathbf{X} \mathbf{X}^\top\in [0,1]^{n\times n}.
\end{equation}
The $X_i$ are the {\em latent}  positions for the random graph, i.e., we do not observe $\mathbf{X}$, rather we observe only the matrix $\mathbf{A}$. The matrix $\mathbf{A}\in\{0,1\}^{n\times n}$ is defined to be symmetric
with all zeroes on the diagonal such that for all $i<j$, conditioned
on $X_i,X_j$ the $A_{ij}$ are independent and
\begin{equation}
 A_{ij} \sim\mathrm{Bernoulli}(\rho_n X_i^\top X_j),
\end{equation}
namely,
\begin{equation}
\Pr[\mathbf{A} \mid \mathbf{X}]=\prod_{i <j} (\rho_n X^{\top}_i
X_j)^{A_{ij}}(1- \rho_n X^{\top}_i X_j)^{(1-A_{ij})}.
\end{equation}
\end{definition}

\begin{remark}
We note that non-identifiability is an intrinsic property of random dot product graphs. More specifically, 
if $(\mathbf{X}, \mathbf{A}) \sim \mathrm{RDPG}(F)$ where $F$ is a distribution on $\mathbb{R}^{d}$, then for any orthogonal transformation $U$, $(\mathbf{Y}, \mathbf{B}) \sim \mathrm{RDPG}(F \circ U)$ is identically distributed to $(\mathbf{X}, \mathbf{A})$; we write $F \circ U$ to denote the distribution of $Y = UX$ whenever $X \sim F$. Furthermore, there also exists a distribution $F'$ on $\mathbb{R}^{d'}$ with $d' > d$ such that $(\mathbf{Y}, \mathbf{B}) \sim \mathrm{RDPG}(F')$ is identically distributed to $(\mathbf{X}, \mathbf{A})$. Non-identifiability due to orthogonal transformations cannot be avoided given the observed $\mathbf{A}$. We avoid the other source of non-identifiability by assuming throughout this paper that if $(\mathbf{X}, \mathbf{A}) \sim \mathrm{RDPG}(F)$ then $F$ is non-degenerate, i.e., $\mathbb{E}[X X^{\top}]$ is of full rank.  
\end{remark}


As an example of random dot product graphs, we could take $\mathcal{X}$ to be the unit simplex in $\mathbb{R}^{d}$ and let $F$ be a mixture of Dirichlet distributions or logistic-normal distribution. Random dot product graphs are a specific example of latent position graphs or inhomogeneous random graphs \cite{Hoff2002,bollobas07}, in which each vertex is associated with a latent position $X_i$ and, conditioned on the latent positions, the presence or absence of the edges in the graph are independent Bernoulli random variables where the probablity of an edge between any two vertices with latent positions $X_i$ and $X_j$ is given by $\kappa(X_i, X_j)$ for some symmetric function $\kappa$. A random dot product graph on $n$ vertices is also, when viewed as an induced subgraph of an infinite graph, an exchangeable random graph \cite{diaconis08:_graph_limit_exchan_random_graph}. Random dot product graphs are related to stochastic block model graphs \cite{holland} and degree-corrected stochastic block model graphs \cite{karrer2011stochastic}; for example, a stochastic blockmodel graph on $K$ blocks with a positive semidefinite block probability matrix $\mathbf{B}$ corresponds to a random dot product graph where $F$ is a mixture of $K$ point masses.

For a given matrix $\mathbf{M}$ with non-negative entries, denote
by $\mathcal{L}(\mathbf{M})$ the {\em normalized} Laplacian of $\mathbf{M}$
defined as
\begin{equation}
\label{eq:Laplacian}
  \mathcal{L}(\mathbf{M}) = (\mathrm{diag}(\mathbf{M} \bm{1}))^{-1/2}
  \mathbf{M} (\mathrm{diag}(\mathbf{M} \bm{1}))^{-1/2}
\end{equation}
where, given $\bm{z} = (z_1, \dots, z_n) \in \mathbb{R}^{n}$, $\mathrm{diag}(\bm{z})$ is the $n \times n$
diagonal matrix whose diagonal entries are the $z_i$'s. Our definition of the normalized Laplacian is slightly different from that often found in the literature, e.g., in \cite{chung1997spectral,shi_malik} the normalized Laplacian is $\mathbf{I} - \mathcal{L}(\mathbf{M})$. For the purpose of this paper, namely the notion of the Laplacian spectral embedding via the eigenvalues and eigenvectors of the normalized Laplacian, these two definitions of the normalized Laplacian are equivalent. We shall henceforth refer to $\mathcal{L}(\mathbf{M})$ as the Laplacian of $\mathbf{M}$, 
in contrast to the {\em combinatorial} Laplacian $\mathrm{diag}(\mathbf{M} \bm{1}) - \mathbf{M}$ of $\mathbf{M}$. See \cite{merris1994} for a survey of the combinatorial Laplacian and its connection to graph theory.


\begin{definition}[Adjacency and Laplacian spectral embedding]
\label{defn:ase-lse}
  Let $\mathbf{A}$ be a $n \times n$ adjacency matrix. Suppose the eigendecomposition of $\mathbf{A}$ is given by
  $\mathbf{A} = \sum_{i=1}^{n} \lambda_i \bm{u}_i \bm{u}_i^{\top} $ where $|\lambda_1| \geq |\lambda_2| \geq \dots$ are the eigenvalues and $\bm{u}_1, \bm{u}_2, \dots, \bm{u}_n$ are the corresponding orthonormal eigenvectors. Given a positive integer $d \leq n$, denote by $\mathbf{S}_{\mathbf{A}} = \mathrm{diag}(|\lambda_1|, \dots, |\lambda_d|)$ the diagonal matrix whose diagonal entries are the $|\lambda_1|, \dots, |\lambda_d|$, and denote by $\mathbf{U}_{\mathbf{A}}$ the $n \times d$ matrix whose columns are the corresponding eigenvectors $\bm{u}_1, \dots, \bm{u}_d$. The {\em adjacency spectral embedding} (ASE) of $\mathbf{A}$ into $\mathbb{R}^{d}$ is then the $n \times d$ matrix $\hat{\mathbf{X}} = \mathbf{U}_{\mathbf{A}} \mathbf{S}_{\mathbf{A}}^{1/2}$. Similarly, let $\mathcal{L}(\mathbf{A})$ denote the normalized Laplacian of $\mathbf{A}$ and suppose the eigendecomposition of $\mathcal{L}(\mathbf{A})$ is given by 
$\mathcal{L}(\mathbf{A}) = \sum_{i=1}^{n} \tilde{\lambda}_i \tilde{\bm{u}}_i \tilde{\bm{u}}_i^{\top}$ where $\tilde{\lambda}_1 \geq \tilde{\lambda}_2 \geq \dots \geq \tilde{\lambda}_n \geq 0$ are the eigenvalues and $\tilde{\bm{u}}_1, \tilde{\bm{u}}_2, \dots, \tilde{\bm{u}}_n$ are the corresponding orthonormal eigenvectors. Then given a positive integer $d \leq n$, denote by $\tilde{\mathbf{S}}_{\mathbf{A}} = \mathrm{diag}(\tilde{\lambda}_1, \dots, \tilde{\lambda}_d)$ the diagonal matrix whose diagonal entries are the $\tilde{\lambda}_1, \dots, \tilde{\lambda}_d$ and denote by $\tilde{\mathbf{U}}_{\mathbf{A}}$ the $n \times d$ matrix whose columns are the eigenvectors $\tilde{\bm{u}}_1, \dots, \tilde{\bm{u}}_d$. The {\em Laplacian spectral embedding} of $\mathbf{A}$ into $\mathbb{R}^{d}$ is then the $n \times d$ matrix $\breve{\mathbf{X}} = \tilde{\mathbf{U}}_{\mathbf{A}} \tilde{\mathbf{S}}_{\mathbf{A}}^{1/2}$.
\end{definition}

\begin{remark}
Let $(\mathbf{X}, \mathbf{A}) \sim \mathrm{RDPG}(F)$ with sparsity factor $\rho_n$ and suppose that the $d \times d$ matrix $\mathbb{E}[X X^{\top}]$ is of full-rank where $X \sim F$. The $n \times d$ matrix $\hat{\mathbf{X}}$, the adjacency spectral embedding $\hat{\mathbf{X}}$ of $\mathbf{A}$ into $\mathbb{R}^{d}$, can then be viewed as a consistent estimate of $\rho_n^{1/2} \mathbf{X}$. See \cite{sussman:_thesis} for a comprehensive overview of the consistency results and their implications for subsequent inference. 
On the other hand, as $\mathcal{L}(c \mathbf{M}) = \mathcal{L}(\mathbf{M})$ for any constant $c > 0$, the $n \times d$ matrix $\breve{\mathbf{X}}$ -- the normalized Laplacian embedding of $\mathbf{A}$ into $\mathbb{R}^{d}$ -- can be viewed as a consistent estimate of $(\rho_n \mathrm{diag}(\mathbf{X} \mathbf{X}^{\top} \bm{1}))^{-1/2} \rho_n^{1/2} \mathbf{X}$ which does
 not depend on the sparsity factor $\rho_n$. This is in contrast to the adjacency spectral embedding. For previous consistency results of $\breve{\mathbf{X}}$ as an estimator for $\tilde{\mathbf{X}}$ in various random graphs models, the reader is referred to \cite{rohe2011spectral,qin2013dcsbm,luxburg08:_consis} among others. However, to the best of our knowledge, Theorem~\ref{THM:NORMALITY-LSE} -- namely the distributional convergence of $\breve{\mathbf{X}}$ to a mixture of multivariate normals in the context of random dot product graphs and stochastic blockmodel graphs -- had not been established prior to this paper. Finally, we remark that $\hat{\mathbf{X}}$ and $\breve{\mathbf{X}}$ are estimating quantities that, while closely related -- $\mathbf{X}$ and $(\mathrm{diag}(\mathbf{X} \mathbf{X}^{\top} \bm{1}))^{-1/2} \mathbf{X}$ are one-to-one transformations of each other --  are in essence distinct ``parametrizations'' of random dot product graphs. It is therefore not entirely straightforward to facilitate a direct comparison of the ``efficiency'' of $\hat{\mathbf{X}}$ and $\breve{\mathbf{X}}$ as estimators. This thus motivates our consideration of the $f$-divergences between the multivariate normals since the family of $f$-divergences satisfy the information processing lemma and are invariant with respect to invertible transformations.
\end{remark}

\begin{remark}
\label{rem:sparsity}
For simplicity we shall assume henceforth that either $\rho_n = 1$ for all
$n$, or that $\rho_n \rightarrow 0$ with $n \rho_n =
\omega(\log^{4}{n})$. We note that for our purpose, namely the distributional limit results in Section~\ref{sec:limit-result-adjacency} and Section~\ref{sec:limit-result-Laplacian}, the assumption that $\rho_n = 1$ for all $n$ is 
equivalent to the assumption that there exists a constant $c > 0$ such that $\rho_n \rightarrow c$. The assumption that $n \rho_n = \omega(\log^{4}{n})$ is so that we can apply the concentration inequalties from \cite{lu13:_spect} to show concentration, in spectral norm, of $\mathbf{A}$ and $\mathcal{L}(\mathbf{A})$ around $\rho_n \mathbf{X} \mathbf{X}^{\top}$ and $\mathcal{L}(\mathbf{X} \mathbf{X}^{\top})$, respectively. 
\end{remark}

\subsection{Limit results for the adjacency spectral embedding}
\label{sec:limit-result-adjacency}
We now recall several limit results for $\hat{\mathbf{X}} -
\mathbf{X}$. These results are restatements of earlier results from \cite{athreya2013limit} and \cite{tang14:_semipar}. Theorem~\ref{THM:NORMALITY-ASE} as stated below is a slight generalization of Theorem~1 in \cite{athreya2013limit}; the result in \cite{athreya2013limit} assumed a more restrictive distinct eigenvalues assumption for the matrix $\mathbb{E}[X X^{\top}]$ where $X \sim F$. We shall assume throughout this paper that $d$, the rank of $\mathbb{E}[X X^{\top}]$ where $X \sim F$, is fixed and known {\em a priori}.

\begin{remark}
\label{rem:whp}
For ease of exposition, many of the bounds in this paper are said to hold ``with high probability''. We say that a random variable $\xi \in \mathbb{R}$ is $O_{\mathbb{P}}(f(n))$ if, for any positive constant $c > 0$ there exists a $n_0$ and a constant $C > 0$ (both of which possibly depend on $c$) such that for all $n \geq n_0$, $|\xi| \leq C f(n)$ with probability at least $1 - n^{-c}$; in addition, we say that a random variable $\xi \in \mathbb{R}$ is $o_{\mathbb{P}}(f(n))$ if for any positive constant $c > 0$ and any $\epsilon > 0$ there exists a $n_0$ such that for all $n \geq n_0$, $|\xi| \leq \epsilon f(n)$ with probability at least $1 - n^{-c}$. Similarly, when $\xi$ is a random vector in $\mathbb{R}^{d}$ or a random matrix in $\mathbb{R}^{d_1 \times d_2}$, $\xi = O_{\mathbb{P}}(f(n))$ or $\xi = o_{\mathbb{P}}(f(n))$ if $\|\xi\| = O_{\mathbb{P}}(f(n))$  or $\|\xi\| = o_{\mathbb{P}}(f(n))$, respectively. Here $\|x\|$ denotes the Euclidean norm of $x$ when $x$ is a vector and the spectral norm of $x$ when $x$ is a matrix. We write $\xi = \zeta + O_{\mathbb{P}}(f(n))$ or $\xi = \zeta + o_{\mathbb{P}}(f(n))$ if $\xi - \zeta = O_{\mathbb{P}}(f(n))$ or $\xi - \zeta = o_{\mathbb{P}}(f(n))$, respectively. 
\end{remark}

\begin{theorem}
  \label{THM:ASE}
Let $(\mathbf{X}_n, \mathbf{A}_n) \sim \mathrm{RDPG}(F)$ with sparsity factor $\rho_n$. Then there
exists a $d \times d$ orthogonal matrix $\mathbf{W}_n$ and a $n \times d$ matrix $\mathbf{R}_n$ such that
\begin{equation}
  \label{eq:6}
  \hat{\mathbf{X}}_n \mathbf{W}_n
 -  \rho_{n}^{1/2} \mathbf{X}_n = \rho_n^{-1/2} (\mathbf{A}_n - \mathbf{P}_n) \mathbf{X}_n (\mathbf{X}_n^{\top} \mathbf{X}_n)^{-1} + \mathbf{R}_n.
\end{equation}
Furthermore, $\|\mathbf{R}\| = O_{\mathbb{P}}((n \rho_n)^{-1/2})$.
Let $\mu_{F} =
\mathbb{E}[X_1]$ and $\Delta = \mathbb{E}[X_1 X_1^{\top}]$. If $\rho_n
= 1$ for all $n$, then there exists a sequence of orthogonal matrices $\mathbf{W}_n$ such that
  \begin{equation}
    \label{eq:5}
  \|\hX_n \mathbf{W}_n - \mathbf{X}_n \|_F^{2}
  \overset{\mathrm{a.s.}}{\longrightarrow} \mathrm{tr} \,\, \Delta^{-1} \Bigl(\mathbb{E}[X_1 X_1^{\top}
  (X_1^{\top} 
\mu_{F} -  X_1^{\top} \Delta X_1)] \Bigr)
\Delta^{-1}.
\end{equation}
If, however, $\rho_n \rightarrow 0$ and $n \rho_n =
\omega(\log^{4}{n})$, then
\begin{equation}
\label{eq:rho_n_decreasing}
   \|\hX_n \mathbf{W}_n - \rho_n^{1/2} \mathbf{X} \|_F^{2}
  \overset{\mathrm{a.s.}}{\longrightarrow} \mathrm{tr} \,\, \Delta^{-1} \Bigl(\mathbb{E}[X_1 X_1^{\top}
  (X_1^{\top} \mu_{F})] \Bigr)
\Delta^{-1}.
\end{equation}
\end{theorem}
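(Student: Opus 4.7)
The plan is to prove \eqref{eq:6} via a matrix-perturbation argument that linearizes $\hat{\mathbf{X}}_n\mathbf{W}_n - \rho_n^{1/2}\mathbf{X}_n$ to first order in $\mathbf{A}_n - \mathbf{P}_n$, and then to apply the strong law of large numbers (SLLN) to the explicit linearization to obtain \eqref{eq:5} and \eqref{eq:rho_n_decreasing}. Write $\mathbf{P}_n = \rho_n\mathbf{X}_n\mathbf{X}_n^\top$ with reduced eigendecomposition $\mathbf{P}_n = \mathbf{U}_\mathbf{P}\mathbf{S}_\mathbf{P}\mathbf{U}_\mathbf{P}^\top$; this is rank $d$ almost surely for $n$ large, since $n^{-1}\mathbf{X}_n^\top\mathbf{X}_n \to \Delta$ almost surely by the SLLN and $\Delta$ is nonsingular. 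There is an orthogonal $\mathbf{Q}_n$ with $\rho_n^{1/2}\mathbf{X}_n = \mathbf{U}_\mathbf{P}\mathbf{S}_\mathbf{P}^{1/2}\mathbf{Q}_n$; multiplying on the left by $\mathbf{X}_n^\top$ and rearranging yields the key algebraic identity $\mathbf{U}_\mathbf{P}\mathbf{S}_\mathbf{P}^{-1/2}\mathbf{Q}_n = \rho_n^{-1/2}\mathbf{X}_n(\mathbf{X}_n^\top\mathbf{X}_n)^{-1}$. From the concentration results cited in Remark~\ref{rem:sparsity}, $\|\mathbf{A}_n - \mathbf{P}_n\| = O_{\mathbb{P}}(\sqrt{n\rho_n})$, while $\sigma_d(\mathbf{P}_n) = \Theta(n\rho_n)$ almost surely. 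Davis--Kahan then gives $\|\mathbf{U}_\mathbf{A}\mathbf{W}^* - \mathbf{U}_\mathbf{P}\| = O_{\mathbb{P}}((n\rho_n)^{-1/2})$ and Weyl gives $\|\mathbf{S}_\mathbf{A} - (\mathbf{W}^*)^\top\mathbf{S}_\mathbf{P}\mathbf{W}^*\| = O_{\mathbb{P}}(\sqrt{n\rho_n})$, where $\mathbf{W}^* = \mathbf{W}_1\mathbf{W}_2^\top$ is the Procrustes rotation from the SVD $\mathbf{U}_\mathbf{A}^\top\mathbf{U}_\mathbf{P} = \mathbf{W}_1\mathbf{D}\mathbf{W}_2^\top$.

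I set $\mathbf{W}_n = \mathbf{W}^*\mathbf{Q}_n$ and use $\hat{\mathbf{X}}_n = \mathbf{A}_n\mathbf{U}_\mathbf{A}\mathbf{S}_\mathbf{A}^{-1/2}$ (absorbing signs of $\mathbf{S}_\mathbf{A}$ into $\mathbf{U}_\mathbf{A}$). Substituting $\mathbf{A}_n = \mathbf{P}_n + (\mathbf{A}_n - \mathbf{P}_n)$, the ``signal'' piece $\mathbf{P}_n\mathbf{U}_\mathbf{A}\mathbf{S}_\mathbf{A}^{-1/2}\mathbf{W}^*\mathbf{Q}_n$ is reduced, using $\mathbf{P}_n = \mathbf{U}_\mathbf{P}\mathbf{S}_\mathbf{P}\mathbf{U}_\mathbf{P}^\top$ together with the bounds above, to $\mathbf{U}_\mathbf{P}\mathbf{S}_\mathbf{P}^{1/2}\mathbf{Q}_n = \rho_n^{1/2}\mathbf{X}_n$ up to an $O_{\mathbb{P}}((n\rho_n)^{-1/2})$ spectral-norm error. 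The ``noise'' piece $(\mathbf{A}_n - \mathbf{P}_n)\mathbf{U}_\mathbf{A}\mathbf{S}_\mathbf{A}^{-1/2}\mathbf{W}^*\mathbf{Q}_n$ can be replaced by $(\mathbf{A}_n - \mathbf{P}_n)\mathbf{U}_\mathbf{P}\mathbf{S}_\mathbf{P}^{-1/2}\mathbf{Q}_n$ at the same cost, since $\|\mathbf{A}_n - \mathbf{P}_n\| = O_{\mathbb{P}}(\sqrt{n\rho_n})$ absorbs $\|\mathbf{U}_\mathbf{A}\mathbf{S}_\mathbf{A}^{-1/2}\mathbf{W}^* - \mathbf{U}_\mathbf{P}\mathbf{S}_\mathbf{P}^{-1/2}\| = O_{\mathbb{P}}((n\rho_n)^{-1})$ obtained by routine perturbation arithmetic. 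The key algebraic identity then turns the leading term into $\rho_n^{-1/2}(\mathbf{A}_n - \mathbf{P}_n)\mathbf{X}_n(\mathbf{X}_n^\top\mathbf{X}_n)^{-1}$, yielding \eqref{eq:6}.

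For the Frobenius limits, $\mathbf{R}_n$ has $d$ columns and spectral norm $O_{\mathbb{P}}((n\rho_n)^{-1/2})$, so $\|\mathbf{R}_n\|_F^2 \le d\|\mathbf{R}_n\|^2 = O_{\mathbb{P}}((n\rho_n)^{-1}) = o_{\mathbb{P}}(1)$; the leading linear term has Frobenius norm $O_{\mathbb{P}}(1)$ (as the expectation computation below confirms), so Cauchy--Schwarz makes the cross term $o_{\mathbb{P}}(1)$, giving
\begin{equation*}
\|\hat{\mathbf{X}}_n\mathbf{W}_n - \rho_n^{1/2}\mathbf{X}_n\|_F^2 = \rho_n^{-1}\,\mathrm{tr}\bigl[(\mathbf{X}_n^\top\mathbf{X}_n)^{-1}\mathbf{X}_n^\top(\mathbf{A}_n - \mathbf{P}_n)^2\mathbf{X}_n(\mathbf{X}_n^\top\mathbf{X}_n)^{-1}\bigr] + o_{\mathbb{P}}(1).
\end{equation*}
Conditioning on $\mathbf{X}_n$, independence of the above-diagonal entries of $\mathbf{A}_n - \mathbf{P}_n$ gives $\mathbb{E}[((\mathbf{A}_n - \mathbf{P}_n)^2)_{ii} \mid \mathbf{X}_n] = \sum_{j \neq i}\rho_n X_i^\top X_j(1 - \rho_n X_i^\top X_j)$, off-diagonal entries have mean zero, and the variance of the resulting quadratic form is routinely bounded. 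Substituting $(\mathbf{X}_n^\top\mathbf{X}_n)^{-1} = n^{-1}\Delta^{-1} + o(n^{-1})$ almost surely and applying the SLLN to $n^{-1}\sum_i X_i^\top\Delta^{-2}X_i(X_i^\top\mu_F - \rho_n X_i^\top\Delta X_i)$ yields \eqref{eq:5} when $\rho_n = 1$; when $\rho_n \to 0$ the second term inside the parentheses vanishes in the limit, yielding \eqref{eq:rho_n_decreasing}.

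The hardest step is obtaining the spectral-norm bound $\|\mathbf{R}_n\| = O_{\mathbb{P}}((n\rho_n)^{-1/2})$. The noise-piece substitution gains a factor $\mathbf{S}_\mathbf{P}^{-1/2}$ of order $(n\rho_n)^{-1/2}$ to absorb $\|\mathbf{A}_n - \mathbf{P}_n\|$, but the signal-piece reduction is delicate: the triangle inequality on $\mathbf{P}_n\mathbf{U}_\mathbf{A}\mathbf{S}_\mathbf{A}^{-1/2}\mathbf{W}^*\mathbf{Q}_n - \rho_n^{1/2}\mathbf{X}_n$ gives only $O_{\mathbb{P}}(1)$. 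One must exploit the fact that $\mathbf{U}_\mathbf{A}$ is the eigenvector matrix of $\mathbf{A}_n$ itself, which yields $\mathbf{U}_\mathbf{P}^\top\mathbf{U}_\mathbf{A}\mathbf{W}^* = \mathbf{W}_2\mathbf{D}\mathbf{W}_2^\top = \mathbf{I} + O_{\mathbb{P}}((n\rho_n)^{-1})$ (a \emph{second-order} correction, because the singular values $\mathbf{D}$ of $\mathbf{U}_\mathbf{A}^\top\mathbf{U}_\mathbf{P}$ are cosines of principal angles), and combine this with Weyl's bound on $\mathbf{S}_\mathbf{A} - (\mathbf{W}^*)^\top\mathbf{S}_\mathbf{P}\mathbf{W}^*$ to produce the necessary cancellation. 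The argument is essentially that of \cite{athreya2013limit}, and the only generalization here is the observation that the distinct-eigenvalue assumption on $\mathbb{E}[XX^\top]$ may be dropped by using the Procrustes rotation $\mathbf{W}^*$ in place of a diagonal sign alignment.
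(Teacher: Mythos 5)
Your proposal follows essentially the same route as the paper: the paper obtains \eqref{eq:6} by citing Theorem~A.5 of \cite{tang14:_semipar} and then rewriting $\mathbf{U}_{\mathbf{P}}\mathbf{S}_{\mathbf{P}}^{-1/2}$ as $\rho_n^{-1/2}\mathbf{X}_n(\mathbf{X}_n^{\top}\mathbf{X}_n)^{-1}$ via the same algebraic identity you use, and then proves \eqref{eq:5} and \eqref{eq:rho_n_decreasing} by concentrating the quadratic form $\|(\mathbf{A}_n-\mathbf{P}_n)\mathbf{X}_n(\mathbf{X}_n^{\top}\mathbf{X}_n)^{-1}\|_F^2$ around its conditional expectation and evaluating that expectation with the strong law, exactly as you do. Your Davis--Kahan/Weyl/Procrustes linearization of $\hat{\mathbf{X}}_n\mathbf{W}_n-\rho_n^{1/2}\mathbf{X}_n$, including the observation that $\mathbf{U}_{\mathbf{P}}^{\top}\mathbf{U}_{\mathbf{A}}\mathbf{W}^{*}=\mathbf{I}+O_{\mathbb{P}}((n\rho_n)^{-1})$ because the singular values are cosines of principal angles, is precisely the argument the paper carries out in detail for the Laplacian analogue (Proposition~\ref{prop:2}, Lemma~\ref{lem:2}), so you are in effect supplying the proof that the paper outsources to the reference.

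One step to tighten: you justify the concentration of the quadratic form by saying its variance ``is routinely bounded.'' A variance bound plus Chebyshev gives only convergence in probability, and the resulting tail bounds of order $(n\rho_n)^{-1}\epsilon^{-2}$ are not summable when $\rho_n\to 0$, so Borel--Cantelli does not deliver the almost-sure convergence asserted in \eqref{eq:5} and \eqref{eq:rho_n_decreasing}. The paper instead invokes an exponential concentration inequality for functions of the independent entries $\{a_{ij}-p_{ij}\}_{i<j}$ (Lemma~A.5 of \cite{tang14:_semipar}; the analogous step for Theorem~\ref{THM:LSE} uses the self-bounding inequality of \cite{boucheron2003}), which yields tails $n^{-c}$ for arbitrary $c$ and hence summability. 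The fix is standard and does not change your architecture, but as written this step would not close the almost-sure claim.
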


\begin{theorem}
\label{THM:NORMALITY-ASE}
  Assume the setting and notations of Theorem~\ref{THM:ASE}. Denote by $\hat{X}_i$ the $i$-th row of
 $\hat{\mathbf{X}}_n$. Let
  $\Phi(z, \Sigma)$ denote the cumulative distribution function for
  the multivariate normal, with mean zero and covariance matrix
  $\Sigma$, evaluated at $z$.
Also denote by $\Sigma(x)$ the matrix
  \begin{equation*}
    \Sigma(x) = \Delta^{-1} \mathbb{E}[X_1 X_1^{\top} (x^{\top} X_1 - x^{\top} X_1 X_1^{\top} x)]
     \Delta^{-1}
  \end{equation*}
If $\rho_n = 1$ for all $n$, then there exists a sequence of 
  orthogonal matrices 
  $\mathbf{W}_n$ such that for each fixed index $i$ and any $z \in \mathbb{R}^{d}$,
  \begin{equation}
    \label{eq:Xhat1}
    \Pr\Bigl\{ \sqrt{n} (\mathbf{W}_n \hat{X}_i - X_i) \leq z\Bigr\}
\overset{\mathrm{d}}{\longrightarrow}  \int \Phi(z, \Sigma(x)) dF(x)
  \end{equation}
  That is, the sequence 
  $\sqrt{n}( \mathbf{W}_n \hat{X}_i - X_i)$ converges in distribution to
  a mixture of multivariate normals.  We denote this mixture by
  $\mathcal{N}(0, \tilde{\Sigma}(X_i))$. If, however, $\rho_n
  \rightarrow 0$ and $n \rho_n = \omega(\log^{4}{n})$ then there
  exists a sequence of orthogonal matrices $\mathbf{W}_n$ such that
  \begin{equation}
    \label{eq:Xhat2-clt-ase}
     \Pr\Bigl\{ \sqrt{n} (\mathbf{W}_n \hat{X}_i - \rho_n^{1/2} X_i) \leq z\Bigr\}
\overset{\mathrm{d}}{\longrightarrow}  \int \Phi(z, \Sigma_{o(1)}(x)) dF(x)
  \end{equation}
  where $\Sigma_{o(1)}(x) = \Delta^{-1} \mathbb{E}[X_1 X_1^{\top} x^{\top} X_1] \Delta^{-1}$.
\end{theorem}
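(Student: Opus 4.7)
The plan is to read the identity~\eqref{eq:6} at the level of its $i$-th row and apply a conditional multivariate Lindeberg--Feller central limit theorem. After replacing $\mathbf{W}_n$ by its transpose (which is again orthogonal), extracting the $i$-th row of~\eqref{eq:6} yields
\[
\mathbf{W}_n \hat{X}_i - \rho_n^{1/2} X_i
\;=\; \rho_n^{-1/2}\,(\mathbf{X}_n^{\top}\mathbf{X}_n)^{-1}
\sum_{j \neq i}(A_{ij} - \rho_n X_i^{\top} X_j)\, X_j
\;+\; r_i,
\]
where $r_i^{\top}$ is the $i$-th row of $\mathbf{R}_n$. Using the strong law of large numbers, I would replace $(\mathbf{X}_n^{\top}\mathbf{X}_n)^{-1}$ by $n^{-1}\Delta^{-1}$; a short computation shows that this substitution contributes only an $o_{\mathbb{P}}(n^{-1/2})$ error after scaling by $\sqrt{n}$. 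Multiplying through by $\sqrt{n}$ therefore gives
\[
\sqrt{n}\bigl(\mathbf{W}_n \hat{X}_i - \rho_n^{1/2} X_i\bigr)
\;=\; \frac{1}{\sqrt{n\rho_n}}\,\Delta^{-1}\!\!\sum_{j\neq i}(A_{ij} - \rho_n X_i^{\top} X_j)\, X_j
\;+\; \sqrt{n}\, r_i \;+\; o_{\mathbb{P}}(1).
\]

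Next I would condition on $X_i$. Given $X_i$, the summands
\[
\zeta_{n,j} \;=\; \frac{1}{\sqrt{n\rho_n}}\,(A_{ij} - \rho_n X_i^{\top} X_j)\,\Delta^{-1} X_j,\qquad j \neq i,
\]
are independent, centred, and almost surely bounded by a fixed multiple of $(n\rho_n)^{-1/2}$, since $\mathcal{X}$ is bounded and $|A_{ij}-P_{ij}|\leq 1$. The Lindeberg condition is therefore automatic because $n\rho_n = \omega(\log^{4} n) \to \infty$. A direct computation of the conditional covariance gives
\[
\sum_{j\neq i}\mathbb{E}\bigl[\zeta_{n,j}\zeta_{n,j}^{\top}\mid X_i\bigr]
\;\longrightarrow\; \Delta^{-1}\,\mathbb{E}\bigl[\bigl(X_i^{\top} X_1 - \rho_n (X_i^{\top} X_1)^{2}\bigr)\, X_1 X_1^{\top}\bigr]\,\Delta^{-1},
\]
which coincides with $\Sigma(X_i)$ when $\rho_n = 1$ and with $\Sigma_{o(1)}(X_i)$ when $\rho_n \to 0$ (the $O(\rho_n)$ term dropping out in the sparse regime). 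Marginalising over $X_i \sim F$ then produces the mixture of multivariate normals asserted in~\eqref{eq:Xhat1} and~\eqref{eq:Xhat2-clt-ase}.

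The main obstacle will be controlling the scaled residual $\sqrt{n}\, r_i$. Theorem~\ref{THM:ASE} supplies only the operator-norm estimate $\|\mathbf{R}_n\| = O_{\mathbb{P}}((n\rho_n)^{-1/2})$, which yields $\sqrt{n}\,\|r_i\| = O_{\mathbb{P}}(\rho_n^{-1/2})$ and is merely $O_{\mathbb{P}}(1)$ in the dense regime---too weak to be absorbed into the CLT. A sharper row-wise estimate is required, which I would obtain by revisiting the derivation of~\eqref{eq:6}: decompose $\mathbf{R}_n$ explicitly into the contributions coming from $(\mathbf{A}-\mathbf{P})^{2}$, from second-order eigenprojector errors, and from the fluctuations of $n^{-1}\mathbf{X}_n^{\top}\mathbf{X}_n$ about $\Delta$, then bound the Euclidean norm of the $i$-th row of each piece by combining matrix Bernstein inequalities with a leave-one-out argument that exploits the conditional independence of the $i$-th row and column of $\mathbf{A}$ given $\mathbf{X}_n$. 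The target is $\sqrt{n}\,\|r_i\| = o_{\mathbb{P}}(1)$; once this bound is secured, Slutsky's theorem combines the conditional CLT for the main term with the vanishing remainder to deliver the stated convergence in distribution.
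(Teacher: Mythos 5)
Your treatment of the main term is exactly the paper's argument: extract the $i$-th row of Eq.~\eqref{eq:6}, replace $(\mathbf{X}_n^{\top}\mathbf{X}_n)^{-1}$ by $n^{-1}\Delta^{-1}$ via the strong law and Slutsky, apply the conditional multivariate CLT given $X_i=x$ (the Lindeberg condition being automatic from boundedness of $\mathcal{X}$ and $n\rho_n\to\infty$), identify the limiting conditional covariance as $\Delta^{-1}\mathbb{E}[X_1X_1^{\top}(x^{\top}X_1-\rho_n x^{\top}X_1X_1^{\top}x)]\Delta^{-1}$, and marginalize over $F$. That part is correct and needs no change.

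The one genuine gap is the residual, which you correctly flag as the obstacle but do not close: you propose to re-derive Eq.~\eqref{eq:6} with an explicit decomposition of $\mathbf{R}_n$ and control each piece's $i$-th row by matrix Bernstein plus a leave-one-out argument. That route is far heavier than necessary, and as written your proof is incomplete without it. The paper closes this in one line by \emph{exchangeability}: since the $X_j$ are i.i.d.\ and the construction of $\hat{\mathbf{X}}_n$ is equivariant under simultaneous permutation of the rows and columns of $\mathbf{A}_n$, the rows $\{r_j\}_{j=1}^{n}$ of $\mathbf{R}_n$ are exchangeable, so for any fixed $i$,
\begin{equation*}
n\,\mathbb{E}\bigl[\|r_i\|^{2}\bigr] \;=\; n\cdot\frac{1}{n}\,\mathbb{E}\Bigl[\sum_{j}\|r_j\|^{2}\Bigr] \;=\; \mathbb{E}\bigl[\|\mathbf{R}_n\|_{F}^{2}\bigr] \;=\; O\bigl((n\rho_n)^{-1}\bigr),
\end{equation*}
where the last bound follows from $\|\mathbf{R}_n\|_{F}=O_{\mathbb{P}}((n\rho_n)^{-1/2})$ together with the trivial almost-sure bound $\|\mathbf{R}_n\|_{F}\leq Cn$ on the exceptional event of probability $O(n^{-c})$ (this is exactly how the paper argues in the Laplacian analogue, Section~\ref{sec:proof-LSE-1}). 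Hence $\sqrt{n}\,r_i\to 0$ in $L^{2}$ and therefore in probability, and Slutsky finishes the proof. The point you missed is that a bound on a \emph{typical} row suffices because $i$ is a fixed index and the rows are identically distributed; no uniform or row-specific concentration is needed.
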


An important corollary of Theorem~\ref{THM:NORMALITY-ASE} is the following result for when $F$ is a mixture of $K$ point masses, i.e., $(\mathbf{X}, \mathbf{A}) \sim \mathrm{RDPG}(F)$ is a $K$-block stochastic blockmodel graph. Then for any fixed index $i$, the event that $X_i$ is assigned to block $k \in \{1,2,\dots,K\}$ has non-zero probabilty and hence one can conditioned on the block assignment of $X_i$ to show that the conditional distribution of $\sqrt{n}(\mathbf{W}_n \hat{X}_i - X_i)$ converges to a multivariate normal. This is in contrast to the unconditional distribution being a mixture of multivariate normals as in Eq.~\eqref{eq:Xhat1} and Eq.~\eqref{eq:Xhat2-clt-ase}.

\begin{corollary}
\label{cor:ase_normality_sbm}
Assume the setting and notations of Theorem~\ref{THM:ASE} and let $$F = \sum_{k=1}^{K} \pi_{k} \delta_{\nu_k}, \quad \pi_1, \cdots, \pi_K > 0, \sum_{k} \pi_k = 1$$
be a mixture of $K$ point masses in $\mathbb{R}^{d}$ where $\delta_{\nu_k}$ is the Dirac delta measure at $\nu_k$. 
Then if $\rho_n \equiv 1$, there exists a sequence of orthogonal matrices $\mathbf{W}_n$ such that for any fixed index $i$,
\begin{equation}
\mathbb{P}\Bigl\{\sqrt{n}(\mathbf{W}_n \hat{X}_i - X_i) \leq z  \mid X_i = \nu_k \Bigr\} \overset{\mathrm{d}}{\longrightarrow} \mathcal{N}(0, \Sigma_k)
\end{equation}
where $\Sigma_k = \Sigma(\nu_k)$ is as defined in Eq.~\eqref{eq:Xhat1}. If $\rho_n \rightarrow 0$ and $n \rho_n = \omega(\log^{4}(n))$ as $n \rightarrow \infty$, then the sequence of orthogonal matrices $\mathbf{W}_n$ satisfies
\begin{equation}
\mathbb{P}\Bigl\{\sqrt{n}(\mathbf{W}_n \hat{X}_i - \rho_n^{1/2} X_i) \leq z  \mid X_i = \nu_k \Bigr\} \overset{\mathrm{d}}{\longrightarrow} \mathcal{N}(0, \Sigma_{o(1),k})
\end{equation}
where $\Sigma_{o(1),k} = \Sigma_{o(1)}(\nu_k)$ is as defined in Eq.~\eqref{eq:Xhat2-clt-ase}. 
\end{corollary}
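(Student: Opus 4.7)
The plan is to establish the conditional CLT by revisiting the linearization from Theorem~\ref{THM:ASE} row-by-row and carrying out the multivariate CLT after conditioning on $\{X_i=\nu_k\}$, instead of integrating over $F$ as in the proof of Theorem~\ref{THM:NORMALITY-ASE}.

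Extracting the $i$-th row of Eq.~\eqref{eq:6} and replacing $(n^{-1}\mathbf{X}_n^\top\mathbf{X}_n)^{-1}$ by $\Delta^{-1}$ (the error being $o_{\mathbb{P}}(1)$ by the SLLN), I obtain
\[
\sqrt{n}\bigl(\mathbf{W}_n\hat X_i-\rho_n^{1/2}X_i\bigr)
=T_n+\sqrt{n}\,(\mathbf{R}_n)_{i,:}+o_{\mathbb{P}}(1),\qquad
T_n:=\frac{1}{\sqrt{n\rho_n}}\sum_{j\neq i}\bigl(A_{ij}-\rho_n X_i^\top X_j\bigr)\Delta^{-1}X_j.
\]
Conditioning on $\{X_i=\nu_k\}$, the $(X_j)_{j\neq i}$ remain iid from $F$, and $A_{ij}\mid X_j,X_i=\nu_k\sim\mathrm{Bernoulli}(\rho_n\nu_k^\top X_j)$, so the summands in $T_n$ are conditionally iid with mean zero and common covariance
\[
\Delta^{-1}\,\mathbb{E}\!\left[XX^\top\bigl(\nu_k^\top X-\rho_n(\nu_k^\top X)^2\bigr)\right]\Delta^{-1},
\]
which equals $\Sigma(\nu_k)=\Sigma_k$ when $\rho_n\equiv 1$ and converges to $\Sigma_{o(1),k}=\Delta^{-1}\mathbb{E}[XX^\top\nu_k^\top X]\Delta^{-1}$ when $\rho_n\to 0$. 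Because the summands are uniformly bounded (Bernoulli increments times bounded $\Delta^{-1}X_j$), the multivariate Lindeberg--Feller CLT yields $T_n\mid X_i=\nu_k\overset{\mathrm{d}}{\to}\mathcal{N}(0,\Sigma_k)$ (respectively $\mathcal{N}(0,\Sigma_{o(1),k})$).

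The main obstacle is controlling $\sqrt{n}\,\|(\mathbf{R}_n)_{i,:}\|$ after conditioning. The spectral bound $\|\mathbf{R}_n\|=O_{\mathbb{P}}((n\rho_n)^{-1/2})$ from Theorem~\ref{THM:ASE} only gives $\sqrt{n}\,\|(\mathbf{R}_n)_{i,:}\|=O_{\mathbb{P}}(\rho_n^{-1/2})$, which is not $o_{\mathbb{P}}(1)$ in the dense regime. I would sharpen this by decomposing $\mathbf{R}_n$ into its constituent pieces from the proof of Theorem~\ref{THM:ASE} (the second-order eigenvector perturbation term and the Procrustes alignment remainder) and applying row-wise Bernstein/Hoeffding inequalities to sums of independent Bernoulli-driven increments weighted by the bounded latent positions; such row-level refinements are developed in \cite{athreya2013limit,tang14:_semipar} and yield $\|(\mathbf{R}_n)_{i,:}\|=o_{\mathbb{P}}(n^{-1/2})$ for any fixed $i$. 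Since $\mathbb{P}(X_i=\nu_k)=\pi_k>0$, conditioning on $\{X_i=\nu_k\}$ inflates probabilities by at most $\pi_k^{-1}$ and therefore preserves this $o_{\mathbb{P}}$ rate.

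Slutsky's theorem then completes the argument. A consistency check is that mixing the conditional limits against $(\pi_k)$ recovers the unconditional mixture in Eq.~\eqref{eq:Xhat1}, and the orthogonal sequence $\mathbf{W}_n$ realizing Theorem~\ref{THM:NORMALITY-ASE} can be reused verbatim because its construction depends only on the global spectrum of $\mathbf{A}_n$ and not on the conditioned index $i$.
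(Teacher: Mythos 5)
Your proposal is correct and follows essentially the same route as the paper: extract the $i$-th row of the linearization in Eq.~\eqref{eq:6}, condition on $\{X_i=\nu_k\}$ (an event of positive probability $\pi_k$, under which the $X_j$, $j\neq i$, remain i.i.d.\ from $F$), apply the multivariate CLT to the conditionally i.i.d.\ centered sum, and finish with Slutsky; your conditional covariance computation reproduces $\Sigma(\nu_k)$ and $\Sigma_{o(1)}(\nu_k)$ exactly. The one point of divergence is the remainder term. You rightly observe that the global bound $\|\mathbf{R}_n\|=O_{\mathbb{P}}((n\rho_n)^{-1/2})$ alone gives only $\sqrt{n}\,\|(\mathbf{R}_n)_{i,:}\|=O_{\mathbb{P}}(\rho_n^{-1/2})$, which does not vanish in the dense regime, and you propose to recover a row-wise rate by decomposing $\mathbf{R}_n$ into its constituent perturbation terms and re-running Bernstein-type bounds, deferring the details to the cited references. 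The paper closes this gap with a shorter, self-contained exchangeability argument (written out explicitly for the LSE analogue in Section~\ref{sec:proof-LSE-1}): the rows $r_1,\dots,r_n$ of $\mathbf{R}_n$ are exchangeable, so $n\,\mathbb{E}[\|r_i\|^2]=\mathbb{E}[\|\mathbf{R}_n\|_F^2]=O((n\rho_n)^{-1})\to 0$, whence $\sqrt{n}\,r_i\to 0$ in probability for each fixed $i$; as you note, conditioning on the positive-probability event $\{X_i=\nu_k\}$ inflates probabilities by at most $\pi_k^{-1}$ and so preserves this. Your route would work but leaves its hardest step as an appeal to the literature, whereas the exchangeability trick settles it in two lines; otherwise the two arguments coincide.
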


\section{Limit results for Laplacian spectral embedding}
\label{sec:limit-result-Laplacian}
We now present the main technical results of this paper, namely analogues of the limit results in Section~\ref{sec:limit-result-adjacency} for the Laplacian spectral embedding.
\begin{theorem}
  \label{THM:LSE}
Let $(\mathbf{A}_n, \mathbf{X}_n) \sim \mathrm{RDPG}(F)$ for $n \geq 1$ be a sequence of random dot product graphs with sparsity factors $(\rho_n)_{n \geq 1}$. Denote by $\mathbf{D}_n$ and $\mathbf{T}_n$ the $n \times n$ diagonal matrices $\mathrm{diag}(\mathbf{A}_n \bm{1})$ and $\mathrm{diag}(\rho_n \mathbf{X}_n \mathbf{X}_n^{\top} \bm{1})$, respectively, i.e., the diagonal entries of $\mathbf{D}_n$ are the vertex degrees of $\mathbf{A}_n$ and the diagonal entries of $\mathbf{T}_n$ are the {\em expected} vertex degrees. Let $\tilde{\mathbf{X}}_n = \rho_n^{1/2} \mathbf{T}_n^{-1/2} \mathbf{X}_n = \mathrm{diag}(\mathbf{X}_n \mathbf{X}_n^{\top} \bm{1})^{-1/2} \mathbf{X}_n$.  
Then for any $n$, there
exists a $d \times d$ orthogonal matrix $\mathbf{W}_n$ and a $n \times d$ matrix $\mathbf{R}_n$ such that $\zeta_n := (\breve{\mathbf{X}}_n \mathbf{W}_n - \tilde{\mathbf{X}}_n )$ satisfies
\begin{equation}
\label{eq:LSE-main1}
 \zeta_n = \mathbf{T}_n^{-1/2} (\mathbf{A}_n - \mathbf{P}_n) \mathbf{T}_n^{-1/2}
  \tilde{\mathbf{X}}_n (\tilde{\mathbf{X}}_n^{\top} \tilde{\mathbf{X}}_n)^{-1} + \tfrac{1}{2}(\mathbf{I} - \mathbf{D}_n \mathbf{T}_n^{-1}) \tilde{\mathbf{X}}_n + \mathbf{R}_n.
\end{equation}
Furthermore, $\|\mathbf{R}_n\|_{F} = O_{\mathbb{P}}((n \rho_n)^{-1})$, i.e., $\|\mathbf{R}_n\|/\|\zeta_n\| \overset{\mathrm{a.s.}}{\longrightarrow} 0$ as $n \rightarrow \infty$. Define the following quantities
\begin{gather}
  \label{eq:mu_defn1}
  \mu = \mathbb{E}[X_1]; \quad \tilde{\mu} = \mathbb{E}\Bigl[
  \frac{X_1}{X_1^{\top} \mu} \Bigr]; \quad \tilde{\Delta} =
  \mathbb{E}\Bigl[ \frac{ X_1 X_1^{\top}}{X_1^{\top} \mu} \Bigr];\quad \text{and} \\ 
  \label{eq:mu_defn2}
  g(X_1,X_2) = \Bigl(\frac{\tilde{\Delta}^{-1} X_1}{X_1^{\top} \mu} - \frac{X_2}{2 X_2^{\top} \mu}\Bigr) \Bigl(\frac{\tilde{\Delta}^{-1} X_1}{X_1^{\top} \mu} - \frac{X_2}{2 X_2^{\top} \mu}\Bigr)^{\top}.
\end{gather}
 If $\rho_n
\equiv 1$ then the sequence of orthogonal matrices $(\mathbf{W}_n)_{n \geq 1}$ satisfies
  \begin{equation}
    \label{eq:4}
  n \|\breve{\mathbf{X}}_n \mathbf{W}_n - \tilde{\mathbf{X}}_n \|_F^{2}
  \overset{\mathrm{a.s.}}{\rightarrow} \mathrm{tr} \,\,
  \mathbb{E}\Bigl[g(X_1, X_2) \frac{X_1^{\top} X_2 - X_1^{\top} X_2 X_2^{\top} X_1}{X_2^{\top} \mu} 
  \Bigr].
  \end{equation}
  where the expectation in Eq.~\eqref{eq:4} is taken with respect to $X_1$ and $X_2$ being i.i.d drawn according to $F$. 
  Equivalently, 
  \begin{equation*}
  \begin{split}
    n \|\breve{\mathbf{X}}_n \mathbf{W}_n - \tilde{\mathbf{X}}_n \|_F^{2}
  & \overset{\mathrm{a.s.}}{\longrightarrow} \mathrm{tr} \,\,
  \mathbb{E}\Bigl[ \frac{ \tilde{\Delta}^{-2} X_1 X_1^{\top} (X_1^{\top} \tilde{\mu} - X_1^{\top} \tilde{\Delta} X_1)}{(X_1^{\top} \mu)^2} - \frac{3 X_1 X_1^{\top}}{4 (X_1^{\top} \mu)^2} \Bigr] \\ & + \mathrm{tr} \,\, \mathbb{E}\Bigl[\frac{\tilde{\Delta}^{-1} X_1 X_1^{\top} X_2 X_2^{\top} (X_1^{\top} X_2)}{X_1^{\top} \mu (X_2^{\top} \mu)^2} - \frac{X_1 X_1^{\top} (X_1^{\top} \Delta X_1)}{4 (X_1^{\top} \mu)^3} \Bigr].
  \end{split}
\end{equation*}
If $\rho_n \rightarrow 0$ and $n \rho_n =
\omega(\log^{4}{n})$ then the sequence $(\mathbf{W}_n)_{n \geq 1}$ satisfies
\begin{equation}
  \label{eq:14}
      n \rho_n \|\breve{\mathbf{X}} \mathbf{W}_n - \tilde{\mathbf{X}}_n \|_F^{2}
   \overset{\mathrm{a.s.}}{\longrightarrow} \mathrm{tr} \,\, \mathbb{E}\Bigl[\frac{ \tilde{\Delta}^{-2} X_1 X_1^{\top} (X_1^{\top} \tilde{\mu})}{(X_1^{\top} \mu)^2} - \frac{3 X_1 X_1^{\top}}{4 (X_1^{\top} \mu)^2} \Bigr].
\end{equation}

\end{theorem}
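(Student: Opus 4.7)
My plan mirrors the strategy for Theorem~\ref{THM:ASE} but with an additional Taylor expansion to handle the Laplacian normalization. I start from the two exact identities $\breve{\mathbf{X}} = \mathcal{L}(\mathbf{A}) \breve{\mathbf{X}} (\breve{\mathbf{X}}^\top \breve{\mathbf{X}})^{-1}$ and $\tilde{\mathbf{X}} = \mathcal{L}(\mathbf{P}) \tilde{\mathbf{X}} (\tilde{\mathbf{X}}^\top \tilde{\mathbf{X}})^{-1}$ — the first is a rescaling of the top-$d$ eigenequation for $\mathcal{L}(\mathbf{A})$, the second follows from the factorization $\mathcal{L}(\mathbf{P}) = \tilde{\mathbf{X}} \tilde{\mathbf{X}}^\top$. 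Choosing $\mathbf{W}_n$ to be the orthogonal Procrustes aligner between $\breve{\mathbf{X}}$ and $\tilde{\mathbf{X}}$ (well-defined via Davis--Kahan, since the top-$d$ eigengap of $\mathcal{L}(\mathbf{P})$ is bounded below by the assumption that $\mathbb{E}[X X^\top]$ is full-rank), writing $\mathbf{Y} = \breve{\mathbf{X}} \mathbf{W}_n$, and subtracting the two identities yields
\[
\zeta_n = \bigl(\mathcal{L}(\mathbf{A}) - \mathcal{L}(\mathbf{P})\bigr) \tilde{\mathbf{X}} (\tilde{\mathbf{X}}^\top \tilde{\mathbf{X}})^{-1} + \mathcal{L}(\mathbf{A}) \zeta_n (\mathbf{Y}^\top \mathbf{Y})^{-1} + \mathcal{L}(\mathbf{A}) \tilde{\mathbf{X}} \bigl[(\mathbf{Y}^\top \mathbf{Y})^{-1} - (\tilde{\mathbf{X}}^\top \tilde{\mathbf{X}})^{-1}\bigr].
\]
By the spectral concentration $\|\mathcal{L}(\mathbf{A}) - \mathcal{L}(\mathbf{P})\| = O_{\mathbb{P}}((n\rho_n)^{-1/2})$ from \cite{lu13:_spect} and Davis--Kahan, $\|\zeta_n\| = O_{\mathbb{P}}((n\rho_n)^{-1/2})$, so the two ``pseudo-Newton'' summands are $O_{\mathbb{P}}((n\rho_n)^{-1})$ and belong to $\mathbf{R}_n$.

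Next I would linearize the first summand. Taylor-expanding $\mathbf{D}^{-1/2} = \mathbf{T}^{-1/2} - \tfrac{1}{2}\mathbf{T}^{-3/2}(\mathbf{D}-\mathbf{T}) + O(\mathbf{T}^{-5/2}(\mathbf{D}-\mathbf{T})^2)$ gives
\[
\mathcal{L}(\mathbf{A}) - \mathcal{L}(\mathbf{P}) = \mathbf{T}^{-1/2}(\mathbf{A}-\mathbf{P})\mathbf{T}^{-1/2} - \tfrac{1}{2}\mathbf{T}^{-3/2}(\mathbf{D}-\mathbf{T})\mathbf{A}\mathbf{T}^{-1/2} - \tfrac{1}{2}\mathbf{T}^{-1/2}\mathbf{A}(\mathbf{D}-\mathbf{T})\mathbf{T}^{-3/2} + \text{h.o.t.}
\]
Right-multiplying by $\tilde{\mathbf{X}}(\tilde{\mathbf{X}}^\top\tilde{\mathbf{X}})^{-1}$, replacing $\mathbf{A}\mapsto\mathbf{P}$ in the cross terms (the residual is $O_{\mathbb{P}}((n\rho_n)^{-1})$), and using $\mathbf{T}^{-1/2}\mathbf{P}\mathbf{T}^{-1/2} = \tilde{\mathbf{X}}\tilde{\mathbf{X}}^\top$, the first cross term collapses to $\tfrac{1}{2}(\mathbf{I}-\mathbf{D}\mathbf{T}^{-1})\tilde{\mathbf{X}}$; the second cross term reduces to $-\tfrac{1}{2}\tilde{\mathbf{X}}[\tilde{\mathbf{X}}^\top(\mathbf{D}-\mathbf{T})\mathbf{T}^{-1}\tilde{\mathbf{X}}](\tilde{\mathbf{X}}^\top\tilde{\mathbf{X}})^{-1}$, which lies in the column space of $\tilde{\mathbf{X}}$ and, by a conditional second-moment calculation on $\{D_i-T_i\}$, has Frobenius norm $O_{\mathbb{P}}(n^{-1}\rho_n^{-1/2}) = O_{\mathbb{P}}((n\rho_n)^{-1})$, hence absorbable into $\mathbf{R}_n$. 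Combined with the Bernstein coordinate bound $\max_i |D_i-T_i| = O_{\mathbb{P}}(\sqrt{n\rho_n\log n})$ controlling the Taylor tail, this establishes \eqref{eq:LSE-main1}.

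For the Frobenius-norm limits, working row by row and substituting $T_i = n\rho_n X_i^\top\mu\,(1+o_{\mathbb{P}}(1))$ together with $(\tilde{\mathbf{X}}^\top\tilde{\mathbf{X}})^{-1} \to \tilde{\Delta}^{-1}$ reduces the $i$-th row of the leading term to
\[
[\zeta_n]_i^\top \approx \frac{1}{n^{3/2}\rho_n\sqrt{X_i^\top\mu}} \sum_{j\neq i} (A_{ij}-P_{ij}) \Bigl[\frac{\tilde{\Delta}^{-1}X_j}{X_j^\top\mu} - \frac{X_i}{2 X_i^\top\mu}\Bigr].
\]
Conditional on $\mathbf{X}$ the $\{A_{ij}-P_{ij}\}_{j\neq i}$ are independent mean-zero with variance $\rho_n X_i^\top X_j(1-\rho_n X_i^\top X_j)$, so a direct second-moment calculation expresses $\mathbb{E}[\|[\zeta_n]_i\|^2\mid\mathbf{X}]$ as a single sum over $j$. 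Summing in $i$, applying the law of large numbers to $n^{-2}\sum_{i,j}$, and renaming $(X_i,X_j)\to(X_2,X_1)$ yields \eqref{eq:4} in the dense case $\rho_n\equiv 1$. In the sparse regime the factor $(1-\rho_n X_i^\top X_j)\to 1$; expanding the squared norm term-by-term and using $\mathbb{E}[X_1/(X_1^\top\mu)] = \tilde{\mu}$ and $\mathbb{E}[X_1 X_1^\top/(X_1^\top\mu)] = \tilde{\Delta}$ produces \eqref{eq:14}, with the coefficient $-3/4$ on $X_1 X_1^\top/(X_1^\top\mu)^2$ arising from the algebraic identity $-1 + 1/4 = -3/4$ that combines the cross and quadratic pieces of the squared norm.

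The main obstacle is pinning $\mathbf{R}_n$ at the sharp order $(n\rho_n)^{-1}$ rather than at a looser $o_{\mathbb{P}}((n\rho_n)^{-1/2})$: the Taylor tail in $\mathbf{D}^{-1/2}$, the two pseudo-Newton terms, and the second (column-space) cross-term each need to be controlled simultaneously at the correct rate, which forces the simultaneous use of coordinate-wise Bernstein bounds on the degree deviations and the global spectral concentration of $\mathbf{A}-\mathbf{P}$. Careful bookkeeping — in particular verifying that every higher-order contribution factors as (degree deviation) $\times$ (spectral perturbation) with total order at least $(n\rho_n)^{-1}$ — is the principal technical burden.
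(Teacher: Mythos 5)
Your starting identities and the resulting decomposition
\[
\zeta_n \;=\; \bigl(\mathcal{L}(\mathbf{A})-\mathcal{L}(\mathbf{P})\bigr)\tilde{\mathbf{X}}(\tilde{\mathbf{X}}^{\top}\tilde{\mathbf{X}})^{-1} \;+\; \mathcal{L}(\mathbf{A})\,\zeta_n\,(\mathbf{Y}^{\top}\mathbf{Y})^{-1} \;+\; \mathcal{L}(\mathbf{A})\tilde{\mathbf{X}}\bigl[(\mathbf{Y}^{\top}\mathbf{Y})^{-1}-(\tilde{\mathbf{X}}^{\top}\tilde{\mathbf{X}})^{-1}\bigr]
\]
are algebraically correct, and your Taylor expansion of $\mathbf{D}^{-1/2}$ and the row-wise second-moment algebra match the paper's. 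The gap is the single sentence disposing of the two ``pseudo-Newton'' terms. From $\|\mathcal{L}(\mathbf{A})-\mathcal{L}(\mathbf{P})\|=O_{\mathbb{P}}((n\rho_n)^{-1/2})$ and Davis--Kahan you get $\|\zeta_n\|=O_{\mathbb{P}}((n\rho_n)^{-1/2})$, but then the submultiplicative bound gives only $\|\mathcal{L}(\mathbf{A})\zeta_n(\mathbf{Y}^{\top}\mathbf{Y})^{-1}\|\le\|\mathcal{L}(\mathbf{A})\|\,\|\zeta_n\|\,\|\tilde{\mathbf{S}}_{\mathbf{A}}^{-1}\|=O_{\mathbb{P}}((n\rho_n)^{-1/2})$, since $\|\mathcal{L}(\mathbf{A})\|$ and $\|\tilde{\mathbf{S}}_{\mathbf{A}}^{-1}\|$ are both $\Theta_{\mathbb{P}}(1)$ --- the same order as the leading term, so the decomposition as justified does not isolate the linear term. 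Likewise, Weyl's inequality only gives $\|\tilde{\mathbf{S}}_{\mathbf{A}}-\tilde{\mathbf{S}}_{\mathbf{P}}\|=O_{\mathbb{P}}((n\rho_n)^{-1/2})$, so the third summand is a priori also only $O_{\mathbb{P}}((n\rho_n)^{-1/2})$. What actually makes both terms $O_{\mathbb{P}}((n\rho_n)^{-1})$ is second-order information your sketch never establishes: the quadratic-form bound $\tilde{\mathbf{U}}_{\mathbf{P}}^{\top}(\mathcal{L}(\mathbf{A})-\mathcal{L}(\mathbf{P}))\tilde{\mathbf{U}}_{\mathbf{P}}=O_{\mathbb{P}}((n\rho_n)^{-1})$ of Eq.~\eqref{eq:uLA-LPu3d}, which upgrades the top-$d$ eigenvalue perturbation to order $(n\rho_n)^{-1}$, together with the alignment $\tilde{\mathbf{U}}_{\mathbf{P}}^{\top}\tilde{\mathbf{U}}_{\mathbf{A}}=\mathbf{W}^{*}+O_{\mathbb{P}}((n\rho_n)^{-1})$ of Proposition~\ref{prop:2} and the commutation property of Lemma~\ref{lem:2}. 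These show that the component of $\zeta_n$ lying in the top eigenspace --- the only part on which $\mathcal{L}(\mathbf{A})$ acts with $\Theta(1)$ eigenvalues --- is itself $O_{\mathbb{P}}((n\rho_n)^{-1})$, while on its orthogonal complement $\mathcal{L}(\mathbf{A})$ has norm $O_{\mathbb{P}}((n\rho_n)^{-1/2})$. Those estimates are the technical core of the paper's proof of Eq.~\eqref{eq:LSE-main1}; without them your $\mathbf{R}_n$ is only $O_{\mathbb{P}}((n\rho_n)^{-1/2})$. The choice of $\mathbf{W}_n$ must also be compatible with these estimates (the paper takes $\mathbf{W}_n=(\mathbf{W}^{*})^{\top}\tilde{\mathbf{W}}$ from the SVD of $\tilde{\mathbf{U}}_{\mathbf{P}}^{\top}\tilde{\mathbf{U}}_{\mathbf{A}}$, not a Procrustes fit of the scaled embeddings).

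A secondary gap: Eqs.~\eqref{eq:4} and \eqref{eq:14} assert \emph{almost sure} convergence of the random quantity $n\rho_n\|\zeta_n\|_F^2$, whereas your argument computes only the conditional expectation $\mathbb{E}[\|\zeta_n\|_F^2\mid\mathbf{X}]$ and invokes a law of large numbers in the $X_i$'s. You still need $\|\zeta_n\|_F^2$ to concentrate around that conditional expectation at a summable rate in the randomness of $\mathbf{A}$; the paper does this by applying the self-bounding concentration inequality of Theorem~\ref{THM:log_Sobolev} separately to the $(\mathbf{A}-\mathbf{P})$ term, the degree term, and the cross term.
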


As a companion of Theorem~\ref{THM:LSE}, we have the following result on the asymptotic normality of the rows of $\breve{\mathbf{X}}_n \mathbf{W}_n - \tilde{\mathbf{X}}_n$. 
\begin{theorem}
\label{THM:NORMALITY-LSE}
 Assume the setting and notations of Theorem~\ref{THM:LSE}. 
 Denote by
 $\breve{X}_i$ and $\tilde{X}_i$ the $i$-th row of
 $\breve{\mathbf{X}}_n$ and $\tilde{\mathbf{X}}_n$, respectively. Also denote by $\tilde{\Sigma}(x)$ the matrix
  \begin{equation}
  \label{eq:lse-sigma}
    \mathbb{E}\Bigl[ \Bigl(\frac{\tilde{\Delta}^{-1}X_1}{X_1^{\top} \mu} - \frac{x}{2 x^{\top} \mu}\Bigr) \Bigl(\frac{X_1^{\top} \tilde{\Delta}^{-1}}{X_1^{\top} \mu} - \frac{x^{\top}}{2 x^{\top} \mu}\Bigr) \frac{(x^{\top} X_1 - x^{\top} X_1 X_1^{\top} x)}{x^{\top} \mu} \Bigr].
  \end{equation}
If $\rho_n \equiv 1$ then there exists a sequence of 
  orthogonal matrices
 $\mathbf{W}_n$ such that for each fixed index $i$ and any $z \in \mathbb{R}^{d}$,
  \begin{equation}
    \label{eq:Xtilde_clt}
    \Pr\Bigl\{n\bigl( \mathbf{W}_n \breve{X}_i - \tfrac{X_i}{\sqrt{\sum_{j} X_i^{\top} X_j}} \bigr) \leq z\Bigr\}
\overset{\mathrm{d}}{\longrightarrow}  \int \Phi(z, \tilde{\Sigma}(x)) dF(x)
  \end{equation}
  That is, the sequence 
  $n( \mathbf{W}_n \breve{X}_i - X_i/\sqrt{\sum_{j} X_i^{\top} X_j})$ converges in distribution to
  a mixture of multivariate normals.  We denote this mixture by
  $\mathcal{N}(0, \tilde{\Sigma}(X_i))$. If $\rho_n \rightarrow 0$ and $n \rho_n = \omega(\log^{4}{n})$ then there exists a sequence of orthogonal matrices $\mathbf{W}_n$ such that
  \begin{equation}
  \label{eq:Xtilde_clt_vanishing}
   \Pr\Bigl\{n \rho_n^{1/2} \bigl( \mathbf{W}_n \breve{X}_i - \tfrac{X_i}{\sqrt{\sum_{j} X_i^{\top} X_j}}\bigr) \leq z\Bigr\}
\overset{\mathrm{d}}{\longrightarrow}  \int \Phi(z, \tilde{\Sigma}_{o(1)}(x)) dF(x).
  \end{equation}
  where $\tilde{\Sigma}_{o(1)}(x)$ is defined by 
  \begin{equation}
  \label{eq:lse-sigma-o1}
  \tilde{\Sigma}_{o(1)}(x) = \mathbb{E}\Bigl[ \Bigl(\frac{\tilde{\Delta}^{-1}X_1}{X_1^{\top} \mu} - \frac{x}{2 x^{\top} \mu}\Bigr) \Bigl(\frac{X_1^{\top} \tilde{\Delta}^{-1}}{X_1^{\top} \mu} - \frac{x^{\top}}{2 x^{\top} \mu}\Bigr) \frac{x^{\top} X_1}{x^{\top} \mu} \Bigr].
  \end{equation}
\end{theorem}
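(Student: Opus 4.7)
The approach is to insert the $i$-th row of the decomposition \eqref{eq:LSE-main1} supplied by Theorem~\ref{THM:LSE} into a multivariate Lindeberg--Feller CLT conditional on $X_i$, and then integrate over $X_i$ to recover the claimed mixture of normals.

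Reading off the $i$-th row of \eqref{eq:LSE-main1} gives
\begin{equation*}
\mathbf{W}_n\breve{X}_i-\tilde{X}_i = \sum_{j}T_{ii}^{-1/2}T_{jj}^{-1/2}(A_{ij}-P_{ij})(\tilde{\mathbf{X}}_n^{\top}\tilde{\mathbf{X}}_n)^{-1}\tilde{X}_j + \tfrac{1}{2}\bigl(1-D_{ii}/T_{ii}\bigr)\tilde{X}_i + (\mathbf{R}_n)_i.
\end{equation*}
The algebraic identity $D_{ii}-T_{ii}=\sum_{j\neq i}(A_{ij}-P_{ij})-P_{ii}$ (using $A_{ii}=0$) collapses the two non-remainder terms into $\sum_{j\neq i}(A_{ij}-P_{ij})\,h_{ij}$ plus a deterministic-given-$\mathbf{X}_n$ correction of order $O_{\mathbb{P}}(n^{-3/2})$, where $h_{ij}=T_{ii}^{-1/2}T_{jj}^{-1/2}(\tilde{\mathbf{X}}_n^{\top}\tilde{\mathbf{X}}_n)^{-1}\tilde{X}_j-\tilde{X}_i/(2T_{ii})$. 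Replacing the empirical quantities by their population analogues $T_{ii}\approx\rho_n n\,X_i^{\top}\mu$, $T_{jj}\approx\rho_n n\,X_j^{\top}\mu$, $\tilde{\mathbf{X}}_n^{\top}\tilde{\mathbf{X}}_n\approx\tilde{\Delta}$, $\tilde{X}_j\approx X_j/\sqrt{n\,X_j^{\top}\mu}$ incurs a relative error of order $O_{\mathbb{P}}(n^{-1/2})$ by the strong law of large numbers applied to $\hat{\mu}_n$ and related empirical averages.

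Conditioning on $X_i=x$, the pairs $(X_j,A_{ij})_{j\neq i}$ are iid with $A_{ij}\mid X_j\sim\mathrm{Bernoulli}(\rho_n x^{\top}X_j)$, and after the substitutions above the leading-order contribution to $\mathbf{W}_n\breve{X}_i-\tilde{X}_i$ becomes
\begin{equation*}
\frac{1}{\rho_n n^{3/2}\sqrt{x^{\top}\mu}}\sum_{j\neq i}(A_{ij}-P_{ij})\,\xi_j,\quad\text{where }\xi_j=\frac{\tilde{\Delta}^{-1}X_j}{X_j^{\top}\mu}-\frac{x}{2\,x^{\top}\mu}.
\end{equation*}
The summands are iid mean-zero with common conditional covariance $\rho_n\,\mathbb{E}[x^{\top}X_1(1-\rho_n x^{\top}X_1)\,\xi_1\xi_1^{\top}]$, and the multivariate Lindeberg--Feller CLT applies trivially since $|A_{ij}-P_{ij}|\le 1$ and $\xi_1$ has bounded second moment under the standing support and full-rank assumptions on $F$. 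Multiplying by $n$ in the dense regime $\rho_n\equiv 1$ yields the conditional covariance $\tilde{\Sigma}(x)$ of \eqref{eq:lse-sigma}; multiplying by $n\rho_n^{1/2}$ in the sparse regime and sending $(1-\rho_n x^{\top}X_1)\to 1$ yields $\tilde{\Sigma}_{o(1)}(x)$ of \eqref{eq:lse-sigma-o1}. Writing the unconditional cdf as $\mathbb{E}_{X_i}[\Pr(\cdot\mid X_i)]$ and applying dominated convergence then produces the mixture $\int\Phi(z,\tilde{\Sigma}(x))\,dF(x)$.

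The main technical obstacle is controlling the remainder. Theorem~\ref{THM:LSE} supplies only the Frobenius bound $\|\mathbf{R}_n\|_F=O_{\mathbb{P}}((n\rho_n)^{-1})$, which after multiplication by $n$ is $O_{\mathbb{P}}(\rho_n^{-1})$ and therefore not $o_{\mathbb{P}}(1)$; the CLT requires a row-wise refinement $\|(\mathbf{R}_n)_i\|=o_{\mathbb{P}}(n^{-1})$ (respectively $o_{\mathbb{P}}(n^{-1}\rho_n^{-1/2})$). This refinement is extracted by revisiting the proof of Theorem~\ref{THM:LSE} with focus on a single row: the dominant error sources are a Taylor expansion of $\mathbf{D}_n^{-1/2}$ around $\mathbf{T}_n^{-1/2}$ and a Davis--Kahan-type eigenvector perturbation, and each can be controlled on a per-row basis using the scalar concentration $|D_{ii}/T_{ii}-1|=O_{\mathbb{P}}((n\rho_n)^{-1/2})$ together with the spectral bound $\|\mathbf{A}_n-\mathbf{P}_n\|=O_{\mathbb{P}}(\sqrt{n\rho_n})$ that already underlies Theorem~\ref{THM:LSE}.
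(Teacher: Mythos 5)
Your overall architecture --- read off the $i$-th row of Eq.~\eqref{eq:LSE-main1}, absorb the degree term via $d_i - t_i = \sum_{j \neq i}(a_{ij}-p_{ij}) - p_{ii}$, replace empirical quantities by their population limits, apply a conditional multivariate CLT given $X_i = x$, and integrate over $F$ --- is exactly the paper's route, and your identification of the limiting covariances $\tilde{\Sigma}(x)$ and $\tilde{\Sigma}_{o(1)}(x)$ is correct. One caveat on the substitution step: the errors from replacing $t_j/(n\rho_n)$ by $X_j^{\top}\mu$ and $\tilde{\mathbf{X}}_n^{\top}\tilde{\mathbf{X}}_n$ by $\tilde{\Delta}$ multiply the random variables $a_{ij}-p_{ij}$, so ``relative error $O_{\mathbb{P}}(n^{-1/2})$ by the strong law'' is not by itself sufficient; the paper controls each such substitution by applying Hoeffding's inequality conditionally on $\mathbf{P}$ to the resulting mean-zero sums. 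This is fillable, but it should be stated as such rather than attributed to the SLLN.

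The genuine gap is in your treatment of the remainder. You correctly diagnose that bounding a single row of $\mathbf{R}_n$ by the full Frobenius norm is too lossy, but your proposed fix --- reworking the proof of Theorem~\ref{THM:LSE} to obtain per-row bounds on the Davis--Kahan and Taylor-expansion errors --- is neither carried out nor as routine as you suggest: the remainder contains terms such as $(\mathcal{L}(\mathbf{A})-\mathcal{L}(\mathbf{P}))(\mathbf{I}-\tilde{\mathbf{U}}_{\mathbf{P}}\tilde{\mathbf{U}}_{\mathbf{P}}^{\top})\tilde{\mathbf{U}}_{\mathbf{A}}\tilde{\mathbf{S}}_{\mathbf{A}}^{-1/2}$, whose individual rows are not controlled by the scalar degree concentration $|d_i/t_i - 1| = O_{\mathbb{P}}((n\rho_n)^{-1/2})$ together with the spectral bound on $\mathbf{A}-\mathbf{P}$; row-wise eigenvector perturbation is a substantially harder problem that the paper deliberately avoids. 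The missing idea is exchangeability: because the $X_j$ are i.i.d., the rows $r_1,\dots,r_n$ of $\mathbf{R}_n$ are exchangeable, so for any \emph{fixed} index $i$ one has $\mathbb{E}[\|r_i\|^2] = n^{-1}\mathbb{E}[\|\mathbf{R}_n\|_F^2] = O(n^{-1}(n\rho_n)^{-2})$ (using $\|\mathbf{R}_n\|_F \leq n$ on the low-probability exceptional event), whence $n^2\rho_n\,\mathbb{E}[\|r_i\|^2] = O((n\rho_n)^{-1}) \rightarrow 0$ and $n\rho_n^{1/2} r_i \rightarrow 0$ in probability by Markov's inequality. This single observation supplies exactly the extra factor of $n^{-1}$ that you were attempting to manufacture through a row-wise perturbation analysis, and it is the step your proposal is missing.
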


The proofs of Theorem~\ref{THM:LSE} and Theorem~\ref{THM:NORMALITY-LSE} are given in Section~\ref{sec:lse}. 
We end this section by stating the conditional distribution of $n \rho_n (\breve{X}_i - \tilde{X}_i)$ when $(\mathbf{X}, \mathbf{A}) \sim \mathrm{RDPG}(F)$ is a $K$-block stochastic blockmodel graph. 
\begin{corollary}
\label{cor:lse_normality_sbm}
Assume the setting and notations of Theorem~\ref{THM:LSE} and let $$F = \sum_{k=1}^{K} \pi_{k} \delta_{\nu_k}, \quad \pi_1, \cdots, \pi_K > 0, \sum_{k} \pi_k = 1$$
be a mixture of $K$ point masses in $\mathbb{R}^{d}$. 
Then if $\rho_n \equiv 1$, there exists a sequence of orthogonal matrices $\mathbf{W}_n$ such that for any fixed index $i$,
\begin{equation}
\mathbb{P}\Bigl\{n \bigl(\mathbf{W}_n \breve{X}_i - \tfrac{\nu_k}{\sqrt{\sum_{l} n_l \nu_k^{\top} \nu_l}} \bigr) \leq z  \mid X_i = \nu_k \Bigr\} \overset{\mathrm{d}}{\longrightarrow} \mathcal{N}(0, \tilde{\Sigma}_k)
\end{equation}
where $\tilde{\Sigma}_k = \tilde{\Sigma}(\nu_k)$ is as defined in Eq.~\eqref{eq:lse-sigma} and $n_k$ for $k \in \{1,2,\dots,K\}$ denote the number of vertices in $\mathbf{A}_n$ that are assigned to block $k$. If instead $\rho_n \rightarrow 0$ and $n \rho_n = \omega(\log^{4}(n))$ as $n \rightarrow \infty$ then the sequence of orthogonal matrices $\mathbf{W}_n$ satisfies
\begin{equation}
\mathbb{P}\Bigl\{n \rho_n^{1/2} \bigl(\mathbf{W}_n \breve{X}_i - \tfrac{\nu_k}{\sqrt{\sum_{l} n_l \nu_k^{\top} \nu_l}} \bigr) \leq z  \mid X_i = \nu_k \Bigr\} \overset{\mathrm{d}}{\longrightarrow} \mathcal{N}(0, \tilde{\Sigma}_{o(1),k})
\end{equation}
where $\tilde{\Sigma}_{o(1),k} = \tilde{\Sigma}_{o(1)}(\nu_k)$ is as defined in Eq.~\eqref{eq:lse-sigma-o1}. 
\end{corollary}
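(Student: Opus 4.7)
The plan is to derive Corollary~\ref{cor:lse_normality_sbm} directly from Theorem~\ref{THM:NORMALITY-LSE} by exploiting the atomic nature of $F$. When $F = \sum_k \pi_k \delta_{\nu_k}$ with all $\pi_k > 0$, conditioning on $X_i = \nu_k$ is a positive-probability event. The key algebraic observation is that, conditional on $X_i = \nu_k$, we have the exact identity
\begin{equation*}
\sum_{j=1}^{n} X_i^{\top} X_j = \nu_k^{\top} \sum_{j=1}^{n} X_j = \sum_{l=1}^{K} n_l \nu_k^{\top} \nu_l,
\end{equation*}
so the recentering $X_i/\sqrt{\sum_j X_i^{\top} X_j}$ appearing in Theorem~\ref{THM:NORMALITY-LSE} equals exactly $\nu_k/\sqrt{\sum_l n_l \nu_k^{\top} \nu_l}$ on this event, matching the recentering in the corollary.

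Given this, the main step is to extract a conditional version of Theorem~\ref{THM:NORMALITY-LSE}. Theorem~\ref{THM:LSE} already provides a decomposition of $\breve{\mathbf{X}}_n \mathbf{W}_n - \tilde{\mathbf{X}}_n$ whose dominant term, when restricted to row $i$, is a sum of zero-mean random variables that are independent conditional on $\mathbf{X}_n$; the mixture statement in Eq.~\eqref{eq:Xtilde_clt} is obtained by applying a conditional CLT to this sum and then marginalizing over $X_i \sim F$. For the SBM, we simply skip the marginalization. Conditional on $X_i = \nu_k$, the empirical distribution of $\{X_j\}_{j \neq i}$ still converges almost surely to $F$ by the strong law of large numbers, so the sample analogs of $\mu$, $\tilde{\mu}$, $\tilde{\Delta}$, and $\tfrac{1}{n}\sum_l n_l \nu_k^{\top} \nu_l$ have the same almost sure limits as in the unconditional setting. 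Hence the conditional covariance of the leading term converges to $\tilde{\Sigma}(\nu_k)$, and a Lindeberg-type CLT delivers the asymptotic normality with limit $\mathcal{N}(0, \tilde{\Sigma}_k)$. The residual bound $\|\mathbf{R}_n\|_F = O_{\mathbb{P}}((n\rho_n)^{-1})$ from Theorem~\ref{THM:LSE} ensures the remainder is $o_{\mathbb{P}}(n^{-1})$ and does not affect the limit, both unconditionally and conditionally on $\{X_i = \nu_k\}$.

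The sparsity case $\rho_n \to 0$ with $n\rho_n = \omega(\log^4 n)$ is handled identically, with scaling $n\rho_n^{1/2}$ in place of $n$ and $\tilde{\Sigma}_{o(1)}(\nu_k)$ from Eq.~\eqref{eq:lse-sigma-o1} in place of $\tilde{\Sigma}(\nu_k)$. The main obstacle is organizational rather than substantive: one must verify that the proof of Theorem~\ref{THM:NORMALITY-LSE} genuinely delivers a conditional CLT at each $x$ in the support of $F$, which is then specialized at $x = \nu_k$. If the proof is structured so that the mixture statement follows from pointwise convergence of conditional CDFs combined with bounded convergence, the extraction is immediate. Otherwise, one writes $\Pr\{\cdot\} = \sum_k \pi_k \Pr\{\cdot \mid X_i = \nu_k\}$, notes that each conditional term is a CDF bounded in $[0,1]$, and uses the distinctness of the limiting components $\Phi(\cdot, \tilde{\Sigma}(\nu_k))$ (together with any prelimit tightness already established in Theorem~\ref{THM:LSE}) to identify each conditional limit with $\Phi(\cdot, \tilde{\Sigma}(\nu_k))$.
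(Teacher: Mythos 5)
Your proposal is correct and matches the paper's (implicit) argument: the proof of Theorem~\ref{THM:NORMALITY-LSE} in Section~\ref{sec:proof-LSE-1} is already carried out conditionally on $X_i = x$ (with the mixture obtained only at the end by integrating over $F$), so specializing to $x = \nu_k$ — a positive-probability event — and noting that $\sum_j X_i^{\top} X_j = \sum_l n_l \nu_k^{\top}\nu_l$ on that event gives the corollary exactly as you describe. Your fallback argument in the last paragraph is unnecessary given how the theorem's proof is structured, but nothing in the proposal is wrong.
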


\begin{remark}
As a special case of Corollary~\ref{cor:lse_normality_sbm}, we have that if $\mathbf{A}$ is an Erd\H{o}s-R\'{e}nyi graph on $n$ vertices with edge probability $p^2$ -- which corresponds to a random dot product graph where the latent positions are identically $p$ -- then for each fixed index $i$, the normalized Laplacian embedding satisfies
$$ n\bigl(\breve{X}_i - \tfrac{1}{\sqrt{n}}\bigr) \overset{\mathrm{d}}{\longrightarrow} \mathcal{N}\bigl(0, \tfrac{1 - p^2}{4p^2}\bigr),$$ 
while the adjacency spectral embedding satisfies $$\sqrt{n}(\hat{X}_i - p) \overset{\mathrm{d}}{\longrightarrow} \mathcal{N}(0, 1 - p^2).$$ 
As another example, if $\mathbf{A}$ is a stochastic blockmodel graph with block probabilities matrix $\mathbf{B} = \bigl[\begin{smallmatrix} p^2 & pq \\ pq & q^2 \end{smallmatrix}\bigr]$ and block assignment probabilities $(\pi, 1- \pi)$ -- which corresponds to a random dot product graph where the latent positions are either $p$ with probability $\pi$ or $q$ with probability $1 - \pi$ -- then for each fixed index $i$, the normalized Laplacian embedding satisfies
\begin{gather}
\label{eq:er-p-q-lse1}
 n\bigl(\breve{X}_i - \tfrac{p}{\sqrt{n_1 p^2 + n_2 pq}}\bigr) \overset{\mathrm{d}}{\longrightarrow} \mathcal{N}\Bigl(0, \tfrac{\pi p (1 - p^2) + (1 - \pi) q(1 - pq)}{4 (\pi p + (1 - \pi)q)^3}\Bigr) \,\, \text{if $X_i = p$}, \\
 \label{eq:er-p-q-lse2}
n\bigl(\breve{X}_i - \tfrac{q}{\sqrt{n_1 pq + n_2 q^2}}\bigr) \overset{\mathrm{d}}{\longrightarrow} \mathcal{N}\Bigl(0, \tfrac{\pi p (1 - pq) + (1 - \pi) q(1 - q^2)}{4 (\pi p + (1 - \pi)q)^3}\Bigr) \,\, \text{if $X_i = q$}. 
\end{gather}
where $n_1$ and $n_2 = n - n_1$ are the number of vertices of $\mathbf{A}$ with latent positions $p$ and $q$. The adjacency spectral embedding meanwhile satisfies
\begin{gather}
\label{eq:er-p-q-ase1}
 \sqrt{n}(\hat{X}_i  - p) \overset{\mathrm{d}}{\longrightarrow} \mathcal{N}\Bigl(0, \tfrac{\pi p^4(1 - p^2) + (1 - \pi) pq^3(1 - pq)}{(\pi p^2 + (1 - \pi)q^2)^2}\Bigr) \,\, \text{if $X_i = p$},\\
 \label{eq:er-p-q-ase2}
\sqrt{n}(\hat{X}_i  - q) \overset{\mathrm{d}}{\longrightarrow} \mathcal{N}\Bigl(0, \tfrac{\pi p^3q(1 - pq) + (1 - \pi) q^4(1 - q^2)}{(\pi p^2 + (1 - \pi)q^2)^2}\Bigr) \,\, \text{if $X_i = q$}.
\end{gather}
\end{remark}

\begin{remark}
We note that the quantity $n_k$ appears in Eq.~\eqref{eq:Xtilde_clt} and Eq.~\eqref{eq:Xtilde_clt_vanishing}. Replacing $n_k$ by $n \pi_k$ in Eq.~\eqref{eq:Xtilde_clt} and Eq.~\eqref{eq:Xtilde_clt_vanishing} is, however, not straightforward. For example, for the two-block stochastic blockmodel considered in Eq.~\eqref{eq:er-p-q-lse1}, letting $\zeta = \tfrac{np}{\sqrt{n_1 p^2 + n_2  pq}} - \tfrac{np}{\sqrt{n \pi p^2 + n (1 - \pi) p q}}$ we have
\begin{equation*}
\begin{split}
 \zeta &=  \tfrac{np(\sqrt{n \pi p^2 + n (1 - \pi) p q} - \sqrt{n_1 p^2 + n_2 p q})}{\sqrt{n_1 p^2 + n_2  pq}\sqrt{n \pi p^2 + n (1 - \pi) p q}} \\ &=   \tfrac{ np(n \pi p^2 + n (1 - \pi) p q - n_1 p^2 - n_2 p q)}{(\sqrt{n \pi p^2 + n (1 - \pi) p q} + \sqrt{n_1 p^2 + n_2 p q})\sqrt{n_1 p^2 + n_2  pq}\sqrt{n \pi p^2 + n (1 - \pi) p q}} \\
 &= \tfrac{ np(n \pi - n_1)(p^2 - pq)}{(\sqrt{n \pi p^2 + n (1 - \pi) p q} + \sqrt{n_1 p^2 + n_2 p q})\sqrt{n_1 p^2 + n_2  pq}\sqrt{n \pi p^2 + n (1 - \pi) p q}}.
\end{split}
\end{equation*}
By the strong law of large numbers and Slutsky's theorem, we have
$$ \tfrac{n^{3/2}}{(\sqrt{n \pi p^2 + n (1 - \pi) p q} + \sqrt{n_1 p^2 + n_2 p q})\sqrt{n_1 p^2 + n_2  pq}\sqrt{n \pi p^2 + n (1 - \pi) p q}} \overset{\mathrm{a.s.}}{\longrightarrow} \tfrac{1}{2(p^2 + pq)^{3/2}}. $$
We note that, as the $n_k$ are assumed to be random variables, i.e., we are not conditioning on the block sizes, by the central limit theorem we have
$$ \tfrac{1}{\sqrt{n}} (n \pi- n_1) \overset{\mathrm{d}}{\longrightarrow} \mathcal{N}(0, \pi (1 - \pi)).$$
Therefore, by Slutsky's theorem, we have
$$\zeta = \tfrac{np}{\sqrt{n_1 p^2 + n_2  pq}} - \tfrac{np}{\sqrt{n \pi p^2 + n (1 - \pi) p q}} \overset{\mathrm{d}}{\longrightarrow} \mathcal{N}\bigl(0, \tfrac{\pi (1 - \pi) p(p - q)^2}{4(p + q)^3}\bigr). $$
To replace $n_k$ by $n \pi_k$ in Eq.~\eqref{eq:Xtilde_clt} and Eq.~\eqref{eq:Xtilde_clt_vanishing}, we thus need to include the random term $\zeta$. While we surmise that Eq.~\eqref{eq:Xtilde_clt} and Eq.~\eqref{eq:Xtilde_clt_vanishing} can be adapt to account for this randomness in $n_k$, we shall not do so in this paper. 
\end{remark}

\subsection{Proofs sketch for Theorem~\ref{THM:LSE} and Theorem~\ref{THM:NORMALITY-LSE}}
We present in this subsection a sketch of the main ideas in the proofs of Theorem~\ref{THM:LSE} and Theorem~\ref{THM:NORMALITY-LSE}; the detailed proofs are given in Section~\ref{sec:lse} of the appendix. We start with the motivation behind Eq.~\eqref{eq:LSE-main1}. Given $\tilde{\mathbf{X}}_n$, the entries of the right hand side of Eq.~\eqref{eq:LSE-main1}, except for the term $\mathbf{R}_n$, can be expressed explicitly in terms of linear combinations of the entries $a_{ij} - p_{ij}$ of $\mathbf{A}_n - \mathbf{P}_n$. This is in contrast with the left hand side of Eq.~\eqref{eq:LSE-main1} which depends on the quantities $\tilde{\mathbf{U}}_{\mathbf{A}}$ and $\tilde{\mathbf{S}}_{\mathbf{A}}$ (recall Definition~\ref{defn:ase-lse}); since the quantities $\tilde{\mathbf{U}}_{\mathbf{A}}$ and $\tilde{\mathbf{S}}_{\mathbf{A}}$ cannot be express explicitly in terms of the entries of $\mathbf{A}_n$ and $\mathbf{P}_n$, we conclude that the right hand side of Eq.~\eqref{eq:LSE-main1} is simpler to analyze. From Eq.~\eqref{eq:LSE-main1}, the squared Frobenius norm $n \rho_n \|\breve{\mathbf{X}}_n \mathbf{W}_n - \tilde{\mathbf{X}}_n\|_{F}^{2}$ is 
$$n \rho_n \|\mathbf{T}_n^{-1/2} (\mathbf{A}_n - \mathbf{P}_n) \mathbf{T}_n^{-1/2}
  \tilde{\mathbf{X}}_n (\tilde{\mathbf{X}}_n^{\top} \tilde{\mathbf{X}}_n)^{-1} + \tfrac{1}{2}(\mathbf{I} - \mathbf{D}_n \mathbf{T}_n^{-1}) \tilde{\mathbf{X}}_n\|_{F}^{2} + O_{\mathbb{P}}((n \rho_n)^{-1/2}). 
$$
Then conditional on $\mathbf{P}_n$, the above expression is, up to the term of order $O_{\mathbb{P}}((n \rho_n)^{-1/2})$, a function of the {\em independent} random variables $\{a_{ij} - p_{ij}\}_{i < j}$. We can then apply concentration inequalities such as those in \cite{boucheron2003} to show that the squared Frobenius norm $n \rho_n \|\breve{\mathbf{X}}_n \mathbf{W}_n - \tilde{\mathbf{X}}_n\|_{F}^{2}$ is, conditional on $\mathbf{P}_n$, concentrated around its expectation. Here the expectation is taken with respect to the random entries of $\mathbf{A}_n$. Eq.~\eqref{eq:4} and Eq.~\eqref{eq:14} then follows by direct evaluation of this expectation, for the case when $\rho_n \equiv 1$ and for when $\rho_n \rightarrow 0$, respectively.

Once Eq.~\eqref{eq:LSE-main1} is established, we can derive Theorem~\ref{THM:NORMALITY-LSE} as follows. Let $\xi_i$ denotes the $i$-th row of $n \rho_n^{1/2} (\mathbf{W}_n \breve{\mathbf{X}}_n - \tilde{\mathbf{X}}_n)$ and let $r_i$ denotes the $i$-th row of $\mathbf{R}_n$. Eq.~\eqref{eq:LSE-main1} then implies 
\begin{equation*}
\begin{split} 
\xi_i &= (\tilde{\mathbf{X}}_n^{\top} \tilde{\mathbf{X}}_n)^{-1} \frac{n \rho_n^{1/2}}{\sqrt{t_i}} \Bigl( \sum_{j} \frac{a_{ij} -
     p_{ij}}{\sqrt{t_j}} \tilde{X}_j \Bigr) + \frac{n \rho_n^{1/2} (t_i - d_i)}{2t_i} \tilde{X}_i + n \rho_n^{1/2} r_i \\
     &= (\tilde{\mathbf{X}}_n^{\top} \tilde{\mathbf{X}}_n)^{-1}
   \frac{\sqrt{n \rho_n}}{\sqrt{t_i}} \Bigl( \sum_{j
   } \frac{\sqrt{n \rho_n} (a_{ij} - p_{ij})X_j}{ t_j} \Bigr) - \frac{n \rho_n X_i}{2 t_i^{3/2}} \sum_{j 
   } (a_{ij} - p_{ij}) + n \rho_n^{1/2} r_i \\
   &= 
   \frac{\sqrt{n \rho_n}}{\sqrt{t_i}} \sum_{j
   } \frac{(a_{ij} - p_{ij})}{\sqrt{n \rho_n} } \Bigl(\frac{(\tilde{\mathbf{X}}_n^{\top} \tilde{\mathbf{X}}_n)^{-1} X_j}{ t_j/(n \rho_n)} - \frac{X_i}{2 t_i/(n \rho_n)} \Bigr) + n \rho_n^{1/2} r_i
     \end{split}
     \end{equation*}
     We then show that $n \rho_n^{1/2} r_i \overset{\mathrm{d}}{\rightarrow} 0$. Indeed, there are $n$ rows in $\mathbf{R}_n$ and $\|\mathbf{R}_n\|_{F} = O((n \rho_n)^{-1})$; hence, on average, for each index $i$, $\|r_i\|^{2} = O_{\mathbb{P}}(n^{-3} \rho_n^{-2})$.
     Furthermore, $t_i/(n \rho_n) = \sum_{j} X_i^{\top} X_j/n \overset{\mathrm{a.s.}}{\longrightarrow} X_i^{\top} \mu$ as $n \rightarrow \infty$. Finally, $\tilde{\mathbf{X}}_n^{\top} \tilde{\mathbf{X}}_n = \sum_{i} \bigl(X_i X_i^{\top}/(\sum_{j} X_i^{\top} X_j)\bigr)$ which, as we show in Section~\ref{sec:lse}, converges to $\tilde{\Delta} = \mathbb{E}\bigl[\tfrac{X_1 X_1^{\top}}{X_1^{\top} \mu}\bigr]$ as $n \rightarrow \infty$. We therefore have, after additional manipulations, that
     \begin{equation*}
     \begin{split}
      \xi_i &=   \frac{\sqrt{n \rho_n}}{\sqrt{t_i}} \sum_{j
   } \frac{(a_{ij} - p_{ij})}{\sqrt{n \rho_n} } \Bigl(\frac{\tilde{\Delta}^{-1} X_j}{ X_j^{\top} \mu} - \frac{X_i}{2 X_i^{\top} \mu} \Bigr) + o_{\mathbb{P}}(1). \\
   &= \frac{\sqrt{n \rho_n}}{\sqrt{t_i}} \sum_{j
   } \frac{(a_{ij} - \rho_n X_i^{\top} X_j)}{\sqrt{n \rho_n} } \Bigl(\frac{\tilde{\Delta}^{-1} X_j}{ X_j^{\top} \mu} - \frac{X_i}{2 X_i^{\top} \mu} \Bigr) + o_{\mathbb{P}}(1).
   \end{split}
   \end{equation*}
    Then conditioning on $X_i = x$, the above expression for $\xi_i$ is roughly a sum of independent and identically distributed mean $0$ random variables. The multivariate central limit theorem can then be applied to the above expression for $\xi_i$, thereby yielding Theorem~\ref{THM:NORMALITY-LSE}. 

We now sketch the derivation of Eq.~\eqref{eq:LSE-main1}. For simplicity, we ignore the subscript $n$ in the matrices $\mathbf{A}_n$, $\mathbf{X}_n$, $\mathbf{P}_n$ and related matrices. First, consider the following expression.
\begin{equation*}
\begin{split}
\tilde{\mathbf{U}}_{\mathbf{A}} \tilde{\mathbf{S}}_{\mathbf{A}}^{1/2} - \tilde{\mathbf{U}}_{\mathbf{P}} \tilde{\mathbf{S}}_{\mathbf{P}}^{1/2} \tilde{\mathbf{U}}_{\mathbf{P}}^{\top} \tilde{\mathbf{U}}_{\mathbf{A}} & = \mathcal{L}(\mathbf{A}) \tilde{\mathbf{U}}_{\mathbf{A}} \tilde{\mathbf{S}}_{\mathbf{A}}^{-1/2} - \mathcal{L}(\mathbf{P}) \tilde{\mathbf{U}}_{\mathbf{P}} \tilde{\mathbf{S}}_{\mathbf{P}}^{-1/2} \tilde{\mathbf{U}}_{\mathbf{P}}^{\top} \tilde{\mathbf{U}}_{\mathbf{A}} \\
&= \mathcal{L}(\mathbf{A}) \tilde{\mathbf{U}}_{\mathbf{A}} \tilde{\mathbf{U}}_{\mathbf{A}}^{\top} \tilde{\mathbf{U}}_{\mathbf{A}} \tilde{\mathbf{S}}_{\mathbf{A}}^{-1/2} - \mathcal{L}(\mathbf{P}) \tilde{\mathbf{U}}_{\mathbf{P}} \tilde{\mathbf{S}}_{\mathbf{P}}^{-1/2} \tilde{\mathbf{U}}_{\mathbf{P}}^{\top} \tilde{\mathbf{U}}_{\mathbf{A}}
  \end{split}
\end{equation*}
Now $\mathcal{L}(\mathbf{A})$ is ``concentrated'' around  $\mathcal{L}(\mathbf{P})$, i.e., $\|\mathcal{L}(\mathbf{A}) - \mathcal{L}(\mathbf{P})\| = O_{\mathbb{P}}((n \rho_n)^{-1/2})$ (see Theorem~2 in \cite{lu13:_spect}). Since $\|\mathcal{L}(\mathbf{P})\| = \Theta(1)$ and the non-zero eigenvalues of $\mathcal{L}(\mathbf{P})$ are all of order $\Theta(1)$, this implies, by the Davis-Kahan theorem, that the eigenspace spanned by the $d$ largest eigenvalues of $\mathcal{L}(\mathbf{A})$ is ``close'' to that spanned by the $d$ largest eigenvalues of $\mathcal{L}(\mathbf{P})$. More precisely, $\tilde{\mathbf{U}}_{\mathbf{A}} \tilde{\mathbf{U}}_{\mathbf{A}}^{\top} = \tilde{\mathbf{U}}_{\mathbf{P}} \tilde{\mathbf{U}}_{\mathbf{P}}^{\top} + O_{\mathbb{P}}((n \rho_n)^{-1/2})$ and 
\begin{equation*}
\begin{split}
 \tilde{\mathbf{U}}_{\mathbf{A}} \tilde{\mathbf{S}}_{\mathbf{A}}^{1/2} - \tilde{\mathbf{U}}_{\mathbf{P}} \tilde{\mathbf{S}}_{\mathbf{P}}^{1/2} \tilde{\mathbf{U}}_{\mathbf{P}}^{\top} \tilde{\mathbf{U}}_{\mathbf{A}} = \mathcal{L}(\mathbf{A}) \tilde{\mathbf{U}}_{\mathbf{P}} \tilde{\mathbf{U}}_{\mathbf{P}}^{\top} \tilde{\mathbf{U}}_{\mathbf{A}} \tilde{\mathbf{S}}_{\mathbf{A}}^{-1/2} & - \mathcal{L}(\mathbf{P}) \tilde{\mathbf{U}}_{\mathbf{P}} \tilde{\mathbf{S}}_{\mathbf{P}}^{-1/2} \tilde{\mathbf{U}}_{\mathbf{P}}^{\top} \tilde{\mathbf{U}}_{\mathbf{A}} \\ &+ O_{\mathbb{P}}((n \rho_n)^{-1}).
 \end{split}
 \end{equation*}
We then consider the terms $\tilde{\mathbf{S}}_{\mathbf{P}}^{-1/2} \tilde{\mathbf{U}}_{\mathbf{P}}^{\top} \tilde{\mathbf{U}}_{\mathbf{A}}$ and $\tilde{\mathbf{U}}_{\mathbf{P}}^{\top} \tilde{\mathbf{U}}_{\mathbf{A}} \tilde{\mathbf{S}}_{\mathbf{A}}^{-1/2}$. Since $\tilde{\mathbf{U}}_{\mathbf{P}}$ and $\tilde{\mathbf{U}}_{\mathbf{A}}$ both have orthonormal columns, $\tilde{\mathbf{U}}_{\mathbf{A}} \tilde{\mathbf{U}}_{\mathbf{A}}^{\top} = \tilde{\mathbf{U}}_{\mathbf{P}} \tilde{\mathbf{U}}_{\mathbf{P}}^{\top} + O_{\mathbb{P}}((n \rho_n)^{-1/2})$ implies that there exists an orthogonal matrix $\mathbf{W}^{*}$ such that $\tilde{\mathbf{U}}_{\mathbf{P}}^{\top} \tilde{\mathbf{U}}_{\mathbf{A}} = \mathbf{W}^{*} + O_{\mathbb{P}}((n \rho_n)^{-1})$
(see Proposition~\ref{prop:2}). 
Furthermore, $\mathbf{W}^{*}$ satisfies an important property, namely that $\mathbf{W}^{*} \tilde{\mathbf{S}}_{\mathbf{A}}^{-1/2} - \tilde{\mathbf{S}}_{\mathbf{P}}^{-1/2} \mathbf{W}^{*} = O_{\mathbb{P}}((n \rho_n)^{-1})$.
(see Lemma~\ref{lem:2}). 
We can thus juxtapose $\tilde{\mathbf{U}}_{\mathbf{P}}^{\top} \tilde{\mathbf{U}}_{\mathbf{A}}$ and $\tilde{\mathbf{S}}_{\mathbf{A}}^{-1/2}$ in the above expression and replace $\tilde{\mathbf{U}}_{\mathbf{P}}^{\top} \tilde{\mathbf{U}}_{\mathbf{A}}$ by the orthogonal matrix $\mathbf{W}^{*}$, thereby yielding
$$ \tilde{\mathbf{U}}_{\mathbf{A}} \tilde{\mathbf{S}}_{\mathbf{A}}^{1/2} - \tilde{\mathbf{U}}_{\mathbf{P}} \tilde{\mathbf{S}}_{\mathbf{P}}^{1/2} \mathbf{W}^{*} = (\mathcal{L}(\mathbf{A}) - \mathcal{L}(\mathbf{P})) \tilde{\mathbf{U}}_{\mathbf{P}} \tilde{\mathbf{S}}_{\mathbf{P}}^{-1/2} \mathbf{W}^{*} + O_{\mathbb{P}}((n \rho_n)^{-1}).$$
As $\tilde{\mathbf{X}} \tilde{\mathbf{X}}^{\top} = \mathcal{L}(\mathbf{P}) = \tilde{\mathbf{U}}_{\mathbf{P}} \tilde{\mathbf{S}}_{\mathbf{P}}^{1/2} \tilde{\mathbf{U}}_{\mathbf{P}}^{\top}$, we have $\tilde{\mathbf{X}} = \tilde{\mathbf{U}}_{\mathbf{P}} \tilde{\mathbf{S}}_{\mathbf{P}} \tilde{\mathbf{W}}$ for some orthogonal matrix $\tilde{\mathbf{W}}$. 
Therefore, 
\begin{equation*}
\begin{split}
\tilde{\mathbf{U}}_{\mathbf{A}} \tilde{\mathbf{S}}_{\mathbf{A}}^{1/2} - \tilde{\mathbf{X}} \tilde{\mathbf{W}}^{\top} \mathbf{W}^{*} & = 
(\mathcal{L}(\mathbf{A}) - \mathcal{L}(\mathbf{P})) \tilde{\mathbf{U}}_{\mathbf{P}} \tilde{\mathbf{S}}_{\mathbf{P}}^{-1/2} \mathbf{W}^{*} + O_{\mathbb{P}}((n \rho_n)^{-1})\\
& = (\mathcal{L}(\mathbf{A}) - \mathcal{L}(\mathbf{P})) \tilde{\mathbf{U}}_{\mathbf{P}} \tilde{\mathbf{S}}_{\mathbf{P}}^{1/2} \tilde{\mathbf{W}} \tilde{\mathbf{W}}^{\top} \tilde{\mathbf{S}}_{\mathbf{P}}^{-1} \tilde{\mathbf{W}} \tilde{\mathbf{W}}^{\top} \mathbf{W}^{*} + O_{\mathbb{P}}((n \rho_n)^{-1})\\
& = (\mathcal{L}(\mathbf{A}) - \mathcal{L}(\mathbf{P})) \tilde{\mathbf{X}} (\tilde{\mathbf{X}}^{\top} \tilde{\mathbf{X}})^{-1} \tilde{\mathbf{W}}^{\top} \mathbf{W}^{*} + O_{\mathbb{P}}((n \rho_n)^{-1}).
\end{split}
\end{equation*}
Equivalently, 
\begin{equation} 
\label{eq:sketch-1}
\tilde{\mathbf{U}}_{\mathbf{A}} \tilde{\mathbf{S}}_{\mathbf{A}}^{1/2} (\mathbf{W}^{*})^{\top} \tilde{\mathbf{W}} - \tilde{\mathbf{X}} = (\mathcal{L}(\mathbf{A}) - \mathcal{L}(\mathbf{P})) \tilde{\mathbf{X}} (\tilde{\mathbf{X}}^{\top} \tilde{\mathbf{X}})^{-1} + O_{\mathbb{P}}((n \rho_n)^{-1}).
\end{equation}
The right hand side of Eq.~\eqref{eq:sketch-1} can be written explicitly in terms of the entries of $\mathbf{A}$. However, since $\mathcal{L}(\mathbf{A}) = \mathbf{D}^{-1/2} \mathbf{A} \mathbf{D}^{-1/2}$, the entries of the right hand side of Eq.~\eqref{eq:sketch-1} are not linear/affine combinations of the entries of $\mathbf{A}$. Nevertheless, by a Taylor-series expansion of the entries of $\mathbf{D}^{-1/2}$, we have $\mathbf{D}^{-1/2} = \mathbf{T}^{-1/2} + \tfrac{1}{2} \mathbf{T}^{-3/2} (\mathbf{T} - \mathbf{D}) + O_{\mathbb{P}}((n \rho_n)^{-3/2})$. 
Substituting this into Eq.~\eqref{eq:sketch-1} followed by further simplifications yield Eq.~\eqref{eq:LSE-main1}.

\section{Subsequent Inference}
\label{sec:simulations}
In this section we demonstrate how the results of Section~\ref{sec:limit-result-adjacency} and Section~\ref{sec:limit-result-Laplacian} provide insights into subsquent inference. 
We first consider graphs generated according to a stochastic blockmodel with parameters
\begin{equation}
\label{eq:example1}
 \mathbf{B} = \begin{bmatrix} 0.42 & 0.42 \\ 0.42 & 0.5 \end{bmatrix}; \quad \text{and} \quad \pi = (0.6,0.4).
\end{equation}
We sample an adjacency matrix $\mathbf{A}$ for graphs on $n$ vertices from the above model for various choices of $n$. For each adjacency matrix $\mathbf{A}$, 
we compute the normalized Laplacian embedding of $\mathbf{A}$. Figure~\ref{fig:clusplot} presents examples of the scatter plots for these embeddings for 
$n = 1000$, $2000$ and $4000$. The points in the scatter plots are colored according to the block membership of the corresponding vertices in the blockmodel. For each block, we also plot the ellipses showing the empirical (dashed lines) and theoretical (solid lines) $95\%$ level curves for the distribution of $\breve{X}_i$. The theoretical level curves are as specified in Theorem~\ref{THM:NORMALITY-LSE}. 

\begin{figure}[!htp]
  \centering
  \subfloat[$n = 1000$]{
    \includegraphics[width=3.5cm]{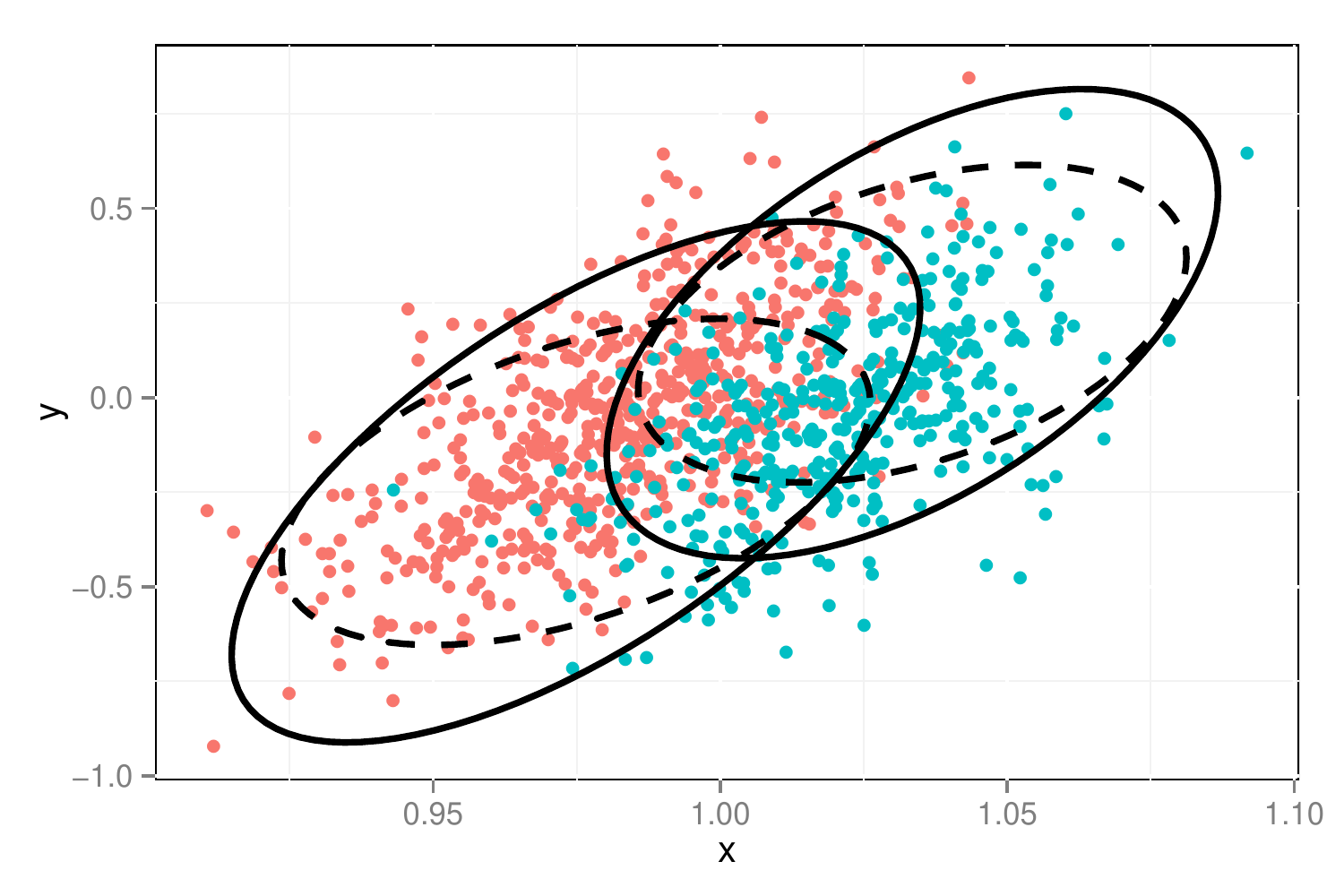}
  }
  \hfil
  \subfloat[$n = 2000$]{
    \includegraphics[width=3.5cm]{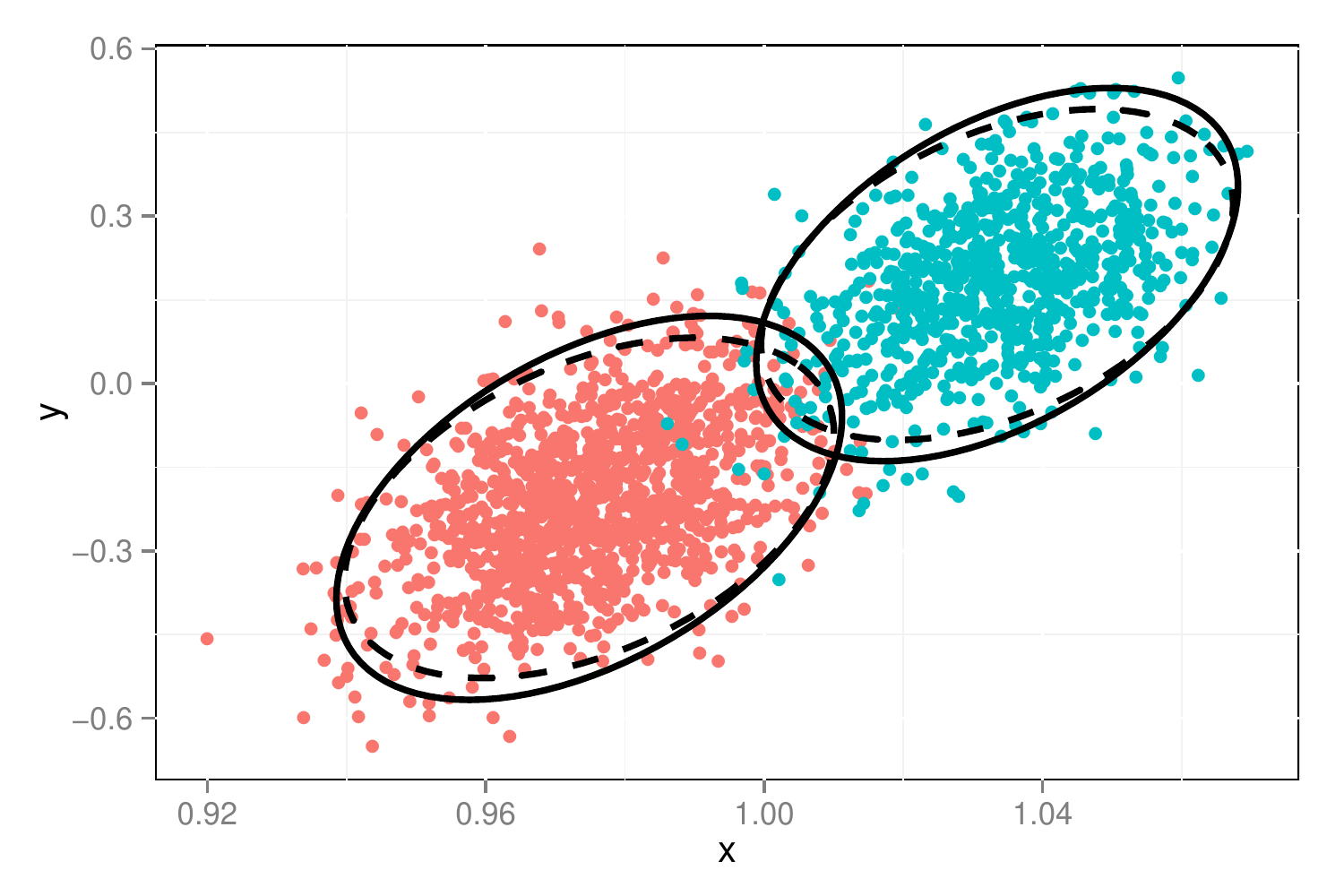}
  }
  \hfil
  \subfloat[$n = 4000$]{
    \includegraphics[width=3.5cm]{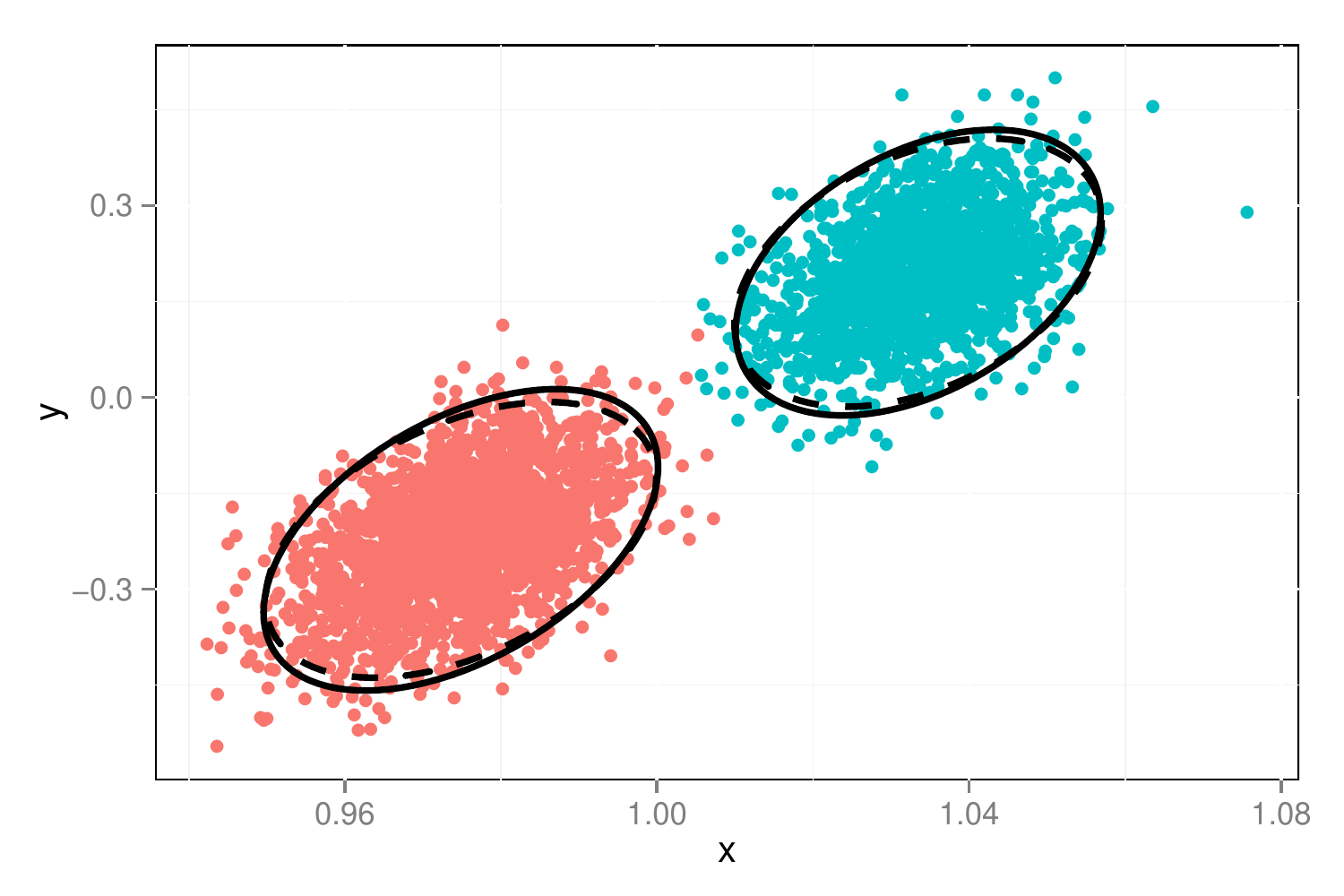}
    \label{fig5:subfig_scan}
  }
  \caption{Plot of the estimated latent positions in a two-block
    stochastic blockmodel graph on $n$ vertices. The
    points are colored according to the blockmembership of the
    corresponding vertices. Dashed ellipses give the 95\% level curves
    for the {\em empirical} distributions.
 Solid ellipses give the 95\% {\em theoretical} level curves for the distributions as specified by Theorem~\ref{THM:NORMALITY-LSE}. }
    \label{fig:clusplot}
\end{figure}
We next investigate the implication of the multivariate normal distribution from Theorem~\ref{THM:NORMALITY-LSE} on subsequent inference.
Spectral clustering refers to a large class of techniques used in partitionining data points into clusters that proceed by first performing 
a truncated eigendecomposition of a similarity matrix between the data points to obtain a low-dimensional Euclidean representation of these data points followed by clustering of the data points in this low-dimensional representation; see \cite{von2007tutorial} for a comprehensive introduction. The {\em normalized cuts} algorithm of \cite{shi_malik} is a popular and widely-used instance of spectral clustering where the similarity matrix is a normalized Laplacian matrix and clustering is done using the $K$-means algorithm. 

\begin{figure}[htbp!]
\centering
\includegraphics[width=0.65\textwidth]{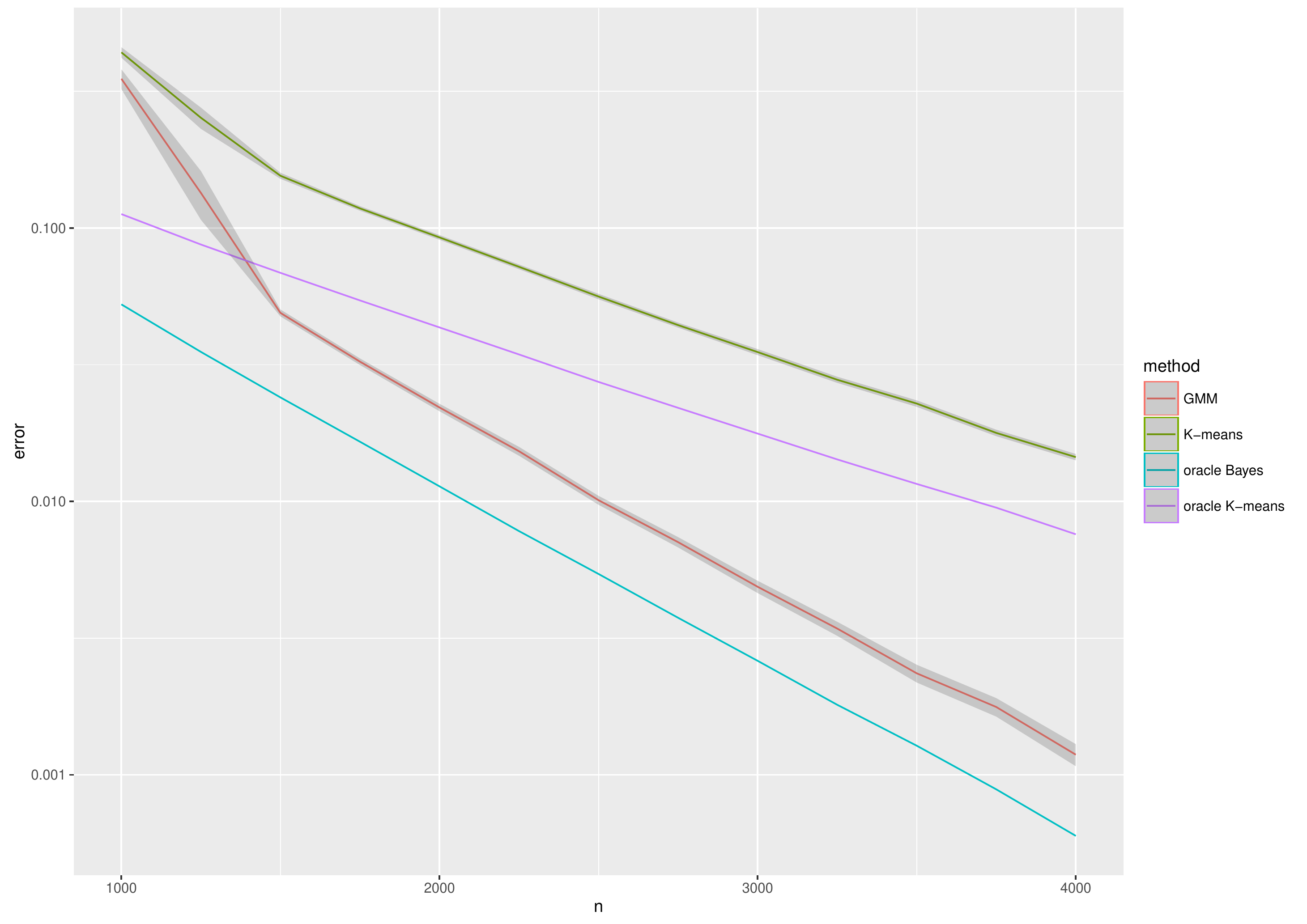}
\caption{Comparison of clustering error rates for Gaussian mixture model (GMM) clustering, $K$-means clustering, linear classifier, and Bayes-optimal classifier. The error rate for each $n \in \{1000, 1250, 1500, \dots, 4000\}$ was obtained by averaging 100 Monte Carlo iterations and are plotted on a $\log_{10}$ scale. The plot indicates that the assumption of a mixture of multivariate normals can yield significant improvement in the accuracy of the spectral clustering procedure.}
\label{fig:GMM-kmeans}
\end{figure}

It was shown in \cite{rohe2011spectral} that the normalized cuts algorithm, i.e., the normalized Laplacian embedding followed by $K$-means, is consistent for estimating the block memberships of stochastic blockmodels graphs. The result of Corollary~\ref{cor:lse_normality_sbm}, however, suggests that $K$-means clustering is suboptimal unless the covariance matrices of the estimated latent positions for the blocks are spherical. We illustrate this by generating sequences of stochastic blockmodel graphs on $n$ vertices with parameters as given in Eq.~\eqref{eq:example1} where $n \in \{1000, 1250, 1500, \dots, 4000\}$. For each graph, we embed its normalized Laplacian matrix into $\mathbb{R}^{2}$ and cluster the embedded vertices via either $K$-means or the MCLUST Gaussian mixture model-based clustering algorithm \cite{fraley99:_mclus}. We then measure the error rate of the clustering solution. The error rates, averaged over $100$ replicates of the experiment, are presented on log-scale in Figure~\ref{fig:GMM-kmeans}. We see that the Gaussian mixture model-based clustering does yield significant improvement over $K$-means clustering. For further comparison, we plot the Bayes-optimal error rate and that of a linear classifier which assign an embedded point to the closest theoretical centroid. The error rate of the linear classifier is computed under the assumption that the rows of the Laplacian spectral embedding are indeed multivariate normal with known covariance matrices and centered around the centroid of the respective blocks; this error rate serves as a lower-bound for that of K-means clustering. 

\subsection{Comparison of ASE and LSE via within-class covariances}
We now discuss a comparison of the use of adjacency spectral embedding and Laplacian spectral embedding for subsequent inference. We consider as our subsequent inference task the problem of recovering the block assignments in stochastic blockmodel graphs.
Our first metric of comparison is the notion of within-block variance for each block of the stochastic blockmodel, following the work of \cite{bickel_sarkar_2013}. We partially extend the results of \cite{bickel_sarkar_2013} for two-block stochastic blockmodels to $K$-block stochastic blockmodels with positive semidefinite block probablity matrices.
However, while the collection of within-block variances is a meaningful surrogate for the performance of our subsequent inference task, we argue that it is not the ``right'' metric as it captures only the trace of the block-conditional covariance matrices and not the form of the block-conditional covariance matrices.
That is to say, the use of the within-block variances as a surrogate measure is similar to the oracle $K$-means lower bound in Figure~\ref{fig:GMM-kmeans}. A more appropriate surrogate is the collection of pairwise Chernoff informations between the block-conditional multivariate normals, which behave similarly to the oracle Bayes lower bound in Figure~\ref{fig:GMM-kmeans}. The discussion of Chernoff information is postponed to the next subsection. 

\begin{definition}[Within-block variances]
\label{defn:metric-bickel-sarkar}
Let $(\mathbf{X}, \mathbf{A}) \sim \mathrm{RDPG}(F)$ with sparsity factor $\rho_n$ where $F = \sum_{k} \pi_{k} \delta_{\nu_k}$ is a mixture of $K$ point masses at $\nu_1, \nu_2, \dots, \nu_K \in \mathbb{R}^{d}$ and $\delta_{\nu_k}$ denotes the Dirac delta function. Given $\mathbf{A}$, let $C_k$ for $k \in \{1,2,\dots,K\}$ denote the set of vertices of $\mathbf{A}$ assigned to block $k$. Recall the definitions of $\mathbf{U}_{\mathbf{A}}$ and $\tilde{\mathbf{U}}_{\mathbf{A}}$ in Definition~\ref{defn:ase-lse}, i.e., $\mathbf{U}_{\mathbf{A}}$ and $\tilde{\mathbf{U}}_{\mathbf{A}}$ are the $n \times d$ matrices containing the $d$ largest eigenvectors of the adjacency matrix and the Laplacian matrix, respectively. For any index $i$, let $\mathbf{U}_{\mathbf{A}}(i,\colon)$ and $\tilde{\mathbf{U}}_{\mathbf{A}}(i, \colon)$ denote the $i$-th row of $\mathbf{U}_{\mathbf{A}}$ and $\tilde{\mathbf{U}}_{\mathbf{A}}(i, \colon)$, respectively. Then for any $k,l \in \{1,2,\dots,K\}$, the ASE variance between block $k$ and block $l$ is defined as
\begin{equation}
\label{eq:within-block-variance-ASE}
\hat{d}_{kl} = \hat{d}_{kl}(\mathbf{A}) = \frac{1}{|C_k|} \sum_{i \in C_k} \|\mathbf{U}_{\mathbf{A}}(i,\colon) - \hat{\mu}_l \|^{2}; \quad \hat{\mu}_l = \frac{1}{|C_l|} \sum_{j \in C_l} \mathbf{U}_{\mathbf{A}}(j, \colon).
\end{equation} 
Similarly, the LSE variance between block $k$ and block $l$ is 
\begin{equation}
\label{eq:within-block-variance-LSE}
\tilde{d}_{kl} = \tilde{d}_{kl}(\mathbf{A}) = \frac{1}{|C_k|} \sum_{i \in C_k} \|\tilde{\mathbf{U}}_{\mathbf{A}}(i,\colon) - \tilde{\mu}_l \|^{2}; \quad \tilde{\mu}_l = \frac{1}{|C_l|} \sum_{j \in C_k} \tilde{\mathbf{U}}_{\mathbf{A}}(j, \colon).
\end{equation} 
When $k = l$, $\hat{d}_{kk}$ and $\tilde{d}_{kk}$ are refered to as the ASE within-block variance for block $k$ and the LSE within-block variance for block $k$, respectively. 
\end{definition}

We then have the following large-sample limit results for $\hat{d}_{kl}$ and $\tilde{d}_{kl}$. Their proofs are similar to those of Theorem~\ref{THM:ASE} and Theorem~\ref{THM:LSE} and therefore will be omitted. Nevertheless, we verify in Section~\ref{sec:within-variance-appendix} of the appendix that Theorem~\ref{THM:between-ASE} and Theorem~\ref{THM:between-LSE} are indeed generalizations of Theorem~3.1 and Theorem~3.2 from \cite{bickel_sarkar_2013}. We emphasize that neither Theorem~\ref{THM:between-ASE} nor Theorem~\ref{THM:between-LSE} assume distinct eigenvalues of the matrix $\mathbf{X} \mathbf{X}^{\top}$ or $\mathcal{L}(\mathbf{X} \mathbf{X}^{\top})$; distinct eigenvalues is a necessary assumption used in the proofs of Theorem~3.1 and Theorem~3.2 in \cite{bickel_sarkar_2013} (see Section~8 of the cited paper). 
\begin{theorem}
\label{THM:between-ASE}
Assume the setting and notations of Theorem~\ref{THM:ASE} and suppose furthermore that $F = \sum_{k} \pi_{k} \delta_{\nu_k}$ is a mixture of $K$ distinct point masses at $\nu_1, \nu_2, \dots, \nu_K \in \mathbb{R}^{d}$. Let $\mathbf{U}_{\mathbf{P}_n}$ denote the $n \times d$ matrix whose columns are the orthonormal eigenvectors corresponding to the non-zero eigenvalues of the matrix $\mathbf{P}_n = \rho_n \mathbf{X}_n \mathbf{X}_n^{\top}$. For any $k \in \{1,2,\dots,K\}$, let $\mathbf{S}_{k}$ be the $n \times n$ diagonal matrix with diagonal entries $(s_{k}(1), s_{k}(2), \dots, s_{k}(n))$ such that $s_{k}(i) = 1$ if $X_i = \nu_k$ and $s_{k}(i) = 0$ otherwise. We then have, for any $k \in \{1,2,\dots,K\}$
\begin{equation}
\begin{split}
n^2 \hat{d}_{kk} &= \frac{n^2}{|C_k|} \|\mathbf{S}_k (\mathbf{U}_{\mathbf{A}_n} \mathbf{W}_n - \mathbf{U}_{\mathbf{P}_n}) \|_{F}^{2} + o_{\mathbb{P}}(1) 
\\ &= \frac{n^2}{|C_k|} \|\mathbf{S}_{k} (\mathbf{A}_n - \mathbf{P}_n) \mathbf{X}_n (\mathbf{X}_n^{\top} \mathbf{X}_n)^{-3/2} \|_{F}^{2} + o_{\mathbb{P}}(1). 
\end{split}
\end{equation}
Therefore, if $\rho_n \equiv 1$, then for any $k \in \{1,2,\dots,K\}$
\begin{equation}
\label{eq:d_kk1}
n^2 \hat{d}_{kk} \overset{\mathrm{a.s.}}{\longrightarrow} \mathrm{tr} \,\, \Delta^{-3} 
\mathbb{E}[X_1 X_1^{\top} (\nu_k^{\top} X_1 - \nu_k^{\top} X_1 X_1^{\top} \nu_k)]
\end{equation}
as $n \rightarrow \infty$. If, however, $\rho_n \rightarrow 0$ and $n \rho_n = \omega(\log^{4}(n))$, then
\begin{equation}
\label{eq:d_kk2}
n^2 \hat{d}_{kk} \overset{\mathrm{a.s.}}{\longrightarrow} \mathrm{tr} \,\, \Delta^{-3} 
\mathbb{E}[X_1 X_1^{\top} \nu_k^{\top} X_1]
\end{equation}
as $n \rightarrow \infty$.
\end{theorem}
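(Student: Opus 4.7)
The plan is to follow the three-stage strategy used for Theorem~\ref{THM:ASE} and sketched for Theorem~\ref{THM:LSE}: first reduce $\hat{d}_{kk}$ to a Frobenius-norm residual by absorbing the empirical centroid, next expand $\mathbf{U}_{\mathbf{A}_n}\mathbf{W}_n - \mathbf{U}_{\mathbf{P}_n}$ to first order in $\mathbf{A}_n - \mathbf{P}_n$ and rewrite everything in terms of $\mathbf{X}_n$, and finally compute the limit using a conditional-expectation computation plus Bernstein/Hanson--Wright concentration.

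For the first equality, note that $\mathbf{P}_n = \rho_n\mathbf{X}_n\mathbf{X}_n^\top$ has identical rows on each block $C_k$, so the rows of $\mathbf{U}_{\mathbf{P}_n}$ indexed by $C_k$ all equal some common vector $\check{u}_k$. The scatter $\hat{d}_{kk}$ is invariant under the right-multiplication $\mathbf{W}_n$, and the standard Pythagoras identity $\sum_{i\in C_k}\|v_i-\bar v\|^2 = \sum_{i\in C_k}\|v_i - \check{u}_k\|^2 - |C_k|\|\bar v - \check{u}_k\|^2$ (applied with $v_i = (\mathbf{U}_{\mathbf{A}_n}\mathbf{W}_n)(i,\colon)$ and $\bar v = \hat{\mu}_k\mathbf{W}_n$) yields
\[
\hat{d}_{kk} = |C_k|^{-1}\|\mathbf{S}_k(\mathbf{U}_{\mathbf{A}_n}\mathbf{W}_n - \mathbf{U}_{\mathbf{P}_n})\|_F^2 - \|\hat{\mu}_k\mathbf{W}_n - \check{u}_k\|^2.
\]
The centroid defect $\hat{\mu}_k\mathbf{W}_n - \check{u}_k$ is an average of $|C_k|$ residual rows that aggregate disjoint noise sources from $\mathbf{A}_n - \mathbf{P}_n$, and a CLT-style cancellation shrinks its squared norm by a factor of $|C_k|$ relative to the typical per-row squared norm. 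Multiplying by $n^2$ shows this term contributes only $o_\mathbb{P}(1)$, establishing the first equality.

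For the second equality, I would invoke the same matrix perturbation argument sketched in Section~3 for the Laplacian. Because $\|\mathbf{A}_n-\mathbf{P}_n\| = O_\mathbb{P}((n\rho_n)^{1/2})$ while the $d$ non-zero eigenvalues of $\mathbf{P}_n$ are of order $\Theta(n\rho_n)$, Davis--Kahan produces an orthogonal $\mathbf{W}_n$ with $\mathbf{U}_{\mathbf{P}_n}^\top\mathbf{U}_{\mathbf{A}_n} = \mathbf{W}_n + O_\mathbb{P}((n\rho_n)^{-1})$, and the standard Sylvester identity for the perturbed eigenspace gives
\[
\mathbf{U}_{\mathbf{A}_n}\mathbf{W}_n - \mathbf{U}_{\mathbf{P}_n} = (\mathbf{A}_n - \mathbf{P}_n)\mathbf{U}_{\mathbf{P}_n}\mathbf{S}_{\mathbf{P}_n}^{-1} + O_\mathbb{P}((n\rho_n)^{-1}).
\]
Writing the compact SVD $\rho_n^{1/2}\mathbf{X}_n = \mathbf{U}_{\mathbf{P}_n}\mathbf{S}_{\mathbf{P}_n}^{1/2}\tilde{\mathbf{W}}$ one checks that $\mathbf{U}_{\mathbf{P}_n}\mathbf{S}_{\mathbf{P}_n}^{-1} = \rho_n^{-1}\mathbf{X}_n(\mathbf{X}_n^\top\mathbf{X}_n)^{-3/2}\tilde{\mathbf{W}}^\top$, and Frobenius invariance under the right-multiplication $\tilde{\mathbf{W}}^\top$ removes the orthogonal factor, yielding the claimed identity.

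For the limit, I would condition on $\mathbf{X}_n$. The Frobenius norm is then a quadratic form in the independent mean-zero Bernoullis $\{a_{ij} - p_{ij}\}_{i<j}$, whose off-diagonal cross terms vanish under the conditional expectation, leaving
\[
\mathbb{E}\bigl[\tfrac{n^2}{|C_k|}\|\mathbf{S}_k(\mathbf{A}_n - \mathbf{P}_n)\mathbf{X}_n(\mathbf{X}_n^\top\mathbf{X}_n)^{-3/2}\|_F^2 \,\big|\, \mathbf{X}_n\bigr] = \tfrac{n^2}{|C_k|}\,\mathrm{tr}\,(\mathbf{X}_n^\top\mathbf{X}_n)^{-3}\mathbf{X}_n^\top \mathbf{M}_k\mathbf{X}_n,
\]
where $\mathbf{M}_k = \mathrm{diag}\bigl(\sum_{i\in C_k}p_{ij}(1-p_{ij})\bigr)_{j=1}^n$. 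The strong law gives $\mathbf{X}_n^\top\mathbf{X}_n/n\to\Delta$ a.s., and concentration of the diagonal entries of $\mathbf{M}_k$ around $|C_k|\cdot\rho_n\mathbb{E}[\nu_k^\top X_1 - \rho_n(\nu_k^\top X_1)^2]$ then reproduces Eq.~\eqref{eq:d_kk1} in the dense case and Eq.~\eqref{eq:d_kk2} in the sparse case (the $\rho_n^2$ term vanishing). A Hanson--Wright-type concentration bound removes the conditional-expectation approximation. The main obstacle is the perturbation step: avoiding the distinct-eigenvalues assumption of \cite{bickel_sarkar_2013} forces one to work with the orthogonal projector $\mathbf{U}_{\mathbf{A}_n}\mathbf{U}_{\mathbf{A}_n}^\top$ rather than with individual eigenvectors, then align to $\mathbf{U}_{\mathbf{P}_n}$ via an orthogonal $\mathbf{W}_n$ chosen from the polar decomposition of $\mathbf{U}_{\mathbf{P}_n}^\top\mathbf{U}_{\mathbf{A}_n}$, exactly as in the sketch of Theorem~\ref{THM:LSE}.
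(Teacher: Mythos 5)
Your overall route is the one the paper intends (the proof of this theorem is omitted there precisely because it recycles the machinery of Theorems~\ref{THM:ASE} and~\ref{THM:LSE}): reduce the scatter $\hat{d}_{kk}$ to $\|\mathbf{S}_k(\mathbf{U}_{\mathbf{A}_n}\mathbf{W}_n-\mathbf{U}_{\mathbf{P}_n})\|_F^2$ via the Pythagoras identity and a negligible-centroid argument, linearize $\mathbf{U}_{\mathbf{A}_n}\mathbf{W}_n-\mathbf{U}_{\mathbf{P}_n}$ in $\mathbf{A}_n-\mathbf{P}_n$ using the projector/polar-decomposition alignment of Proposition~\ref{prop:2} (thereby avoiding the distinct-eigenvalues assumption of Bickel--Sarkar), and evaluate the conditional expectation of the resulting quadratic form before concentrating (the paper would invoke Theorem~\ref{THM:log_Sobolev} rather than Hanson--Wright, which is cosmetic). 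In the dense case $\rho_n\equiv 1$ your computation is correct and yields~\eqref{eq:d_kk1}.

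The gap is in the sparse case, where the sparsity factor is not tracked consistently. You correctly derive $\mathbf{U}_{\mathbf{P}_n}\mathbf{S}_{\mathbf{P}_n}^{-1}=\rho_n^{-1}\mathbf{X}_n(\mathbf{X}_n^{\top}\mathbf{X}_n)^{-3/2}\tilde{\mathbf{W}}^{\top}$, but then silently drop the $\rho_n^{-1}$ when matching to the displayed identity. Moreover, in your expectation step the diagonal of $\mathbf{M}_k$ is $\sum_{i\in C_k}p_{ij}(1-p_{ij})=|C_k|\,\rho_n\nu_k^{\top}X_j(1-\rho_n\nu_k^{\top}X_j)$, so
\[
\frac{n^2}{|C_k|}\,\mathrm{tr}\,(\mathbf{X}_n^{\top}\mathbf{X}_n)^{-3}\mathbf{X}_n^{\top}\mathbf{M}_k\mathbf{X}_n \longrightarrow \rho_n\,\mathrm{tr}\,\Delta^{-3}\,\mathbb{E}\bigl[X_1X_1^{\top}(\nu_k^{\top}X_1-\rho_n\nu_k^{\top}X_1X_1^{\top}\nu_k)\bigr],
\]
which tends to $0$ when $\rho_n\to 0$ and therefore does not ``reproduce''~\eqref{eq:d_kk2} as you claim; carrying the $\rho_n^{-2}$ from the linearization through instead gives a quantity of order $\rho_n^{-1}$. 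The correct sparse-regime normalization is thus $n^2\rho_n\hat{d}_{kk}$ (equivalently, the second line of the theorem's first display needs a $\rho_n^{-2}$), and your argument as written cannot establish the displayed identity and~\eqref{eq:d_kk2} simultaneously. Relatedly, your Cauchy--Schwarz control of the residual and centroid-defect contributions ($\|\mathbf{S}_k\mathbf{R}_n\|_F^2\le\|\mathbf{R}_n\|_F^2=O_{\mathbb{P}}((n\rho_n)^{-2})$ against a leading term of order $\rho_n/n$ before renormalization) is only $o_{\mathbb{P}}$ of the leading term when $n\rho_n^{3}\to\infty$, so the sparse case needs either a sharper row-wise bound on $\mathbf{R}_n$ (in the spirit of the exchangeability argument in Section~\ref{sec:proof-LSE-1}) or an explicit restriction on how fast $\rho_n$ may decay. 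None of this affects the $\rho_n\equiv 1$ case, which is the one verified against Bickel--Sarkar in Section~\ref{sec:within-variance-appendix}.
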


For the $\tilde{d}_{kl}$, we have the following result.
\begin{theorem}
\label{THM:between-LSE}
Assume the setting and notations of Theorem~\ref{THM:LSE} and suppose furthermore that $F = \sum_{k} \pi_{k} \delta_{\nu_k}$ is a mixture of $K$ distinct point masses at $\nu_1, \nu_2, \dots, \nu_K \in \mathbb{R}^{d}$. Let $\tilde{\mathbf{U}}_{\mathbf{P}_n}$ denote the $n \times d$ matrix whose columns are the orthonormal eigenvectors corresponding to the non-zero eigenvalues of the matrix $\mathcal{L}(\mathbf{P}_n) = \mathcal{L}(\rho_n \mathbf{X}_n \mathbf{X}_n^{\top}) = \mathcal{L}(\mathbf{X}_n \mathbf{X}_n^{\top})$. For any $k \in \{1,2,\dots,K\}$, let $\mathbf{S}_{k}$ be the $n \times n$ diagonal matrix with diagonal entries $(s_{k}(1), s_{k}(2), \dots, s_{k}(n))$ such that $s_{k}(i) = 1$ if $X_i = \nu_k$ and $s_{k}(i) = 0$ otherwise. We then have, for any $k \in \{1,2,\dots,K\}$
\begin{equation}
\begin{split}
n^2 \tilde{d}_{kk} &= \frac{n^2}{|C_k|} \|\mathbf{S}_k (\tilde{\mathbf{U}}_{\mathbf{A}_n} \mathbf{W}_n - \tilde{\mathbf{U}}_{\mathbf{P}_n}) \|_{F}^{2} + o_{\mathbb{P}}(1) \\
&= \frac{n^2}{|C_k|} \|\mathbf{S}_{k} \mathbf{M}_1 (\tilde{\mathbf{X}}_n^{\top} \tilde{\mathbf{X}}_n)^{-3/2}
  +  \frac{1}{2} \mathbf{S}_{k} \mathbf{M}_2  (\tilde{\mathbf{X}}_n^{\top} \tilde{\mathbf{X}}_n)^{-1/2}\|_{F}^{2} + o_{\mathbb{P}}(1)
\end{split}
\end{equation}
where $\mathbf{M}_1$ and $\mathbf{M}_2$ are defined as
\begin{gather}
\mathbf{M}_1 = \mathbf{T}_n^{-1/2} (\mathbf{A}_n - \mathbf{P}_n) \mathbf{T}_n^{-1/2} \tilde{\mathbf{X}}_n \\
\mathbf{M}_2 = \mathbf{T}_n^{-1/2} (\mathbf{T}_n - \mathbf{D}_n) \mathbf{T}_n^{-1/2} \tilde{\mathbf{X}}_n .
\end{gather}
Therefore, if $\rho_n \equiv 1$, then for any $k \in \{1,2,\dots,K\}$
\begin{equation}
\label{eq:d_kk_tilde1}
n^2 \tilde{d}_{kk} \overset{\mathrm{a.s.}}{\longrightarrow} 
 \mathrm{tr} \,\, \tilde{\Delta}^{-3} \mathbb{E}\Bigl[ \Bigl(\frac{X_1}{X_1^{\top} \mu} - \frac{\tilde{\Delta} \nu_k}{2 \nu_k^{\top} \mu}\Bigr) \Bigl(\frac{X_1^{\top}}{X_1^{\top} \mu} - \frac{\nu_k^{\top} \tilde{\Delta}}{2 \nu_k^{\top} \mu}\Bigr) \frac{(\nu_k^{\top} X_1 - \nu_k^{\top} X_1 X_1^{\top} \nu_k)}{\nu_k^{\top} \mu} \Bigr]
\end{equation}
as $n \rightarrow \infty$. If, however, $\rho_n \rightarrow 0$ and $n \rho_n = \omega(\log^{4}(n))$, then
\begin{equation}
\label{eq:d_kk_tilde2}
n^2 \tilde{d}_{kk} \overset{\mathrm{a.s.}}{\longrightarrow} 
 \mathrm{tr} \,\, \tilde{\Delta}^{-3} \mathbb{E}\Bigl[ \Bigl(\frac{X_1}{X_1^{\top} \mu} - \frac{\tilde{\Delta} \nu_k}{2 \nu_k^{\top} \mu}\Bigr) \Bigl(\frac{X_1^{\top}}{X_1^{\top} \mu} - \frac{\nu_k^{\top} \tilde{\Delta}}{2 \nu_k^{\top} \mu}\Bigr) \frac{\nu_k^{\top} X_1}{\nu_k^{\top} \mu} \Bigr]
\end{equation}
as $n \rightarrow \infty$.
\end{theorem}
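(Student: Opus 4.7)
The plan is to adapt the argument behind the proof of Theorem~\ref{THM:LSE} to the raw eigenvector matrix $\tilde{\mathbf{U}}_{\mathbf{A}_n}$ in place of the scaled embedding $\breve{\mathbf{X}}_n$, combined with a centroid-replacement step that converts the empirical centroid $\tilde{\mu}_k$ into the unobserved population centroid. First, note that the rows of $\tilde{\mathbf{X}}_n$ are constant on each block (the common value for $i \in C_k$ being $\nu_k/\sqrt{\sum_{j}\nu_k^{\top}X_j}$), so the rows of $\tilde{\mathbf{U}}_{\mathbf{P}_n}$ are also constant on each block up to a single orthogonal rotation that can be absorbed into $\mathbf{W}_n$. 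Applying the row-wise CLT of Theorem~\ref{THM:NORMALITY-LSE} together with an elementary variance bound on the average of $|C_k| = \Theta(n)$ approximately independent centered row errors yields $\|\tilde{\mu}_k - \tilde{\mu}_k^{\star}\|^2 = O_{\mathbb{P}}(n^{-3})$, where $\tilde{\mu}_k^{\star}$ denotes the common row of $\tilde{\mathbf{U}}_{\mathbf{P}_n}\mathbf{W}_n^{\top}$ for $i \in C_k$. Expanding the sum of squared deviations around the two centroids and cancelling the cross term gives
\[
n^{2}\tilde{d}_{kk} \;=\; \frac{n^{2}}{|C_k|}\,\|\mathbf{S}_k(\tilde{\mathbf{U}}_{\mathbf{A}_n}\mathbf{W}_n - \tilde{\mathbf{U}}_{\mathbf{P}_n})\|_F^{2} \;+\; o_{\mathbb{P}}(1),
\]
which is the first claimed identity.

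The second step converts the master expansion of Theorem~\ref{THM:LSE}, namely $\breve{\mathbf{X}}_n\mathbf{W}_n - \tilde{\mathbf{X}}_n = \mathbf{M}_1(\tilde{\mathbf{X}}_n^{\top}\tilde{\mathbf{X}}_n)^{-1} + \tfrac{1}{2}\mathbf{M}_2 + \mathbf{R}_n$, into the corresponding statement for eigenvector matrices. Using the factorizations $\breve{\mathbf{X}}_n = \tilde{\mathbf{U}}_{\mathbf{A}_n}\tilde{\mathbf{S}}_{\mathbf{A}_n}^{1/2}$ and $\tilde{\mathbf{X}}_n = \tilde{\mathbf{U}}_{\mathbf{P}_n}\tilde{\mathbf{S}}_{\mathbf{P}_n}^{1/2}\tilde{\mathbf{W}}$ for a suitable orthogonal $\tilde{\mathbf{W}}$, together with the Weyl-type estimate $\tilde{\mathbf{S}}_{\mathbf{A}_n}^{-1/2} = \tilde{\mathbf{S}}_{\mathbf{P}_n}^{-1/2} + O_{\mathbb{P}}((n\rho_n)^{-3/2})$ and the fact that the non-zero eigenvalues of $\tilde{\mathbf{X}}_n^{\top}\tilde{\mathbf{X}}_n$ are precisely the diagonal of $\tilde{\mathbf{S}}_{\mathbf{P}_n}$, right-multiplication by $\tilde{\mathbf{S}}_{\mathbf{A}_n}^{-1/2}$ and absorption of the resulting orthogonal rotations into $\mathbf{W}_n$ produce
\[
\tilde{\mathbf{U}}_{\mathbf{A}_n}\mathbf{W}_n - \tilde{\mathbf{U}}_{\mathbf{P}_n} \;=\; \mathbf{M}_1(\tilde{\mathbf{X}}_n^{\top}\tilde{\mathbf{X}}_n)^{-3/2} \;+\; \tfrac{1}{2}\mathbf{M}_2(\tilde{\mathbf{X}}_n^{\top}\tilde{\mathbf{X}}_n)^{-1/2} \;+\; \mathbf{R}_n^{\prime},
\]
with $\|\mathbf{R}_n^{\prime}\|_F = o_{\mathbb{P}}(n^{-1})$. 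This supplies the second equality in the theorem.

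The final step evaluates the almost-sure limit. Expanding the squared Frobenius norm of the left multiplication by $\mathbf{S}_k$ gives three pieces: the $\mathbf{M}_1\mathbf{M}_1^{\top}$ self-term, the $\mathbf{M}_2\mathbf{M}_2^{\top}$ self-term, and the cross term. Conditional on $\mathbf{X}_n$, each entry of $\mathbf{M}_1$ and $\mathbf{M}_2$ is a linear combination of the independent centered $a_{ij}-p_{ij}$, so products of these entries have conditional expectations that survive only along coincident index pairs $\{i,j\}$. Combining this with $\tilde{\mathbf{X}}_n^{\top}\tilde{\mathbf{X}}_n \overset{\mathrm{a.s.}}{\to} \tilde{\Delta}$, $t_i/(n\rho_n) \overset{\mathrm{a.s.}}{\to} X_i^{\top}\mu$, and $|C_k|/n \to \pi_k$, the strong law of large numbers applied over $i \in C_k$ and $j \in [n]$ collapses the three pieces into the single outer-product expectation of \eqref{eq:d_kk_tilde1} after substituting $X_i = \nu_k$. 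The sparse regime $\rho_n \to 0$ is handled identically, except that the Bernoulli variance $p_{ij}(1-p_{ij})$ simplifies to $p_{ij}$ to leading order; this erases the subtracted $-\nu_k^{\top}X_1X_1^{\top}\nu_k$ contribution and yields \eqref{eq:d_kk_tilde2}.

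The main technical obstacle, in my view, will be the orthogonal bookkeeping in step two: the optimal $\mathbf{W}_n$ for the embedding in Theorem~\ref{THM:LSE} and the optimal $\mathbf{W}_n$ realizing the eigenvector alignment are related but distinct, and the translation between them requires a commutation-style estimate of the same flavor as the one used in the sketch following Theorem~\ref{THM:LSE} (namely that a near-Procrustean orthogonal matrix $\mathbf{W}^{\star}$ approximately commutes with both $\tilde{\mathbf{S}}_{\mathbf{A}_n}^{-1/2}$ and $\tilde{\mathbf{S}}_{\mathbf{P}_n}^{-1/2}$). Its remainder must be controlled to order $o_{\mathbb{P}}(n^{-1})$ uniformly across the $|C_k|$ rows, and verifying that this uniformity survives post-multiplication by $(\tilde{\mathbf{X}}_n^{\top}\tilde{\mathbf{X}}_n)^{-3/2}$ rather than $(\tilde{\mathbf{X}}_n^{\top}\tilde{\mathbf{X}}_n)^{-1}$ is the most delicate bookkeeping point.
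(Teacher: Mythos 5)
Your three-step architecture is the right one, and for the dense regime $\rho_n \equiv 1$ it is essentially the argument the paper intends (the paper omits this proof, saying only that it parallels those of Theorems~\ref{THM:ASE} and~\ref{THM:LSE}): the bias--variance split around the two centroids with an exactly vanishing cross term, the conversion of Eq.~\eqref{eq:LSE-main1} into an expansion for $\tilde{\mathbf{U}}_{\mathbf{A}_n}$ itself via Proposition~\ref{prop:2} and the commutation estimate~\eqref{eq:lem2c}, and the concentration-plus-LLN evaluation of the expectation all go through. Two corrections there. First, the remainder this machinery actually yields is $\|\mathbf{R}_n'\|_F = O_{\mathbb{P}}((n\rho_n)^{-1})$, not $o_{\mathbb{P}}(n^{-1})$; this still suffices when $\rho_n \equiv 1$ because the main term is $O_{\mathbb{P}}((n\rho_n)^{-1/2})$ in Frobenius norm, so the cross term in the squared norm contributes $\tfrac{n^2}{|C_k|} O_{\mathbb{P}}((n\rho_n)^{-3/2}) = O_{\mathbb{P}}(n^{-1/2})$. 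Second, the ``uniformity over the $|C_k|$ rows'' you flag as the delicate point is a non-issue: everything is controlled globally via $\|\mathbf{S}_k \mathbf{R}_n'\|_F \leq \|\mathbf{R}_n'\|_F$ and $\|\tfrac{1}{|C_k|}\sum_{i \in C_k} r_i'\| \leq |C_k|^{-1/2}\|\mathbf{R}_n'\|_F$, so no row-wise bound is ever needed.

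The genuine gap is the sparse regime, which you claim is ``handled identically.'' It is not. Every second moment in your final step carries the factor $\mathrm{Var}(a_{ij}) = p_{ij}(1-p_{ij}) \approx \rho_n X_i^{\top}X_j$, so the diagonal entries of $\mathbb{E}\bigl[\mathbf{T}_n^{-1/2}(\mathbf{A}_n-\mathbf{P}_n)\mathbf{T}_n^{-1/2}\mathbf{S}_k\mathbf{T}_n^{-1/2}(\mathbf{A}_n-\mathbf{P}_n)\mathbf{T}_n^{-1/2}\bigr]$ scale so that $\tfrac{n^2}{|C_k|}\|\mathbf{S}_k\mathbf{M}_1(\tilde{\mathbf{X}}_n^{\top}\tilde{\mathbf{X}}_n)^{-3/2}\|_F^2$ is of order $\rho_n^{-1}$, not of order $1$. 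The quantity that converges to the right-hand side of~\eqref{eq:d_kk_tilde2} is therefore $n^2\rho_n\tilde{d}_{kk}$ rather than $n^2\tilde{d}_{kk}$ --- exactly mirroring the passage from the normalization $n\|\cdot\|_F^2$ in~\eqref{eq:4} to $n\rho_n\|\cdot\|_F^2$ in~\eqref{eq:14} --- so your computation cannot produce the statement as written, and you should either carry the $\rho_n$ explicitly (and note the discrepancy with the stated normalization) or restrict to $\rho_n \equiv 1$. Relatedly, your remainder bounds do not close under the full assumption $n\rho_n = \omega(\log^4 n)$: the cross term with $\mathbf{R}_n'$ is $O_{\mathbb{P}}(n(n\rho_n)^{-3/2})$ and the centroid contribution of $\mathbf{R}_n'$ is $O_{\mathbb{P}}(n(n\rho_n)^{-2})$, neither of which is $o_{\mathbb{P}}(1)$ when, say, $n\rho_n = \log^5 n$; handling these requires the finer exchangeability-based control of the individual rows $r_i'$ that the paper uses in the proof of Theorem~\ref{THM:NORMALITY-LSE}, not the crude Cauchy--Schwarz bound.
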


\begin{remark}
We note that the $\hat{d}_{kl}$ and $\tilde{d}_{kl}$ are defined in terms of $\mathbf{U}_{\mathbf{A}}$
and $\tilde{\mathbf{U}}_{\mathbf{A}}$ and not in terms of $\hat{\mathbf{X}} = \mathbf{U}_{\mathbf{A}} \mathbf{S}_{\mathbf{A}}^{1/2}$ and $\breve{\mathbf{X}} = \tilde{\mathbf{U}}_{\mathbf{A}} \tilde{\mathbf{S}}_{\mathbf{A}}^{1/2}$. This is because $\|\mathbf{S}_{\mathbf{A}}^{1/2}\| \gg \|\tilde{\mathbf{S}}_{\mathbf{A}}^{1/2}\|$. In addition, as we alluded to previously, the $\hat{d}_{kl}$ and $\tilde{d}_{kl}$ do not explicitly take into account the structure of the block-conditional covariance matrices; instead they measure only the average Euclidean distance of a point to its block-conditional cluster centroid -- this coincides with taking the trace of the covariance matrices. Therefore, the $\hat{d}_{kl}$ and $\tilde{d}_{kl}$ serve as a surrogate only for the performance of the $K$-means $\circ \, \mathrm{ASE}$ and $K$-means $\circ \, \mathrm{LSE}$ procedures for recovering block assignments. As Figure~\ref{fig:GMM-kmeans} illustrates, the $K$-means $\circ \, \mathrm{ASE}$ and $K$-means $\circ \, \mathrm{LSE}$ procedures do not yield the {\em optimal} error rate for the inference task at hand. That is to say, the within-block variances cannot be use to compare the ASE and LSE for subsequent inference in a way that is {\em independent} of the clustering procedure used. Roughly speaking, what we want is to be able to compare, for a given stochastic blockmodel graph $\mathbf{A}$, the large-sample error rate of $\inf_{T} T \circ \mathrm{ASE}$ versus the large-sample error rate of $\inf_{T'} T' \circ \mathrm{LSE}$; here $T$ and $T'$ range over all possible transformations and clusterings procedure. This comparison is facilitated by the limit results of Corollary~\ref{cor:ase_normality_sbm} and Corollary~\ref{cor:lse_normality_sbm} and the notion of the Chernoff information.
 \end{remark}

\subsection{Chernoff Information}
\label{sec:chernoff}
Let $F_0$ and $F_1$ be two absolutely continuous multivariate distributions in $\Omega = \mathbb{R}^{d}$ with density functions $f_0$ and $f_1$, respectively. Suppose that $Y_1, Y_2, \dots, Y_m$ are independent and identically distributed random variables, with $Y_i$ distributed either $F_0$ or $F_1$. We are interested in testing the simple null hypothesis $\mathbb{H}_0 \colon F = F_0$ against the simple alternative hypothesis $\mathbb{H}_1 \colon F = F_1$. A test $T$ can be viewed as a sequence of mappings $T_m \colon \Omega^{m} \mapsto \{0,1\}$ such that given $Y_1 = y_1, Y_2 = y_2, \dots, Y_m = y_m$, the test rejects $\mathbb{H}_0$ in favor of $\mathbb{H}_1$ if $T_m(y_1, y_2, \dots, y_m) = 1$; similarly, the test favors $\mathbb{H}_0$ if $T_m(y_1, y_2, \dots, y_m) = 0$. 

The Neyman-Pearson lemma states that, given $Y_1 = y_1, Y_2 = y_2, \dots, Y_m = y_m$ and a threshold $\eta_m \in \mathbb{R}$, the likelihood ratio test which rejects $\mathbb{H}_0$ in favor of $\mathbb{H}_1$ whenever
$$ \Bigl(\sum_{i=1}^{m} \log{f_0(y_i)} - \sum_{i=1}^{m} \log{f_1(y_i)} \Bigr) \leq \eta_m $$
is the most powerful test at significance level $\alpha_m = \alpha(\eta_m)$, i.e., the likelihood ratio test minimizes the type-II error $\beta_m$ subject to the contrainst that the type-I error is at most $\alpha_m$. 

Assuming that $\pi \in (0,1)$ is a prior probability that $\mathbb{H}_0$ is true. Then, for a given $\alpha_m^{*} \in (0,1)$, let $\beta_m^{*} = \beta_m^{*}(\alpha_m^{*})$ be the type-II error associated with the likelihood ratio test when the type-I error is at most $\alpha_m^{*}$. The quantity $\inf_{\alpha_m^{*} \in (0,1)} \pi \alpha_m^{*} + (1 - \pi) \beta_m^{*}$ is then the Bayes risk in deciding between $\mathbb{H}_0$ and $\mathbb{H}_1$ given the $m$ independent random variables $Y_1, Y_2, \dots, Y_m$. A classical result of Chernoff \cite{chernoff_1952,chernoff_1956} states that the Bayes risk is intrinsically linked to a quantity known as the {\em Chernoff information}. More specifically, let $C(F_0, F_1)$ be the quantity
\begin{equation}
\label{eq:chernoff-defn}
\begin{split} C(F_0, F_1) & = - \log \, \Bigl[\, \inf_{t \in (0,1)} \int_{\mathbb{R}^{d}} f_0^{t}(\bm{x}) f_1^{1-t}(\bm{x}) \mathrm{d}\bm{x} \Bigr] \\
&= \sup_{t \in (0,1)} \Bigl[ - \log \int_{\mathbb{R}^{d}} f_0^{t}(\bm{x}) f_1^{1-t}(\bm{x}) \mathrm{d}\bm{x} \Bigr].
\end{split}
\end{equation}
Then we have
\begin{equation}
\label{eq:chernoff-binary}
\begin{split}
\lim_{m \rightarrow \infty} \frac{1}{m} \inf_{\alpha_m^{*} \in (0,1)} \log( \pi \alpha_m^{*} + (1 - \pi) \beta_m^{*}) & = - \, C(F_0, F_1).
\end{split}
\end{equation}
Thus $C(F_0, F_1)$, the Chernoff information between $F_0$ and $F_1$, is the {\em exponential} rate at which the Bayes error $\inf_{\alpha_m^{*} \in (0,1)} \pi \alpha_m^{*} + (1 - \pi) \beta_m^{*}$ decreases as $m \rightarrow \infty$; we note that the Chernoff information is independent of $\pi$. We also define, for a given $t \in (0,1)$ the Chernoff divergence $C_t(F_0, F_1) $ between $F_0$ and $F_1$ by
$$ C_{t}(F_0,F_1) = - \log \int_{\mathbb{R}^{d}} f_0^{t}(\bm{x}) f_1^{1-t}(\bm{x}) \mathrm{d}\bm{x}. $$
The Chernoff divergence is an example of a $f$-divergence as defined in \cite{Csizar,Ali-Shelvey}. When $t = 1/2$, $C_t(F_0,F_1)$ is the Bhattacharyya distance between $F_0$ and $F_1$. As we mentioned previously, any $f$-divergence satisfies the information processing lemma and is invariant with respect to invertible transformations \cite{Liese_Vadja}. Thus any $f$-divergence such as the Kullback-Liebler divergence can also be used to compare the two embedding methods. We chose the Chernoff information mainly because of its explicit relationship with the Bayes risk.

The result of Eq.~\eqref{eq:chernoff-binary} can be extended to $K + 1 \geq 2$ hypotheses. Let $F_0, F_1, \dots, F_{K}$ be distributions on $\mathbb{R}^{d}$ and suppose that $Y_1, Y_2, \dots, Y_m$ are independent and identically distributed random variables with $Y_i$ distributed $F \in \{F_0, F_1, \dots, F_K\}$. We are thus interested in determining the distribution of the $Y_i$ among the $K+1$ hypothesis $\mathbb{H}_0 \colon F = F_0, \dots, \mathbb{H}_{K} \colon F = F_K$. Suppose also that hypothesis $\mathbb{H}_k$ has {\em a priori} probabibility $\pi_k$. Then for any decision rule $\delta$, the risk of $\delta$ is $r(\delta) = \sum_{k} \pi_k \sum_{l \not = k} \alpha_{lk}(\delta) $ where $\alpha_{lk}(\delta)$ is the probability of accepting hypothesis $\mathbb{H}_l$ when hypothesis $\mathbb{H}_k$ is true. Then we have \cite{leang-johnson}
\begin{equation}
\label{eq:chernoff-multiple}
\inf_{\delta} \lim_{m \rightarrow \infty}  \frac{r(\delta)}{m} = - \min_{k \not = l} C(F_k, F_l).
\end{equation}
where the infimum is over all decision rules $\delta$. That is to say, for any $\delta$, $r(\delta)$ decreases to $0$ as $m \rightarrow \infty$ at a rate no faster than $\exp(- m \min_{k \not = l} C(F_k, F_l))$. It was also shown in \cite{leang-johnson} that the {\em Maximum A Posterior} decision rule achieves this rate. 


For this paper, we are interested in computing the Chernoff information $C(F_0,F_1)$ when $F_0$ and $F_1$ are multivariate normals. Suppose $F_0 =  \mathcal{N}(\mu_0, \Sigma_0)$ and $F_1 = \mathcal{N}(\mu_1, \Sigma_1)$; then, denoting by $\Sigma_t = t \Sigma_0 + (1 - t) \Sigma_1$, we have 
\begin{equation*}
C(F_0, F_1) = \sup_{t \in (0,1)} \Bigl(\frac{t(1 - t)}{2} (\mu_1 - \mu_2)^{\top}\Sigma_t^{-1}(\mu_1 - \mu_2) + \frac{1}{2} \log \frac{|\Sigma_t|}{|\Sigma_0|^{t} |\Sigma_1|^{1 - t}}  \Bigr).
\end{equation*}

\subsection{Comparison of ASE and LSE via Chernoff information}
\label{sec:chernoff2}
We now employ the limit results of Corollary~\ref{cor:ase_normality_sbm} and Corollary~\ref{cor:lse_normality_sbm} to compare the 
performance of the Laplacian spectral embedding and the adjacency spectral embedding for subsequent inference. Our subsequent inference task is once again the problem of recovering the block assignments in stochastic blockmodel graphs; furthermore, we are interested in estimating the {\em large-sample optimal} error rate possible for recovering the underlying block assignments after the spectral emebdding step is carried out. 
The discussion in Section~\ref{sec:chernoff} indicates that an appropriate measure for the large-sample optimal error rate for spectral clustering using adjacency or Laplacian spectral embedding is in terms of the minimum of the pairwise Chernoff informations between the multivariate normal distributions as specified in Corollary~\ref{cor:ase_normality_sbm} or Corollary~\ref{cor:lse_normality_sbm}. More specifically, let $\mathbf{B} \in [0,1]^{K \times K}$ and $\bm{\pi} \in \mathbb{R}^{K}$ be the matrix of block probabilities and the vector of block assignment probablities for a $K$-block stochastic blockmodel. We shall assume that $\mathbf{B}$ is positive semidefinite. Then given an $n$ vertex instantiation of the SBM graph with parameters $(\bm{\pi}, \mathbf{B})$, for sufficiently large $n$, the large-sample optimal error rate for recovering the block assignments when adjacency spectral embedding is used as the initial embedding step can be characterized by the quantity $\rho_{\mathrm{A}} = \rho_{\mathrm{A}}(n)$  defined by
\begin{equation}
\label{eq:rho_ASE}
\rho_{\mathrm{A}} = \min_{k \not = l} \! \sup_{t \in (0,1)} \frac{1}{2} \log \frac{|\Sigma_{kl}(t)|}{|\Sigma_{k}|^{t} |\Sigma_{l}|^{1-t}} + \frac{nt(1 - t)}{2}(\nu_k - \nu_l)^{\top} \Sigma_{kl}^{-1}(t) (\nu_k - \nu_l)
\end{equation}
where $\Sigma_{kl}(t) = t \Sigma_{k} + (1 - t) \Sigma_l$. We recall Eq.~\eqref{eq:chernoff-multiple}, in particular the fact that as $\rho_{\mathrm{A}}$ increases, the large-sample optimal error rate decreases. 
Similarly, the large-sample optimal error rate when Laplacian spectral embedding is used as the pre-processing step can be characterized by the quantity
$\rho_{\mathrm{L}} = \rho_{\mathrm{L}}(n)$ defined by
\begin{equation} 
\label{eq:rho_LSE}
\rho_{\mathrm{L}} = \min_{k \not = l} \sup_{t \in (0,1)} \frac{1}{2} \log \frac{|\tilde{\Sigma}_{kl}(t)|}{|\tilde{\Sigma}_{k}|^{t} |\tilde{\Sigma}_{l}|^{1-t}} + \frac{n t(1 - t)}{2} (\tilde{\nu}_k - \tilde{\nu}_l)^{\top} \tilde{\Sigma}_{kl}^{-1}(t) (\tilde{\nu}_k - \tilde{\nu}_l)
\end{equation}
where $\tilde{\Sigma}_{kl}(t) = t \tilde{\Sigma}_{k} + (1 - t) \tilde{\Sigma}_l$ and 
$\tilde{\nu}_k = \nu_k/(\sum_{k'} \pi_{k'} \nu_k^{\top} \nu_{k'})^{1/2}$. We emphasize that we have made the simplifying assumption that $n_k = n \pi_k$ in our expression for $\tilde{\nu}_k$ in Eq.~\eqref{eq:rho_LSE}. This is for ease of comparison between $\rho_{\mathrm{A}}$ and $\rho_{\mathrm{L}}$ in our subsequent discussion.

We thus propose to use the ratio $\rho_{\mathrm{A}}/\rho_{\mathrm{L}}$ as a measure of the relative large-sample performance of the adjacency spectral embedding as compared to the Laplacian spectral embedding for subsequent inference, at least in the context of stochastic blockmodel graphs. That is to say, for given parameters $\bm{\pi}$ and $\mathbf{B}$, if $\rho_{\mathrm{A}}/\rho_{\mathrm{L}} > 1$ then adjacency spectral embedding is to be preferred over Laplacian spectral embedding when $n$, the number of vertices in the graph, is sufficiently large; similarly, if $\rho_{\mathrm{A}}/\rho_{\mathrm{L}} < 1$ then Laplacian spectral embedding is to be preferred over adjacency spectral embedding. 

\begin{remark}
We note that if the block-conditional covariance matrices $\Sigma_{k}$ are all non-singular, then for sufficiently large $n$, the term $\log \tfrac{|\Sigma_{kl}(t)|}{|\Sigma_{k}|^{t} |\Sigma_{l}|^{1-t}}$ in the definition of $\rho_{\mathrm{A}}$ is negligible; similarly, the term $\log \frac{|\tilde{\Sigma}_{kl}(t)|}{|\tilde{\Sigma}_{k}|^{t} |\tilde{\Sigma}_{l}|^{1-t}}$ in the definition of $\rho_{\mathrm{L}}$ is also negligible. However, on occassion, some of the block-conditional covariance matrices $\Sigma_{k}$ are singular. As an example, we consider a completely associative two-block stochastic blockmodel with $\mathbf{B} = \Bigl[\begin{smallmatrix} p^2 & 0 \\ 0 & q^2 \end{smallmatrix} \Bigr]$ and $\bm{\pi} = (\pi_1, \pi_2)$. Then the block-conditional covariance matrices are 
\begin{gather*} \Sigma_{1} = (1 - p^2) \begin{bmatrix} \pi_1^{-1} & 0 \\ 0 & 0 \end{bmatrix}; \quad \Sigma_{2} = (1 - p^2) \begin{bmatrix} 0 & 0 \\ 0 & \pi_2^{-1} \end{bmatrix} \\
\tilde{\Sigma}_{1} = \frac{(1 - p^2)}{4 p^2} \begin{bmatrix} \pi_1^{-2} & 0 \\ 0 & 0 \end{bmatrix}; \quad \tilde{\Sigma}_2 = \frac{(1 - p^2)}{4 p^2} \begin{bmatrix} 0 & 0 \\ 0 & \pi_2^{-2} \end{bmatrix},
\end{gather*}
and $\rho_{\mathrm{A}} = \rho_{\mathrm{L}} = \infty$. Therefore, ASE and LSE are equivalent with respect to the subsequent inference task. In contrast, \cite{bickel_sarkar_2013} showed that the within-block variances for ASE are four times larger than that of the within-block variances of LSE, while the between-block variances for ASE and LSE are the same. We conclude that the within-block variances measure fails to capture the fact that the block-conditional covariance matrices $\Sigma_{1}$ and $\Sigma_{2}$ are singular but in different subspaces, and similarly $\tilde{\Sigma}_1$ and $\tilde{\Sigma}_2$ are also singular but in different subspaces, and thus if we had used the within-block variances measure as a surrogate, we would have been misled into believing that LSE is preferable to ASE for this particular subsequent inference task. Indeed, had we ignored the terms $\log \tfrac{|\Sigma_{kl}(t)|}{|\Sigma_{k}|^{t} |\Sigma_{l}|^{1-t}}$ and $\log \frac{|\tilde{\Sigma}_{kl}(t)|}{|\tilde{\Sigma}_{k}|^{t} |\tilde{\Sigma}_{l}|^{1-t}}$ in the definitions of $\rho_{\mathrm{A}}$ and $\rho_{\mathrm{L}}$, we would have come to the similar conclusion that $\rho_{\mathrm{L}} = \tfrac{2p^2}{1 - p^2} \max\{\pi_1, \pi_2\} = 4 \rho_{\mathrm{A}}$ for sufficiently large $n$. 
\end{remark}

\begin{figure}[tp!]
\center
\includegraphics[width=0.7\textwidth]{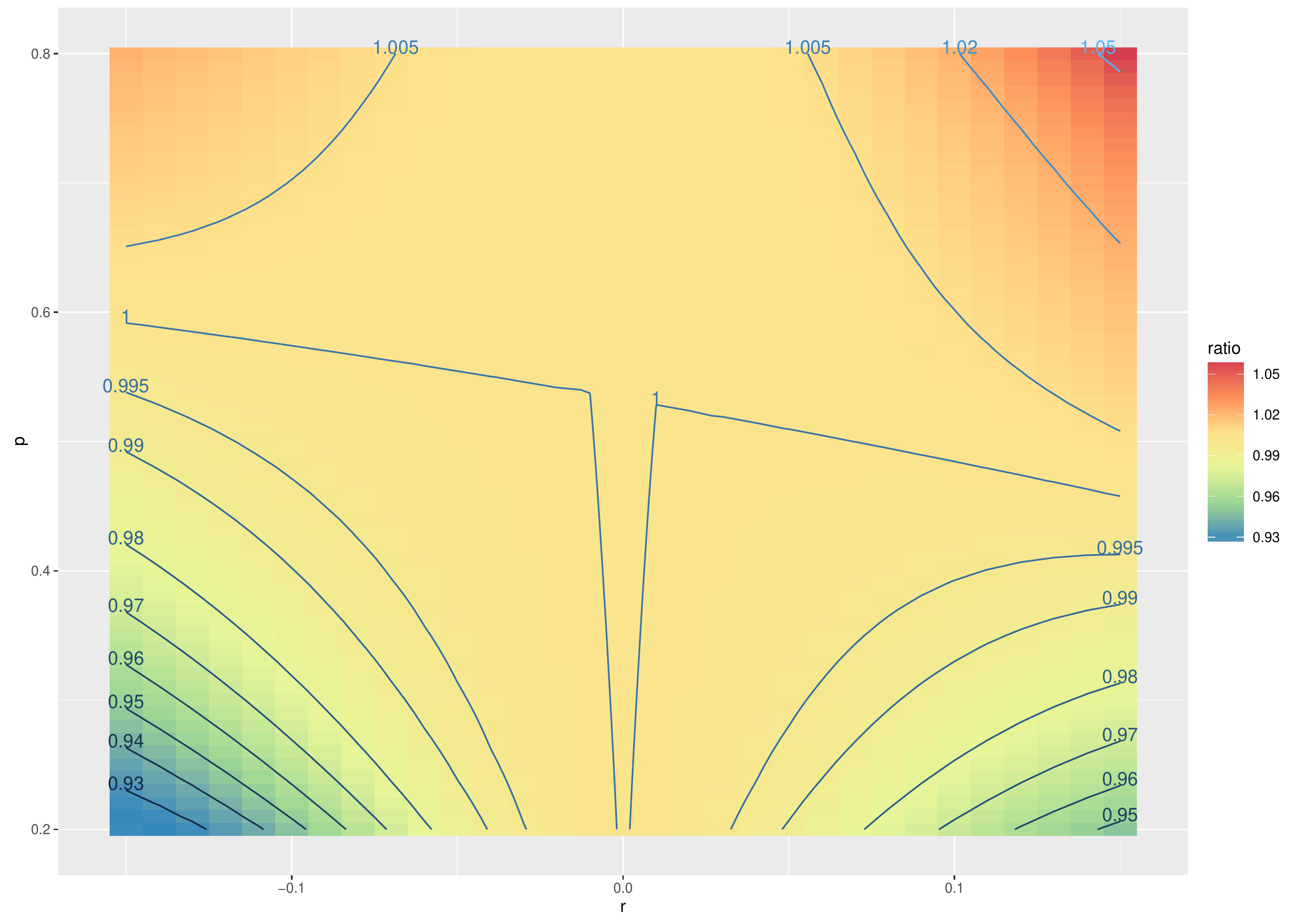}
\caption{The ratio $\rho_{\mathrm{A}}/\rho_{\mathrm{L}}$ displayed for various values of $p \in [0.2, 0.8]$ and $r = q - p \in [-0.15, 0.15]$. The labeled lines are the contour lines for $\rho_{\mathrm{A}}/\rho_{\mathrm{L}}$. 
}
\label{fig:ratio-plot}
\end{figure}

As an illustration of the ratio $\rho_{\mathrm{A}}/\rho_{\mathrm{L}}$, we first consider the collection of 2-block stochastic blockmodels where $\mathbf{B} = \Bigl[ \begin{smallmatrix} p^2 & pq \\ pq & q^2 \end{smallmatrix} \Bigr]$ for $p, q \in (0,1)$ and $\bm{\pi} = (\pi_1, \pi_2 )$ with $\pi_1 + \pi_2 = 1$. Then for sufficiently large $n$, $\rho_{\mathrm{A}}$ is approximately
$$ \rho_{\mathrm{A}} \approx \sup_{t \in (0,1)} \frac{nt(1 - t)}{2} (p - q)^{2} (t \sigma_1^{2} + (1 - t) \sigma_2^{2})^{-1}$$
where $\sigma_1$ and $\sigma_2$ are as specified in Eq.~\eqref{eq:er-p-q-ase1} and Eq.~\eqref{eq:er-p-q-ase2}, respectively. Simple calculations yield
$$ \rho_{\mathrm{A}} \approx \frac{n(p - q)^2 (\pi_1 p^2 + \pi_2 q^2)^2}{2\bigl(\sqrt{\pi_1 p^4 (1 - p^2) + \pi_2 p q^3(1 - pq) } + \sqrt{\pi_1 p^3 q(1 - pq) + \pi_2 q^4 (1 - q^2)}\bigr)^2}$$ for sufficiently large $n$. Similarly, denoting by $\tilde{\sigma}_1^{2}$ and $\tilde{\sigma}_2^2$ the variances specified in Eq.~\eqref{eq:er-p-q-lse1} and Eq.~\eqref{eq:er-p-q-lse2}, we have
\begin{equation*}
\begin{split}
 \rho_{\mathrm{L}} & \approx \sup_{t \in (0,1)} \frac{nt(1 - t)}{2} \Bigl(\frac{p}{\sqrt{0.6 \pi_1^2 + \pi_2 pq}} - \frac{q}{\sqrt{\pi_1 p q + \pi_2 q^2}}\Bigr)^{2} (t \tilde{\sigma}_1^{2} + (1 - t) \tilde{\sigma}_2^2)^{-1} \\
 & \approx \frac{2n(\sqrt{p} - \sqrt{q})^2 (\pi_1 p + \pi_2 q)^2}{\bigl(\sqrt{\pi_1 p (1 - p^2) + \pi_2 q (1 - pq)} + \sqrt{\pi_1 p (1 - pq) + \pi_2 q (1 - q^2)}\bigr)^2}
 \end{split}
\end{equation*}
for sufficiently large $n$. 
Fixing $\bm{\pi} = (0.6, 0.4)$,
we computed the ratio $\rho_{\mathrm{A}}/\rho_{\mathrm{L}}$ for a range of $p$ and $q$ values, with $p \in [0.2, 0.8]$ and $q = p + r$ where $r \in [-0.15, 0.15]$. The results are plotted in Figure~\ref{fig:ratio-plot}. The $y$-axis of Figure~\ref{fig:ratio-plot} denotes the values of $p$ and the $x$ axis are the values of $r$. 

We also generate instances of a stochastic blockmodel graph on $200$ vertices with parameters $p = 0.75$ and $q = 0.6$. For each graph we measure the error rate of the spectral embedding followed by the Gaussian mixture-model based clustering procedure in recovering the block assignments. The error rate for the $\mathrm{GMM} \circ \mathrm{ASE}$ procedure, averaged over $1000$ Monte Carlo replicates, is $0.079$ with a standard error of $6.6 \times 10^{-4}$; meanwhile the error rate for the $\mathrm{GMM} \circ \mathrm{LSE}$ procedure, also averaged over $1000$ Monte Carlo replicates, is $0.083$ with a standard error of $7.2 \times 10^{-6}$. The difference in the mean error rate is statistically significant at $\alpha = 0.001$. Conversely, when $p = 0.2$ and $q = 0.3$ and the graphs are on $400$ vertices, the mean error rate, over $1000$ Monte Carlo replicates, for the $\mathrm{GMM} \circ \mathrm{ASE}$ procedure is $0.161$ while the mean error rate for the $\mathrm{GMM} \circ \mathrm{LSE}$ procedure is $0.151$ and this difference is also statistically significant at $\alpha = 0.001$.

We next consider the collection of stochastic blockmodels with parameters $\bm{\pi}$ and $\mathbf{B}$ where 
\begin{equation}
\label{eq:3block-example}
 \mathbf{B} = \begin{bmatrix} p & q & q \\ q & p & q \\ q & q & p \\ \end{bmatrix}, \quad p, q \in (0,1), \,\, \text{and} \,\, \bm{\pi} = (0.8, 0.1, 0.1).
\end{equation}
First we compute the ratio $\rho_{\mathrm{A}}/\rho_{\mathrm{L}}$ for $p \in [0.3, 0.9]$ and $r = q - p$ with $r \in [- 0.2, -0.01]$. The results are plotted in Figure~\ref{fig:ratio_3blocks}, with the $y$-axis of Figure~\ref{fig:ratio_3blocks} being the values of $p$ and the $x$-axis being the values of $r$. We then generate instances of a stochastic blockmodel graph on $800$ vertices with $p = 0.9$ and $q = 0.72$ and estimate the error rate of the $\mathrm{GMM} \circ \mathrm{ASE}$ and the $\mathrm{GMM} \circ \mathrm{LSE}$ procedures in recovering the block assignments. The $\mathrm{GMM} \circ \mathrm{ASE}$ and $\mathrm{GMM} \circ \mathrm{LSE}$ error rates, averaged over $1000$ Monte Carlo replicates, are $0.29$ and $0.38$, respectively. For these choice of parameters, $\rho_{\mathrm{A}}/\rho_{\mathrm{L}} \approx 1.01$. We also generate instances of a stochastic blockmodel graph on $1600$ vertices with $p = 0.34$ and $q = 0.15$. The ratio $\rho_{\mathrm{A}}/\rho_{\mathrm{L}}$ in this case is $\approx 0.98$; the $\mathrm{GMM} \circ \mathrm{ASE}$ and $\mathrm{GMM} \circ \mathrm{LSE}$ error rates, averaged over $1000$ Monte Carlo replicates, are $0.18$ and $0.06$, respectively. 

\begin{figure}[htbp]
\center 
\includegraphics[width=0.8\textwidth]{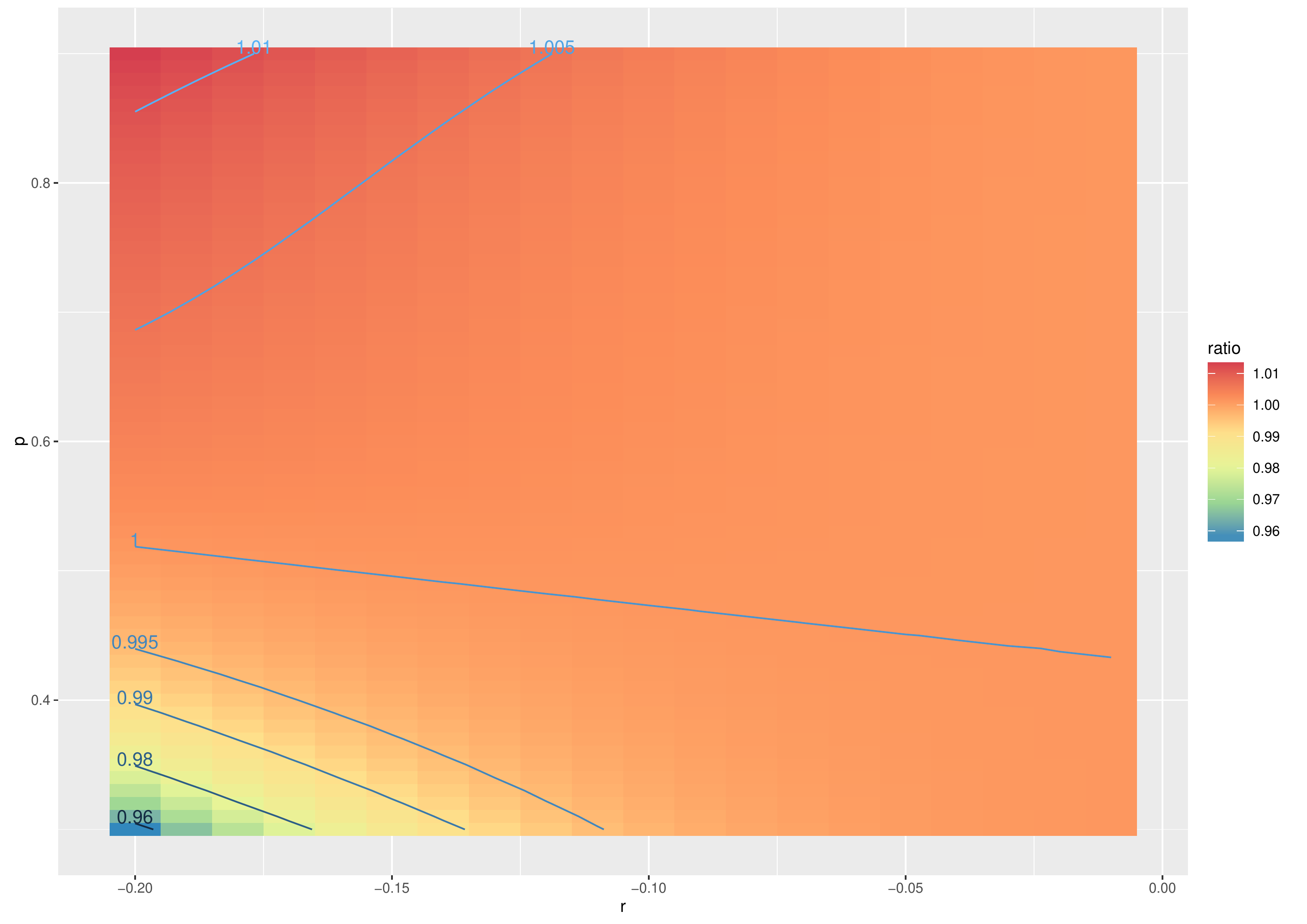}
\caption{The ratio $\rho_{A}/\rho_{L}$ displayed for various values of $p \in [0.2, 0.8]$ and $r = q - p \in [-0.2, -0.01]$ for the 3-block stochastic blockmodel of Eq.~\eqref{eq:3block-example}. The labeled lines are the contour lines for $\rho_{\mathrm{A}}/\rho_{\mathrm{L}}$. 
 }
\label{fig:ratio_3blocks}
\end{figure}

\section{Summary and Conclusions}
\label{sec:conclusions}
We shown in this paper several limit results for the eigenvectors corresponding to the largest eigenvalues of the normalized Laplacian matrix 
of random graphs. In particular, we show that for stochastic blockmodel graphs, conditioned on the block assignments, each row of the Laplacian spectral embedding converges to a multivariate normal distribution. We then discuss the relationship between spectral embeddings of the adjacency and normalized Laplacian matrices and subsequent inference. When the subsequent inference task is the problem of clustering the vertices of a graph, we show that the Chernoff information between the multivariate normals approximation of the embedding is a suitable measure for the large-sample optimal error rate, i.e., it characterizes the minimum error rate achievable by {\em any} clustering procedure that operates only on the spectral embedding. As a result, we are able to theoretically compare the use of spectral embedding of the adjacency matrix versus that of the normalized Laplacian for subsequent inference, thereby refining and extending the pioneering work of \cite{bickel_sarkar_2013}. 

We now mention several potential extensions of this work. 
The normalized Laplacian considered in this paper is just one example of possible normalization. In particular, given $\tau > 0$ one can define the $\tau$-regularized normalized Laplacian $\mathcal{L}_{\tau}$ via $\mathcal{L}_{\tau}(\mathbf{A}) = (\mathbf{D} + \tau \mathbf{I})^{-1/2} \mathbf{A} (\mathbf{D} + \tau \mathbf{I})^{-1/2}$ or $\mathcal{L}_{\tau}(\mathbf{A}) = (\mathbf{D} + \tau \mathbf{I})^{-1/2} (\mathbf{A} + \tau \bm{1} \bm{1}^{\top}) (\mathbf{D} + \tau \mathbf{I})^{-1/2}$\cite{chaudhuri12:_spect,qin2013dcsbm,Amini}. It had been shown that regularization is particularly useful for spectral clustering in sparse graphs. It will thus be of interest to derive limit results for the eigenvectors of $\mathcal{L}_{\tau}(\mathbf{A})$ analogous to those in this paper; such results can potentially allow one to choose the regularization parameter $\tau$. 
 
The limit results in this paper are for the spectral embedding of $(\mathbf{X}, \mathbf{A}) \sim \mathrm{RPDG}(F)$ into $\mathbb{R}^{d}$ when $d$, the rank of the matrix $\mathbb{E}[X X^{\top}]$ where $X \sim F$, is fixed and known. Similar results can be derived when the spectral embedding of $\mathbf{A}$ is into $\mathbb{R}^{d'}$ where $d' < d$. Limit results for spectral embedding of the adjacency matrix or Laplacian matrix into $\mathbb{R}^{d'}$ when $d' > d$ is, to the best of our knowledge, an open problem. A related inquiry is limit results for spectral embedding into $\mathbb{R}^{d'}$ when $d' < d$ but $d$ varies with $n$ and is not fixed, such as when the graph arises from a latent position model where the link function, viewed as an integral operator, has infinite rank. Since new results on stochastic blockmodels indicate that they can be regarded as a universal approximation to latent positions model graphs or graphons of exchangeable random graphs \cite{wolfe13:_nonpar,yang14:_nonpar}, limit results for the adjacency and Laplacian spectral embedding will be useful in further understanding of this approximation property. 

Finally, the Chernoff information used in this paper is a measure of the effect of spectral embedding on subsequent inference for a single graph. Recently, however, there has been interests in two-sample inference for graphs, e.g.,  
network comparisons or two-sample hypothesis testing for graphs \cite{Asta,tang14:_nonpar,tang14:_semipar}. As an example, given two distributions $F$ and $G$, the problem of testing whether $F = G$ given two random dot product graphs $\mathbf{A} \sim \mathrm{RDPG}(F)$ and $\mathbf{B} \sim \mathrm{RDPG}(G)$ was considered in 
\cite{tang14:_nonpar}; the proposed test statistic is a kernel-based distance measure between the spectral embedding $\hat{\mathbf{X}}$ of $\mathbf{A}$ and $\hat{\mathbf{Y}}$ of $\mathbf{B}$. Determining a measure that characterizes the effect of spectral embedding for two-sample graphs inference problems, akin to how the Chernoff information characterize the effect of spectral emebdding for single graph inference, is of significant interest.

\bibliography{../biblio}
\newpage
\appendix
\section{Proof of Theorem~\ref{THM:ASE} and Theorem~\ref{THM:NORMALITY-ASE}}
\label{sec:ase}

  We first present a sketch of the proof of Theorem~\ref{THM:ASE}, 
  noting that the main arguments are given in \cite{tang14:_semipar}. We also note that similar, albeit more involved, arguments are used in the proof of Theorem~\ref{THM:LSE}. Since the proof of Theorem~\ref{THM:LSE} will be presented in much greater detail in Section~\ref{sec:lse}, to avoid repetitions, we chose to omit the details in the current proof.
  Nevertheless, we emphasize that the statements of the results in \cite{tang14:_semipar} are slightly different from how they are stated in the current paper; these differences stem mainly from how sparseness in the graphs is incorporated. More specifically \cite{tang14:_semipar} considered a sequence of random dot product graphs where for each $n$, the matrix of latent positions $\mathbf{X}_n$ are fixed but unknown (see Definition~1 in \cite{tang14:_semipar}) and furthermore, there need not exist any relationship between $\mathbf{X}_n$ and $\mathbf{X}_{n'}$ for $n \not = n'$. Sparseness of the graphs is thus implicit (see for example the condition on the minimum vertex's degree in Assumption~1 in \cite{tang14:_semipar}). The current paper, however, assumes that the rows of $\mathbf{X}_n$ are independently sampled according to a distribution $F$. As such, sparseness needs to be made explicit through the sparsity factor $\rho_n$. 

  \begin{remark}
  For ease of exposition, henceforth we shall on many occasions remove the subscript $n$ from the matrices $\mathbf{X}_n, \hat{\mathbf{X}}_n, \mathbf{A}_n, \mathbf{P}_n$ and other related matrices such as $\mathbf{U}_{\mathbf{A}_n}$, $\mathbf{U}_{\mathbf{P}_n}$, etc. The subsequent statements are thus to be intepreted as holding for sufficient large $n$. Since we are concerned with limit results, this should lead to minimal confusion. 
  \end{remark}

  We first note that Eq.~\eqref{eq:6} follows from Theorem~A.5 in \cite{tang14:_semipar}. More specifically, if $(\mathbf{X}, \mathbf{A}) \sim \mathrm{RDPG}(F)$ with sparsity factor $\rho_n$, then Theorem~A.5 in \cite{tang14:_semipar} yields 
  $$ \|\hat{\mathbf{X}} - \rho_n^{1/2} \mathbf{X} \mathbf{W} \|_{F} = \|(\mathbf{A} - \mathbf{P}) \mathbf{U}_{\mathbf{P}} \mathbf{S}_{\mathbf{P}}^{-1/2} \|_{F} + O_{\mathbb{P}}((n \rho_n)^{-1/2}).$$ Since $\mathbf{P} = \rho_{n} \mathbf{X} \mathbf{X}^{\top}$ we have $\mathbf{U}_{\mathbf{P}} \mathbf{S}_{\mathbf{P}}^{1/2} \mathbf{W} = \rho_n^{1/2} \mathbf{X}$ for some orthogonal matrix $\mathbf{W}$. Therefore,
  \begin{equation*}
  \begin{split}
  \|(\mathbf{A} - \mathbf{P}) \mathbf{U}_{\mathbf{P}} \mathbf{S}_{\mathbf{P}}^{-1/2} \|_{F} &=  \|(\mathbf{A} - \mathbf{P}) \mathbf{U}_{\mathbf{P}} \mathbf{S}_{\mathbf{P}}^{1/2} \mathbf{W} \mathbf{W}^{\top} \mathbf{S}_{\mathbf{P}}^{-1} \mathbf{W} \|_{F} \\
  &= \|(\mathbf{A} - \mathbf{P}) \rho_n^{1/2} \mathbf{X} (\rho_n \mathbf{X}^{\top} \mathbf{X})^{-1} \|_{F} \\
  &= \rho_n^{-1/2} \|(\mathbf{A} - \mathbf{P}) \mathbf{X} (\mathbf{X}^{\top} \mathbf{X})^{-1} \|_{F}.
  \end{split}
  \end{equation*}
  Eq.~\eqref{eq:6} is thus established. We now show Eq.~\eqref{eq:5} and Eq.~\eqref{eq:rho_n_decreasing}. We shall use the convention that, unless stated otherwise, expectation of a random variable dependent on $\mathbf{A}$ is taken with respect to $\mathbf{A}$ conditional on $\mathbf{P}$. Let $\zeta = \rho_n \|(\mathbf{A} - \mathbf{P}) \mathbf{U}_{\mathbf{P}} \mathbf{S}_{\mathbf{P}}^{-1/2} \|_{F}^{2}$. Then, conditional on $\mathbf{P}$, $\zeta$ is a linear function of the {\em indepedent} random variables $\{a_{ij} - p_{ij}\}_{i < j}$. Lemma~A.5 in \cite{tang14:_semipar} shows that $\zeta$ is tightly concentrated around its expectation $\mathbb{E}[\zeta]$. We then have 
  \begin{equation*}
  \begin{split}
  \mathbb{E}[\zeta] &=\mathbb{E}[\|(\mathbf{A} - \mathbf{P}) \mathbf{U}_{\mathbf{P}} \mathbf{S}_{\mathbf{P}}^{-1/2} \|_{F}^{2}] 
  \\&= \rho_n^{-1} \mathbb{E}[\|(\mathbf{A} - \mathbf{P}) \mathbf{X} (\mathbf{X}^{\top} \mathbf{X})^{-1}\|_{F}^{2}] 
  \\ &= \mathrm{tr} \, n (\mathbf{X}^{\top} \mathbf{X})^{-1} \Bigl( n ^{-2} \rho_n^{-1} \mathbf{X}^{\top} \mathbb{E}[(\mathbf{A} - \mathbf{P})^{2}] \, \mathbf{X} \Bigr) n (\mathbf{X}^{\top} \mathbf{X})^{-1} 
  \end{split}
  \end{equation*}
  Now, the $ij$-th entry of $(\mathbf{A} - \mathbf{P})^{2}$ is of the form $\sum_{k} (a_{ik} - p_{ik}) (a_{kj} - p_{kj})$. As the upper diagonal entries of $\mathbf{A}$ are independent conditional on $\mathbf{P}$, we have
  $$\mathbb{E}\Bigl[\, \sum_{k} (a_{ik} - p_{ik})(a_{kj} - p_{kj}) \,\Bigr] = \begin{cases} 0 & \text{if $i \not = j$} \\ \sum_{k \not = i} p_{kj}(1 - p_{kj}) & \text{if $i = j$} \end{cases}$$

By the strong law of large numbers,
$n^{-1} \mathbf{X}^{T} \mathbf{X}_n = n^{-1} \sum_{i} X_i X_i^{\top}$ converges to $\Delta = \mathbb{E}[X_1 X_1^{\top}]$ almost surely as $n \rightarrow \infty$.
Hence $n (\mathbf{X}^{\top} \mathbf{X})^{-1}$ converges to $\Delta^{-1}$ almost
surely. In addition,
\begin{equation*}
\begin{split}
n^{-2} \rho_n^{-1} \mathbf{X}^{T} \mathbb{E}[(\mathbf{A} - \mathbf{P})^2] \mathbf{X} &= n^{-2} \rho_n^{-1}
\sum_{i=1}^{n} \sum_{j \not = i} X_i X_i^{\top} p_{ik} (1 - p_{ik})
 \\ &= n^{-2} \rho_n^{-1} \sum_{i=1}^{n} \sum_{k \not = i} X_i X_i^{\top} (\rho_n X_i^{\top} X_k
- \rho_n^{2} (X_i^{\top} X_k)^2) \\
&= n^{-2} \sum_{i=1}^{n} \sum_{k \not = i} X_i X_i^{\top} (X_i^{\top}X_k
- \rho_n X_i^{\top} X_k X_k^{\top} X_i)
\end{split}
\end{equation*}
If $\rho_n = 1$ for all $n$, the above term converges to $\mathbb{E}[X_1 X_1^{\top}
  (X_1^{\top} 
\mu_{F} - X_1^{\top} \Delta_{F} X_1)]$
almost surely. When $\rho_n \rightarrow 0$, the above term converges to $\mathbb{E}[X_1 X_1^{\top} X_1^{\top} \mu_{F}]$
almost surely. Eq.~\eqref{eq:5} and Eq.~\eqref{eq:rho_n_decreasing} is thus established.

We now sketch the proof of Theorem~\ref{THM:NORMALITY-ASE}. We emphasize that Theorem~\ref{THM:NORMALITY-ASE} is a generalization of the corresponding result in \cite{athreya2013limit,sussman:_thesis}, the generalization being that Theorem~\ref{THM:NORMALITY-ASE} does not assume distinct eigenvalues of the matrix $\mathbb{E}[X X^{\top}]$ where $X \sim F$; distinct eigenvalues is a necessary assumption for the proof given in \cite{athreya2013limit,sussman:_thesis}. 

Let $a_{ij}$ and $p_{ij}$ denote the $ij$-th entry of $\mathbf{A}$ and $\mathbf{P}$. 
From Eq.~\eqref{eq:6}, by exchangeability of the collection $\{\mathbf{W}_n \hat{X}_j - \rho_n^{1/2} X_j\}_{j=1}^{n}$, for any fixed index $i$ we have
\begin{equation*}
\begin{split}
\sqrt{n}(\mathbf{W}_n \hat{X}_i - \rho_n^{1/2} X_i) &= \sqrt{n} \rho_n^{-1/2} (\mathbf{X}^{\top} \mathbf{X})^{-1} \sum_{j \not = i} (a_{ij} - p_{ij}) X_j + o_{\mathbb{P}}(1) \\
&= \rho_n^{-1/2} (n^{-1} \mathbf{X}^{\top} \mathbf{X})^{-1} \sum_{j \not = i} \frac{(a_{ij} - p_{ij})}{\sqrt{n}} X_j + o_{\mathbb{P}}(1) \\
&= (n^{-1} \mathbf{X}^{\top} \mathbf{X})^{-1} \sum_{j \not = i} \frac{(a_{ij} - \rho_n X_i^{\top} X_j)}{\sqrt{n \rho_n}} X_j + o_{\mathbb{P}}(1).
\end{split}
\end{equation*}
Now conditional on $X_i$, the quantity $\sum_{j \not = i} \tfrac{(a_{ij} - \rho_n X_i^{\top} X_j)}{\sqrt{n \rho_n}} X_j$ is a sum of independent and identically distributed mean $0$ random variables. Thus by the multivariate central limit theorem, conditioning on $X_i = x$ yields
$$ \sum_{j \not = i} \frac{(a_{ij} - \rho_n x^{\top} X_j)}{\sqrt{n \rho_n}} X_j \overset{\mathrm{d}}{\longrightarrow} \mathcal{N}(0, \mathbb{E}[X_1 X_1^{\top} (x^{\top} X_1 - \rho_n x^{\top} X_1 X_1^{\top} x)]). $$ 
Furthermore, since $n^{-1} \mathbf{X}^{\top} \mathbf{X} = n^{-1} \sum X_i X_i^{\top} \overset{\mathrm{a.s.}}{\longrightarrow} \Delta$ as $n \rightarrow \infty$, we have by Slutsky's theorem that
$$ \sqrt{n}(\mathbf{W}_n \hat{X}_i - \rho_n^{1/2} X_i) 
\overset{\mathrm{d}}{\longrightarrow} \mathcal{N}(0, \Delta^{-1} \mathbb{E}[X_1 X_1^{\top} (x^{\top} X_1 - \rho_n x^{\top} X_1 X_1^{\top} x)] \Delta^{-1}), $$
thereby establishing Theorem~\ref{THM:NORMALITY-ASE}.

\section{Proof of Theorem~\ref{THM:LSE} and Theorem~\ref{THM:NORMALITY-LSE}}
\label{sec:lse}
For ease of exposition, we present in Section~\ref{sec:proof-LSE-1} a proof of Theorem~\ref{THM:NORMALITY-LSE}, assuming Eq.~\eqref{eq:LSE-main1} in Theorem~\ref{THM:LSE} holds. We next derive, in Section~\ref{sec:proof-LSE-2}, Eq.~\eqref{eq:LSE-main1} in Theorem~\ref{THM:LSE}. We then show, in Section~\ref{sec:proof-LSE-3} that the Frobenius norms in Eq.~\eqref{eq:4} and Eq.~\eqref{eq:14} are tightly concentrated around their expectations. We complete the proof of Theorem~\ref{THM:LSE} by computing these expectations explicitly when $\rho_n \equiv 1$ and when $\rho_n \rightarrow 0$. 

\subsection{Proof of Theorem~\ref{THM:NORMALITY-LSE}}
\label{sec:proof-LSE-1}
Recall that we suppress the dependency on $n$ in the subscript of the matrices $\mathbf{A}_n, \mathbf{X}_n, \mathbf{P}_n$ and other related matrices. In addition, recall that $\tilde{\mathbf{X}} = \rho_n^{1/2} \mathbf{T}^{-1/2} \mathbf{X} = \mathrm{diag}(\mathbf{X} \mathbf{X}^{\top} \bm{1})^{-1/2} \mathbf{X}$. Eq.~\eqref{eq:LSE-main1} from Theorem~\ref{THM:LSE} then implies
   \begin{equation*}
     \breve{\mathbf{X}} \mathbf{W} - \tilde{\mathbf{X}} = \mathbf{T}^{-1/2}(\mathbf{A}
    - \mathbf{P}) \mathbf{T}^{-1/2} \tilde{\mathbf{X}} (\tilde{\mathbf{X}}^{\top} \tilde{\mathbf{X}})^{-1} 
    + \frac{1}{2}\mathbf{T}^{-1}(\mathbf{T} - \mathbf{D}) \tilde{\mathbf{X}} 
    + \mathbf{R}
   \end{equation*}
   for some orthogonal matrix $\mathbf{W}$ and $n \times d$ matrix $\mathbf{R}$ with $\|\mathbf{R}\|_{F} = O_{\mathbb{P}}((n \rho_n)^{-1}).$ 
For a fixed index $i$, let $\zeta_i$ denotes the $i$-th row of
   $n \rho_n^{1/2} (\breve{\mathbf{X}} \mathbf{W} - \tilde{\mathbf{X}})$. Also let $r_i$ denote the $i$-th row of $\mathbf{R}$. 
   Now exchangeability of the $\{X_j\}_{j=1}^{n}$ implies exchangeability of the $\{\breve{X}_i\}_{j=1}^{n}$ and exchangeability of the $\{\tilde{X}_i\}_{j=1}^{n}$. This also implies exchangeability of the $\{\zeta_j\}_{j=1}^{n}$ and thus exchangeability of the $\{r_j\}_{j=1}^{n}$. Now, for any fixed index $i$, by exchangeability of the $\{r_j\}_{j=1}^{n}$, we have 
   $$n^{2} \rho_n \mathbb{E}[\|r_i\|^2] = n^{2} \rho_n \frac{1}{n} \mathbb{E}[\sum_{j} \|r_j\|^2] = n \rho_n \mathbb{E}[\|\mathbf{R}\|_{F}^2]$$
   Now, with probability at least $1 - n^{-3}$, $\|\mathbf{R}\|_{F} \leq C_0 (n \rho_n)^{-1})$ for some constant $C_0$. In addition, $\|\mathbf{R}\|_{F} \leq n$ almost surely. Thus $ \mathbb{E}[\|\mathbf{R}\|_{F}^2] \leq C_0^2 (n \rho_n)^{-2} (1 - n^{-3}) + n \times n^{-3} = O((n \rho_n)^{-2})$. Therefore $n^{2} \rho_n \mathbb{E}[\|r_i\|^2] = O((n \rho_n)^{-1})$. Since $n \rho_n = \omega(\log^{4}(n))$, we therefore have $n^{2} \rho_n \mathbb{E}[\|r_i\|^2] \rightarrow 0$ as $n \rightarrow \infty$, i.e., $n \rho_n^{1/2} r_i \overset{\mathrm{d}}{\rightarrow} 0$ as $n \rightarrow \infty$. 

   Let $a_{ij}$ and $p_{ij}$ denote the $ij$-th entry of $\mathbf{A}$ and $\mathbf{P}$, respectively. The above reasoning implies that for a fixed index $i$, $\zeta_i$ is of the form
   \begin{equation*}
     \begin{split}
     \zeta_i &= (\tilde{\mathbf{X}}^{\top} \tilde{\mathbf{X}})^{-1} \frac{n \rho_n^{1/2}}{\sqrt{t_i}} \Bigl( \sum_{j} \frac{a_{ij} -
     p_{ij}}{\sqrt{t_j}} \tilde{X}_j \Bigr) + \frac{n \rho_n^{1/2} (t_i - d_i)}{2t_i} \tilde{X}_i + o_{\mathbb{P}}(1) \\
   &= (\tilde{\mathbf{X}}^{\top} \tilde{\mathbf{X}})^{-1}
   \frac{\sqrt{n \rho_n}}{\sqrt{t_i}} \Bigl( \sum_{j \not =
   i} \frac{\sqrt{n \rho_n} (a_{ij} - p_{ij})X_j}{ t_j} \Bigr) - \frac{(n \rho_n)^{3/2} X_i}{2 t_i^{3/2}} \sum_{j \not =
   i} \frac{(a_{ij} - p_{ij})}{\sqrt{n \rho_n}} + o_{\mathbb{P}}(1).
     \end{split}
   \end{equation*}
   We first note that $\tilde{\mathbf{X}}^{\top} \tilde{\mathbf{X}} = \mathbf{X}^{\top} \mathrm{diag}(\mathbf{X} \mathbf{X} \bm{1})^{-1} \mathbf{X}$ converges almost surely to
   $\tilde{\Delta}$ as $n \rightarrow \infty$. This can be seen as follows. Denoting $\mu = \mathbb{E}[X_1]$, we have
\begin{equation*}
  \begin{split}
  \tilde{\mathbf{X}}^{\top} \tilde{\mathbf{X}} &= \sum_{i=1}^{n}
  \frac{X_i X_i^{\top}}{\sum_{j} X_i^{\top} X_j} 
  \\ &= \Bigl(\sum_{i=1}^{n} \frac{X_i X_i^{\top}}{n
    X_i^{\top} \mu}\Bigr) + \sum_{i=1}^{n} X_i X_i^{\top}
  \Bigl(\frac{1}{\sum_{j} X_i^{\top} X_j} - \frac{1}{n X_i^{\top} \mu}
  \Bigr) \\
  & = \Bigl(\sum_{i=1}^{n} \frac{X_i X_i^{\top}}{n
    X_i^{\top} \mu}\Bigr) + \sum_{i=1}^{n} \frac{X_i X_i^{\top}}{n
    X_i^{\top} \mu} \Bigl(\frac{n X_i^{\top} \mu - \sum_{j} X_i^{\top}
    X_j}{\sum_{j} X_i^{\top} X_j}\Bigr).
  \end{split}
\end{equation*}
Now, for any index $i$, let $c_i = |(n X_i^{\top} \mu - \sum_{j} X_i^{\top} X_j)/
(\sum_{j} X_i^{\top} X_j)|$. Then by Hoeffding's inequality, $c_i = O_{\mathbb{P}}(n^{-1/2})$. As $X_i X_i^{\top}$ is positive semidefinite for each
index $i$, we thus have
\begin{equation*}
  - c_i \frac{X_i X_i^{\top}}{n X_i^{\top} \mu} \preceq \frac{X_i X_i^{\top}}{n
    X_i^{\top} \mu} \Bigl(\frac{n X_i^{\top} \mu - \sum_{j} X_i^{\top}
    X_j}{\sum_{j} X_i^{\top} X_j}\Bigr) \preceq c_i \frac{X_i X_i^{\top}}{n X_i^{\top} \mu} 
\end{equation*}
where $\preceq$ denotes the positive semidefinite ordering of matrices.
Hence \begin{equation*}
\begin{split}
  - (\sup_{j \in [n]} c_j) \sum_{i} \frac{X_i X_i^{\top}}{n
    X_i^{\top} \mu} & \preceq \sum_{i} \frac{X_i X_i^{\top}}{n
    X_i^{\top} \mu} \Bigl(\frac{n X_i^{\top} \mu - \sum_{j} X_i^{\top}
    X_j}{\sum_{j} X_i^{\top} X_j}\Bigr) \\ & \preceq (\sup_{j \in [n]} c_j)
  \sum_{i} \frac{X_i X_i^{\top}}{n
    X_i^{\top} \mu}.
    \end{split}
\end{equation*}
We then have by a union bound that $\sup_{i \in [n]} c_i = O_{\mathbb{P}}(\sqrt{n^{-1} \log{n}})$
and hence $\sup_{i \in [n]} c_i \overset{\mathrm{a.s.}}{\rightarrow} 0$ as $n
\rightarrow \infty$. In addition, by the strong law of large numbers
\begin{equation}
  \label{eq:11}
  \sum_{i} \frac{X_i X_i^{\top}}{n
    X_i^{\top} \mu} \overset{\mathrm{a.s.}}{\longrightarrow} \mathbb{E}\Bigl[ \frac{X_1
    X_1^{\top}}{X_1^{\top} \mu} \Bigr]
\end{equation}
as $n \rightarrow \infty$. Thus,
$$\sum_{i} \frac{X_i X_i^{\top}}{n
    X_i^{\top} \mu} \Bigl(\frac{n X_i^{\top} \mu - \sum_{j} X_i^{\top}
    X_j}{\sum_{j} X_i^{\top} X_j}\Bigr) \overset{\mathrm{a.s.}}{\longrightarrow} 0
$$ as $n \rightarrow \infty$. We thus conclude that 
\begin{equation}
\label{eq:tildeX-outer}
   \tilde{\mathbf{X}}^{\top} \tilde{\mathbf{X}} = \Bigl(\sum_{i=1}^{n} \frac{X_i X_i^{\top}}{n
    X_i^{\top} \mu}\Bigr) \, + \sum_{i=1}^{n} \frac{X_i X_i^{\top}}{n
    X_i^{\top} \mu} \frac{n X_i^{\top} \mu - \sum_{j} X_i^{\top}
    X_j}{\sum_{j} X_i^{\top} X_j}
  \overset{\mathrm{a.s.}}{\longrightarrow} \mathbb{E} \Bigl[ \frac{X_1
    X_1^{\top}}{X_1^{\top} \mu} \Bigr]  
\end{equation}
as $n \rightarrow \infty$.

Therefore $(\tilde{\mathbf{X}}^{\top} \tilde{\mathbf{X}})^{-1}$ converges almost
   surely to $\tilde{\Delta}^{-1}$ as $n \rightarrow \infty$. In
   addition, $t_i/(n \rho_n) \rightarrow X_i^{\top} \mu$ as $n \rightarrow
   \infty$ and hence $\sqrt{n \rho_n /t_i} \rightarrow (X_i^{\top} \mu)^{-1/2}$
   as $n \rightarrow \infty$. 
   We next consider the term
   \begin{equation*}
     \begin{split}
   \sum_{j \not = i}
     \frac{\sqrt{n \rho_n} (a_{ij} - p_{ij})X_j}{t_j} &=
     \sum_{j \not = i} \frac{(a_{ij} -
       p_{ij}) X_j}{\sqrt{n \rho_n} X_j^{\top} \mu} + 
     \sum_{j \not = i} \frac{(a_{ij} - p_{ij}) X_j}{
       \sqrt{n \rho_n} X_j^{\top} \mu} \frac{n \rho_n X_j^{\top} \mu - t_j}{t_j}
    \end{split}
   \end{equation*}
   The second sum on the right hand side of the above display is,
   conditioned on $\mathbf{P}$, a sum of mean $0$ random
   variables. Hoeffding's inequality implies that the event 
     \begin{equation*}
      \bigl \| \sum_{j \not = i} \frac{(a_{ij} - p_{ij}) X_j}{
       \sqrt{n \rho_n} X_j^{\top} \mu} \frac{n \rho_n X_j^{\top} \mu - t_j}{t_j} \bigr \| \geq s 
       \end{equation*}
       occurs with probability at most
       $$
      2 \exp\Bigl(\frac{-C n \rho_n s^{2}}{\sum_{j \not = i}
       \|X_j\|^{2}(n \rho_n X_j^{\top} \mu
       - t_j)^{2}(X_j^{\top}\mu)^{-2} t_j^{-2}}\Bigr)
     $$
     for some constant $C > 0$. 
    Therefore,
     \begin{equation*}
     \sum_{j \not = i} \frac{(a_{ij} - p_{ij}) X_j}{\sqrt{n \rho_n}
       X_j^{\top} \mu} \frac{n \rho_n X_j^{\top} \mu - t_j}{t_j}
     \overset{\mathrm{a.s.}}{\longrightarrow} 0
     \end{equation*}
     as $n \rightarrow \infty$. 
     We thus have
     \begin{equation}
     \label{eq:clt-lse-proof-1}
     \zeta_i = (\tilde{\mathbf{X}}^{\top} \tilde{\mathbf{X}})^{-1}
   \frac{\sqrt{n \rho_n}}{\sqrt{t_i}} \Bigl( \sum_{j \not =
   i} \frac{(a_{ij} - p_{ij})X_j}{\sqrt{n \rho_n} X_j^{\top} \mu} \Bigr) - \frac{n \rho_n \sqrt{n \rho_n} X_i}{2 t_i \sqrt{t_i}} \sum_{j \not =
   i} \frac{(a_{ij} - p_{ij})}{\sqrt{n \rho_n}} + o_{\mathbb{P}}(1).
   \end{equation}
   We now show that 
   \begin{equation} 
   \label{eq:clt-lse-proof-2}
    \frac{n \rho_n \sqrt{n \rho_n} X_i}{2 t_i \sqrt{t_i}} \sum_{j \not =
   i} \frac{(a_{ij} - p_{ij})}{\sqrt{n \rho_n}} = \frac{\sqrt{n \rho_n}(\tilde{\mathbf{X}}^{\top} \tilde{\mathbf{X}})^{-1} \tilde{\Delta} X_i}{2 \sqrt{t_i} X_i^{\top} \mu} \sum_{j \not =
   i} \frac{(a_{ij} - p_{ij})}{\sqrt{n \rho_n}} + o_{\mathbb{P}}(1).
   \end{equation}
   This can be done as follows. We first consider the term 
   \begin{equation*}
   \frac{n \rho_n \sqrt{n \rho_n} X_i}{2 t_i \sqrt{t_i}} \sum_{j \not =
   i} \frac{(a_{ij} - p_{ij})}{\sqrt{n \rho_n}} = \frac{\sqrt{n \rho_n} }{2 \sqrt{t_i}} \Bigl(\sum_{j \not =
   i} \frac{(a_{ij} - p_{ij})}{\sqrt{n \rho_n}} \Bigr) \Bigl(\frac{X_i}{X_i^{\top} \mu} + \frac{n \rho_n X_i}{t_i} - \frac{X_i}{X_i^{\top} \mu}\Bigr)
   \end{equation*}
   Once again, conditional on $\mathbf{P}$, 
   $$ \frac{\sqrt{n \rho_n}}{2\sqrt{t_i}} \Bigl(\sum_{j \not =
   i} \frac{(a_{ij} - p_{ij})}{\sqrt{n \rho_n}} \Bigr) \Bigl(\frac{n \rho_n X_i}{t_i} - \frac{X_i}{X_i^{\top} \mu}\Bigr)$$
   is a sum of mean $0$ random variable. Hence, by Hoeffding's inequality, we also have that
   $$ \frac{\sqrt{n} \rho_n}{2 \sqrt{t_i}} \Bigl( \sum_{j \not =
   i} \frac{(a_{ij} - p_{ij})}{\sqrt{n \rho_n}} \Bigr) \Bigl(\frac{n \rho_n X_i}{t_i} - \frac{X_i}{X_i^{\top} \mu}\Bigr) \overset{\mathrm{a.s.}}{\longrightarrow} 0$$
   as $n \rightarrow \infty$. We thus have
   \begin{equation}
   \begin{split}
   \label{eq:clt-lse-proof-3}
   \frac{n \rho_n \sqrt{n \rho_n} X_i}{2 t_i \sqrt{t_i}}  \sum_{j \not =
   i} \frac{(a_{ij} - p_{ij})}{\sqrt{n \rho_n}} &= \frac{\sqrt{n \rho_n}}{2 \sqrt{t_i}} \sum_{j \not =
   i} \frac{(a_{ij} - p_{ij})X_i}{\sqrt{n \rho_n}X_i^{\top} \mu}  + o_{\mathbb{P}}(1). 
   \end{split}
   \end{equation}
   We next write
   $$ \frac{\sqrt{n \rho_n}}{2 \sqrt{t_i}} \sum_{j \not =
   i} \frac{(a_{ij} - p_{ij})X_i}{\sqrt{n \rho_n}X_i^{\top} \mu} = 
\frac{\sqrt{n \rho_n}(\tilde{\mathbf{X}}^{\top} \tilde{\mathbf{X}})^{-1}}{2 \sqrt{t_i}} \sum_{j \not = i}
   \frac{(a_{ij} - p_{ij})(\tilde{\Delta} + \tilde{\mathbf{X}}^{\top} \tilde{\mathbf{X}} - \tilde{\Delta})X_i}{\sqrt{n \rho_n} X_i^{\top} \mu}.$$

   We again evoke Hoeffding's inequality conditionally on $\mathbf{P}$ to conclude that
   \begin{equation}
   \label{eq:clt-lse-proof-4}
    \frac{\sqrt{n \rho_n}(\tilde{\mathbf{X}}^{\top} \tilde{\mathbf{X}})^{-1}}{2 \sqrt{t_i}} \sum_{j \not =
   i} \frac{(a_{ij} - p_{ij})(\tilde{\mathbf{X}}^{\top} \tilde{\mathbf{X}} - \tilde{\Delta})X_i}{\sqrt{n \rho_n} X_i^{\top} \mu} \overset{\mathrm{a.s.}}{\longrightarrow} 0
   \end{equation} as $n \rightarrow \infty$. Eq.~\eqref{eq:clt-lse-proof-2} then follows from Eq.~\eqref{eq:clt-lse-proof-3} and Eq.~\eqref{eq:clt-lse-proof-4}. 

   Combining Eq.~\eqref{eq:clt-lse-proof-1} and Eq.~\eqref{eq:clt-lse-proof-2}, we arrive at
   \begin{equation*}
   \begin{split}
   \zeta_i &= (\tilde{\mathbf{X}}^{\top} \tilde{\mathbf{X}})^{-1} \frac{\sqrt{n \rho_n}}{\sqrt{t_i}} \Bigl(\sum_{j \not =
   i} \frac{(a_{ij} - p_{ij})}{\sqrt{n \rho_n}} \Bigl(\frac{X_j}{X_j^{\top} \mu} - \frac{\tilde{\Delta} X_i}{2 X_i^{\top} \mu} \Bigr)\Bigr) + o_{\mathbb{P}}(1) \\
   &= (\tilde{\mathbf{X}}^{\top} \tilde{\mathbf{X}})^{-1} \frac{\sqrt{n \rho_n}}{\sqrt{t_i}} \Bigl(\sum_{j \not =
   i} \frac{(a_{ij} - \rho_n X_i^{\top} X_j)}{\sqrt{n \rho_n}} \Bigl(\frac{X_j}{X_j^{\top} \mu} - \frac{\tilde{\Delta} X_i}{2 X_i^{\top} \mu} \Bigr)\Bigr) + o_{\mathbb{P}}(1).
   \end{split} 
   \end{equation*}
   Now, for each fixed index $i$, conditioning on $X_i = x$, the quantity
   \begin{equation}
   \label{eq:clt_lse_tmp1}
    \frac{1}{\sqrt{n \rho_n}} \sum_{j \not = i} (a_{ij} - \rho_n X_i^{\top} X_j) \Bigl(\frac{X_j}{X_j^{\top} \mu} - \frac{\tilde{\Delta} X_i}{2 X_i^{\top} \mu} \Bigr) 
    \end{equation}
   is a sum of independent and identically distributed mean $0$ random variables. Therefore, by the multivariate central limit theorem, we have that conditional on $X_i = x$, the term in Eq.~\eqref{eq:clt_lse_tmp1} converges in distribution to
   $$ \mathcal{N}\Bigl(0, \mathbb{E}\Bigl[\Bigl(\frac{X_j}{X_j^{\top} \mu} - \frac{\tilde{\Delta} x}{2 x^{\top} \mu}\Bigr) (x^{\top} X_j - \rho_n x^{\top} X_j X_j^{\top} x) \Bigl(\frac{X_j}{X_j^{\top} \mu} - \frac{\tilde{\Delta} x}{2 x^{\top} \mu}\Bigr)^{\top} \Bigr]\Bigr). $$ 
   Finally, recall that $(\tilde{\mathbf{X}}^{\top} \tilde{\mathbf{X}})^{-1}$ and $\sqrt{n \rho_n/t_i}$ converge almost
   surely to $\tilde{\Delta}^{-1}$ and $(X_i^{\top} \mu)^{-1/2}$ as $n \rightarrow \infty$. Therefore, by Slutsky's theorem, conditional on $X_i = x$, $\zeta_i = n \rho_n^{1/2} (\mathbf{W}
   \breve{X}_i - \tilde{X}_i)$ converges in distribution to  
      \begin{equation*}
       \begin{split}
    \mathcal{N}\Bigl(0, \mathbb{E}\Bigl[\Bigl(\frac{\tilde{\Delta}^{-1} X_j}{X_j^{\top} \mu} - \frac{x}{2 x^{\top} \mu}\Bigr) \Bigl(\frac{x^{\top} X_j - \rho_n x^{\top} X_j X_j^{\top} x}{x^{\top} \mu}\Bigr) \Bigl(\frac{\tilde{\Delta}^{-1} X_j}{X_j^{\top} \mu} - \frac{x}{2 x^{\top} \mu} \Bigr)^{\top} \Bigr] \Bigr)
       \end{split}
     \end{equation*}
     as desired.

\subsection{Proof of Eq.~\eqref{eq:LSE-main1}}
\label{sec:proof-LSE-2}
We start with a concentration inequality for the spectral norm of $\mathbf{A} - \mathbf{P}$ and $\mathcal{L}(\mathbf{A}) - \mathcal{L}(\mathbf{P})$ in the case when $\mathbf{A}$ is an edge-independent inhomogenous random graph. 

\begin{lemma}[\cite{oliveira2009concentration,lu13:_spect}]
  \label{lem:lu-peng}
  Let $\mathbf{A} \sim \mathrm{Bernoulli}(\mathbf{P})$, i.e., $\mathbf{A}$ is a symmetric matrix whose upper triangular entries are independent Bernoulli random variables with $\mathbb{P}[a_{ij} = 1] = p_{ij}$.
  Let $\Delta = \max_{i} \sum_{j \not = i} p_{ij}$ and $\delta = \min_{i}
  \sum_{j \not = i} p_{ij}$ denotes the maximum and minimum row sums of $\mathbf{P}$. Suppose $\delta$ satisfies $\delta \gg
  \log^{4}(n)$. Then
  \begin{gather*}
  \|\mathbf{A} - \mathbf{P}\| = O_{\mathbb{P}}(\sqrt{\Delta}), \\
    \|\mathcal{L}(\mathbf{A}) - \mathcal{L}(\mathbf{P}) \| = O_{\mathbb{P}}(\delta^{-1/2}).
  \end{gather*}
\end{lemma}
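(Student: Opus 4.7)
The plan is to prove the two bounds separately, beginning with the spectral norm concentration of $\mathbf{A} - \mathbf{P}$ and then bootstrapping this to control the normalized Laplacian perturbation. For the first bound, I would write $\mathbf{A} - \mathbf{P} = \sum_{i<j}(a_{ij}-p_{ij})(e_i e_j^{\top} + e_j e_i^{\top})$ as a sum of independent mean-zero Hermitian matrices. Each summand has spectral norm at most $1$, and the matrix variance $\sum_{i<j}\mathbb{E}[\mathbf{Z}_{ij}^{2}]$ is diagonal with $i$-th entry $\sum_{j\neq i}p_{ij}(1-p_{ij}) \leq \Delta$. The classical matrix Bernstein inequality then yields $\|\mathbf{A}-\mathbf{P}\| = O_{\mathbb{P}}(\sqrt{\Delta \log n} + \log n)$; to reach the sharper $O_{\mathbb{P}}(\sqrt{\Delta})$ bound stated in the lemma, I would invoke Oliveira's refinement of the noncommutative Khintchine estimate, which under $\Delta \gg \log n$ (implied by our hypothesis $\delta \gg \log^{4}n$) eliminates the spurious $\sqrt{\log n}$ factor via a symmetrization plus trace-moment argument tailored to the Bernoulli structure.

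For the Laplacian bound, I would split
\[ \mathcal{L}(\mathbf{A}) - \mathcal{L}(\mathbf{P}) = \mathbf{T}^{-1/2}(\mathbf{A} - \mathbf{P})\mathbf{T}^{-1/2} + \bigl(\mathbf{D}^{-1/2}\mathbf{A}\mathbf{D}^{-1/2} - \mathbf{T}^{-1/2}\mathbf{A}\mathbf{T}^{-1/2}\bigr). \]
The first piece is an independent sum to which matrix Bernstein applies directly: each summand $(a_{ij}-p_{ij})(t_i t_j)^{-1/2}(e_i e_j^{\top} + e_j e_i^{\top})$ has spectral norm at most $\delta^{-1}$, and the diagonal entries of the variance matrix are $\sum_{j \neq i}p_{ij}/(t_i t_j) \leq \delta^{-1}$, so the Oliveira-type sharpening again delivers the target $O_{\mathbb{P}}(\delta^{-1/2})$ under $\delta \gg \log^{4}n$. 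For the second piece I would control the scalar degree fluctuations via a Bernstein bound plus a union bound, obtaining $\max_{i}|d_i - t_i| = O_{\mathbb{P}}(\sqrt{\Delta \log n})$, and hence by a Taylor expansion of $x \mapsto x^{-1/2}$ the entrywise bound $\|\mathbf{D}^{-1/2} - \mathbf{T}^{-1/2}\|_{\infty} = O_{\mathbb{P}}(\sqrt{\log n}/\delta)$. A further telescoping
\[ \mathbf{D}^{-1/2}\mathbf{A}\mathbf{D}^{-1/2} - \mathbf{T}^{-1/2}\mathbf{A}\mathbf{T}^{-1/2} = (\mathbf{D}^{-1/2}-\mathbf{T}^{-1/2})\mathbf{A}\mathbf{D}^{-1/2} + \mathbf{T}^{-1/2}\mathbf{A}(\mathbf{D}^{-1/2}-\mathbf{T}^{-1/2}) \]
combined with the triangle inequality $\|\mathbf{A}\| \leq \|\mathbf{P}\| + \|\mathbf{A}-\mathbf{P}\| = O(\Delta)$ shows that this residual contribution is of smaller order than $\delta^{-1/2}$ precisely when $\delta \gg \log^{4}n$, which is exactly the regime hypothesized.

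The main obstacle is removing the logarithmic factors to reach the sharp $O_{\mathbb{P}}(\sqrt{\Delta})$ and $O_{\mathbb{P}}(\delta^{-1/2})$ bounds rather than the weaker $O_{\mathbb{P}}(\sqrt{\Delta \log n})$ and $O_{\mathbb{P}}(\sqrt{\log n / \delta})$ bounds that follow from a direct application of matrix Bernstein. This requires the more delicate symmetrization-plus-high-moment-trace machinery of Oliveira (and its refinement in Lu--Peng for the Laplacian case), and the condition $\delta \gg \log^{4}n$ appears precisely to absorb the correction terms arising in that moment method. A secondary technical point is that when $\Delta$ and $\delta$ are not comparable, the degree-deviation residual term must be analyzed carefully to guarantee that it is dominated asymptotically by the main term $\mathbf{T}^{-1/2}(\mathbf{A}-\mathbf{P})\mathbf{T}^{-1/2}$.
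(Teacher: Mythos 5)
First, note that the paper contains no proof of this lemma at all: it is imported verbatim from \cite{oliveira2009concentration} and \cite{lu13:_spect}, so your sketch is being measured against the literature rather than against any argument in the paper. Your skeleton (matrix concentration for $\mathbf{A}-\mathbf{P}$, then a perturbative treatment of the degree normalization) is the natural one, but the attribution is backwards in a way that matters: Oliveira's bounds are $O(\sqrt{\Delta\log n})$ and $O(\sqrt{\log n/\delta})$ and \emph{retain} the logarithmic factor, while the log-free rates $O(\sqrt{\Delta})$ and $O(\delta^{-1/2})$ come from the F\"{u}redi--Koml\'{o}s/Vu trace-moment (closed-walk counting) method as carried out by Lu and Peng; the hypothesis $\delta\gg\log^{4}n$ is precisely the condition under which that combinatorial argument closes, not a condition that upgrades a Bernstein-type bound. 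Calling this an ``Oliveira-type sharpening of noncommutative Khintchine'' hides the actual mechanism, and for the adjacency bound you are in effect deferring the entire content of the lemma to a black box.

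The concrete gap is in your residual term. You bound $\|\mathbf{D}^{-1/2}-\mathbf{T}^{-1/2}\|=O_{\mathbb{P}}(\sqrt{\log n}/\delta)$ (this is sharp: for $G(n,p)$ the maximum degree deviation is genuinely of order $\sqrt{np\log n}$, so the $\sqrt{\log n}$ from the union bound is not removable), and combine it with $\|\mathbf{A}\|=O_{\mathbb{P}}(\Delta)$ and $\|\mathbf{D}^{-1/2}\|=O_{\mathbb{P}}(\delta^{-1/2})$. The telescoped term is then $O_{\mathbb{P}}(\Delta\sqrt{\log n}\,\delta^{-3/2})$, which, since $\Delta\ge\delta$, is at least $\sqrt{\log n}\cdot\delta^{-1/2}$ --- \emph{larger} than the target by a factor of $\sqrt{\log n}$ regardless of how large $\delta$ is. Even the sharper grouping $(\mathbf{I}-\mathbf{T}^{1/2}\mathbf{D}^{-1/2})\,\mathbf{T}^{-1/2}\mathbf{A}\mathbf{D}^{-1/2}$, using $\|\mathbf{T}^{-1/2}\mathbf{A}\mathbf{D}^{-1/2}\|=O_{\mathbb{P}}(1)$, yields only $O_{\mathbb{P}}(\sqrt{\log n/\delta})$. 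So your argument proves Oliveira's bound $\|\mathcal{L}(\mathbf{A})-\mathcal{L}(\mathbf{P})\|=O_{\mathbb{P}}(\sqrt{\log n/\delta})$, and the claim that the residual is of smaller order than $\delta^{-1/2}$ ``precisely when $\delta\gg\log^{4}n$'' is false: the offending $\sqrt{\log n}$ does not shrink as $\delta$ grows. Reaching the stated log-free rate requires handling the degree fluctuations inside the trace-moment computation (as Lu and Peng do), not peeling them off as a worst-case diagonal perturbation via $\max_i|d_i-t_i|$.
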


When $\mathbf{P} = \rho_n \mathbf{X}
\mathbf{X}^{\top}$ then $\delta$ and $\Delta$ are both of order $\Theta(n
\rho_n)$. Furthermore, the non-zero eigenvalues of $\mathbf{P}$ are
all of order $\Theta(n \rho_n)$ while the non-zero eigenvalues of
$\mathcal{L}(\mathbf{P})$ are all of order $\Theta(1)$. 
In light of Lemma~\ref{lem:lu-peng}, for our subsequent
derivation, we shall assume that $\rho_n = \omega(\log^{k}(n))$ for
some positive integer $k \geq 4$. 

Lemma~\ref{lem:lu-peng} implies the following proposition. 
\begin{proposition}
  \label{prop:2}
  Let $(\mathbf{A}, \mathbf{X}) \sim \mathrm{RDPG}(F)$ with sparsity factor $\rho_n$. Let
  $\mathbf{W}_1 \bm{\Sigma} \mathbf{W}_2^{\top}$ be the singular value
  decomposition of $\tilde{\mathbf{U}}_{\mathbf{P}}^{\top}
  \tilde{\mathbf{U}}_{\mathbf{A}}$. 
  Then
  \begin{equation*}
    \|\tilde{\mathbf{U}}_{\mathbf{P}}^{\top} \tilde{\mathbf{U}}_{\mathbf{A}} -
    \mathbf{W}_1 \mathbf{W}_2^{\top} \|_{F} = O_{\mathbb{P}}((n \rho_n)^{-1}).
  \end{equation*}
\end{proposition}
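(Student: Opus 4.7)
The plan is to reduce the bound to a statement about the singular values of $M := \tilde{\mathbf{U}}_{\mathbf{P}}^\top \tilde{\mathbf{U}}_{\mathbf{A}}$ being close to $1$, and then relate this to the spectral gap of $\mathcal{L}(\mathbf{P})$ via the Davis--Kahan theorem. For any $d \times d$ matrix $M$ with SVD $M = \mathbf{W}_1 \bm{\Sigma} \mathbf{W}_2^\top$, orthogonal invariance of the Frobenius norm gives
$$\|M - \mathbf{W}_1 \mathbf{W}_2^\top\|_F^2 \;=\; \|\bm{\Sigma} - I_d\|_F^2 \;=\; \sum_{i=1}^d (1 - \sigma_i)^2,$$
so it suffices to show that each of the $d$ singular values $\sigma_1, \dots, \sigma_d$ of $M$ satisfies $1 - \sigma_i = O_\mathbb{P}((n \rho_n)^{-1})$ (the sum over the fixed number $d$ of indices then contributes only a constant factor).

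Since both $\tilde{\mathbf{U}}_{\mathbf{P}}$ and $\tilde{\mathbf{U}}_{\mathbf{A}}$ have orthonormal columns, the singular values $\sigma_i$ of $M$ are exactly the cosines of the principal angles $\theta_1, \dots, \theta_d \in [0, \pi/2]$ between their column spans, so bounding $1 - \sigma_i$ reduces to bounding $\sin \theta_i$. To this end I would invoke Davis--Kahan. The matrix $\mathcal{L}(\mathbf{P}) = \mathcal{L}(\rho_n \mathbf{X} \mathbf{X}^\top) = \mathcal{L}(\mathbf{X}\mathbf{X}^\top)$ has rank exactly $d$, and because $\mathbb{E}[XX^\top]$ is of full rank, a law of large numbers argument (essentially the same one used in Eq.~\eqref{eq:tildeX-outer} to show $\tilde{\mathbf{X}}^\top \tilde{\mathbf{X}} \to \tilde{\Delta}$) shows that its $d$ non-zero eigenvalues are all of order $\Theta(1)$. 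Thus the gap between the $d$-th and $(d+1)$-th eigenvalues of $\mathcal{L}(\mathbf{P})$ is bounded below by a positive constant uniformly in $n$. Combining this with Lemma~\ref{lem:lu-peng}, which gives the perturbation bound $\|\mathcal{L}(\mathbf{A}) - \mathcal{L}(\mathbf{P})\| = O_\mathbb{P}((n \rho_n)^{-1/2})$, the Davis--Kahan theorem yields
$$\|\sin \Theta(\tilde{\mathbf{U}}_{\mathbf{A}}, \tilde{\mathbf{U}}_{\mathbf{P}})\| \;=\; O_\mathbb{P}((n \rho_n)^{-1/2}),$$
where $\sin \Theta$ denotes the diagonal matrix of sines of principal angles.

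To conclude I would apply the elementary inequality $1 - \cos\theta = \sin^2\theta / (1 + \cos\theta) \leq \sin^2\theta$, which is valid for $\theta \in [0, \pi/2]$ and, crucially, is \emph{quadratic} rather than linear in $\sin\theta$. Since $d$ is fixed,
$$\sum_{i=1}^d (1 - \sigma_i)^2 \;\leq\; \sum_{i=1}^d \sin^4\theta_i \;\leq\; d \cdot \|\sin \Theta\|^4 \;=\; O_\mathbb{P}((n \rho_n)^{-2}),$$
and taking square roots gives $\|M - \mathbf{W}_1 \mathbf{W}_2^\top\|_F = O_\mathbb{P}((n\rho_n)^{-1})$ as claimed. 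The key point, and the only real subtlety, is the quadratic improvement from $\|\sin\Theta\| = O_\mathbb{P}((n\rho_n)^{-1/2})$ to $\|M - \mathbf{W}_1 \mathbf{W}_2^\top\|_F = O_\mathbb{P}((n\rho_n)^{-1})$, which stems from $1 - \cos\theta = O(\sin^2\theta)$; a naive triangle inequality applied to $\mathbf{W}_1 \mathbf{W}_2^\top - M = \mathbf{W}_1(I - \bm{\Sigma})\mathbf{W}_2^\top$ and a direct bound $|1 - \sigma_i| \leq \|\sin\Theta\|$ would only give the rate $O_\mathbb{P}((n\rho_n)^{-1/2})$, which is insufficient for the subsequent arguments in the proof of Theorem~\ref{THM:LSE} where this proposition is used to absorb terms of order $O_\mathbb{P}((n\rho_n)^{-1})$.
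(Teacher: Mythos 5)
Your proposal is correct and follows essentially the same route as the paper: reduce to $\|\bm{\Sigma}-\mathbf{I}\|_F$ via orthogonal invariance, identify the singular values with cosines of principal angles, apply Davis--Kahan with the $\Theta(1)$ spectral gap of $\mathcal{L}(\mathbf{P})$ and the concentration bound of Lemma~\ref{lem:lu-peng}, and then exploit the quadratic relation $1-\cos\theta_i \leq \sin^2\theta_i$ (the paper phrases this as $1-\sigma_i \leq 1-\sigma_i^2$) to upgrade the rate from $(n\rho_n)^{-1/2}$ to $(n\rho_n)^{-1}$. Your closing remark correctly identifies this quadratic improvement as the one substantive point of the argument.
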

\begin{proof}
  Let $\sigma_1, \sigma_2, \dots, \sigma_d$ denote the singular values of
  $\tilde{\mathbf{U}}_{\mathbf{P}}^{\top} \tilde{\mathbf{U}}_{\mathbf{A}}$ (the diagonal
  entries of $\bm{\Sigma}$). Then $\sigma_i = \cos(\theta_i)$ where
  the $\theta_i$ are the principal angles between the subspaces
  spanned by $\tilde{\mathbf{U}}_{\mathbf{A}}$ and
  $\mathbf{U}_{\mathbf{P}}$. Furthermore, by the Davis-Kahan
  $\sin(\Theta)$ theorem (see e.g., Theorem 3.6 in \cite{stewart90:_matrix}),
  \begin{equation*}
    \| \tilde{\mathbf{U}}_{\mathbf{A}} \tilde{\mathbf{U}}_{\mathbf{A}}^{\top} -
    \tilde{\mathbf{U}}_{\mathbf{P}} \tilde{\mathbf{U}}_{\mathbf{P}}^{\top} \| =
    \max_{i} | \sin(\theta_i) | \leq
    \frac{\|\mathcal{L}(\mathbf{A}) - \mathcal{L}(\mathbf{P})\|}{\lambda_{d}(\mathcal{L}(\mathbf{P}))} = O_{\mathbb{P}}((n \rho_n)^{-1/2}).
  \end{equation*}
  Here $\lambda_{d}(\mathcal{L}(\mathbf{P}))$ denotes 
  the $d$ largest eigenvalue of $\mathcal{L}(\mathbf{P})$.  We thus have
  \begin{equation*}
    \begin{split}
    \|\tilde{\mathbf{U}}_{\mathbf{P}}^{\top} \tilde{\mathbf{U}}_{\mathbf{A}} -
    \mathbf{W}_1 \mathbf{W}_2^{\top} \|_{F} = \|\bm{\Sigma} -
    \mathbf{I} \|_{F} &= \Bigl(\sum_{i=1}^{d} (1 - \sigma_i)^2\Bigr)^{1/2}  \\ & \leq \sum_{i=1}^{d} (1 -
    \sigma_i^{2}) = \sum_{i=1}^{d} \sin^{2}(\theta_i).
  \end{split}
  \end{equation*}
  Thererfore $\|\tilde{\mathbf{U}}_{\mathbf{P}}^{\top} \tilde{\mathbf{U}}_{\mathbf{A}} -
    \mathbf{W}_1 \mathbf{W}_2^{\top} \|_{F}
     = O_{\mathbb{P}}((n \rho_n)^{-1})
  $ as desired. 
\end{proof}
From now on, we shall denote by $\mathbf{W}^{*}$
the orthogonal matrix $\mathbf{W}_1 \mathbf{W}_2^{\top}$ as defined in the above
proposition. Next, we state the following lemma.
\begin{lemma}
  \label{lem:2}
     Let $(\mathbf{A}, \mathbf{X}) \sim \mathrm{RDPG}(F)$ with sparsity factor $\rho_n$. Then 
  \begin{gather}
  \label{eq:lem2a}
    n \rho_n \|\tilde{\mathbf{U}}_{\mathbf{P}}^{\top}
    \tilde{\mathbf{U}}_{\mathbf{A}} \tilde{\mathbf{S}}_{\mathbf{A}} -
    \tilde{\mathbf{S}}_{\mathbf{P}} \tilde{\mathbf{U}}_{\mathbf{P}}^{\top} \tilde{\mathbf{U}}_{\mathbf{A}} \| = O_{\mathbb{P}}(1), \\
    \label{eq:lem2b}
    n \rho_n \| \tilde{\mathbf{U}}_{\mathbf{P}}^{\top}
    \tilde{\mathbf{U}}_{\mathbf{A}} \tilde{\mathbf{S}}_{\mathbf{A}}^{1/2} -
    \tilde{\mathbf{S}}_{\mathbf{P}}^{1/2}
    \tilde{\mathbf{U}}_{\mathbf{P}}^{\top}
    \tilde{\mathbf{U}}_{\mathbf{A}}   \| = O_{\mathbb{P}}(1), \\
    \label{eq:lem2c}
    n \rho_n \| \tilde{\mathbf{U}}_{\mathbf{P}}^{\top}
    \tilde{\mathbf{U}}_{\mathbf{A}} \tilde{\mathbf{S}}_{\mathbf{A}}^{-1/2} -
    \tilde{\mathbf{S}}_{\mathbf{P}}^{-1/2}
    \tilde{\mathbf{U}}_{\mathbf{P}}^{\top}
    \tilde{\mathbf{U}}_{\mathbf{A}}   \| = O_{\mathbb{P}}(1). 
  \end{gather}
\end{lemma}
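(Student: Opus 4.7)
The plan is to prove \eqref{eq:lem2a} first, and then to deduce \eqref{eq:lem2b} and \eqref{eq:lem2c} from it via a Sylvester-equation argument combined with Proposition~\ref{prop:2}. The eigenvalue equations $\mathcal{L}(\mathbf{A})\tilde{\mathbf{U}}_{\mathbf{A}} = \tilde{\mathbf{U}}_{\mathbf{A}} \tilde{\mathbf{S}}_{\mathbf{A}}$ and $\tilde{\mathbf{U}}_{\mathbf{P}}^{\top}\mathcal{L}(\mathbf{P}) = \tilde{\mathbf{S}}_{\mathbf{P}} \tilde{\mathbf{U}}_{\mathbf{P}}^{\top}$ immediately give the algebraic identity
\[
\tilde{\mathbf{U}}_{\mathbf{P}}^{\top} \tilde{\mathbf{U}}_{\mathbf{A}} \tilde{\mathbf{S}}_{\mathbf{A}} - \tilde{\mathbf{S}}_{\mathbf{P}} \tilde{\mathbf{U}}_{\mathbf{P}}^{\top} \tilde{\mathbf{U}}_{\mathbf{A}} = \tilde{\mathbf{U}}_{\mathbf{P}}^{\top}\bigl(\mathcal{L}(\mathbf{A}) - \mathcal{L}(\mathbf{P})\bigr) \tilde{\mathbf{U}}_{\mathbf{A}},
\]
so \eqref{eq:lem2a} reduces to showing that this right-hand side has spectral norm $O_{\mathbb{P}}((n\rho_n)^{-1})$. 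Because $\|\mathcal{L}(\mathbf{A}) - \mathcal{L}(\mathbf{P})\| = O_{\mathbb{P}}((n\rho_n)^{-1/2})$ by Lemma~\ref{lem:lu-peng}, the naive spectral bound is too weak by a factor of $(n\rho_n)^{1/2}$, and recovering this missing factor is the heart of the argument.

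I would split $\tilde{\mathbf{U}}_{\mathbf{A}} = \tilde{\mathbf{U}}_{\mathbf{P}} (\tilde{\mathbf{U}}_{\mathbf{P}}^{\top} \tilde{\mathbf{U}}_{\mathbf{A}}) + (I - \tilde{\mathbf{U}}_{\mathbf{P}} \tilde{\mathbf{U}}_{\mathbf{P}}^{\top}) \tilde{\mathbf{U}}_{\mathbf{A}}$. The Davis--Kahan $\sin\Theta$ theorem gives $\|(I - \tilde{\mathbf{U}}_{\mathbf{P}} \tilde{\mathbf{U}}_{\mathbf{P}}^{\top})\tilde{\mathbf{U}}_{\mathbf{A}}\| = O_{\mathbb{P}}((n\rho_n)^{-1/2})$, so the orthogonal-complement contribution combines with the spectral estimate on $\mathcal{L}(\mathbf{A}) - \mathcal{L}(\mathbf{P})$ to yield $O_{\mathbb{P}}((n\rho_n)^{-1})$ directly. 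What remains is the parallel piece $\tilde{\mathbf{U}}_{\mathbf{P}}^{\top}(\mathcal{L}(\mathbf{A}) - \mathcal{L}(\mathbf{P}))\tilde{\mathbf{U}}_{\mathbf{P}} \cdot (\tilde{\mathbf{U}}_{\mathbf{P}}^{\top} \tilde{\mathbf{U}}_{\mathbf{A}})$, and this is where I expect the main obstacle. I would condition on $\mathbf{X}$ (fixing $\mathbf{T}$ and $\tilde{\mathbf{U}}_{\mathbf{P}}$) and Taylor-expand $\mathbf{D}^{-1/2} = \mathbf{T}^{-1/2} + \tfrac{1}{2}\mathbf{T}^{-3/2}(\mathbf{T} - \mathbf{D}) + O_{\mathbb{P}}((n\rho_n)^{-3/2})$ entry-wise on the diagonal, writing $\mathcal{L}(\mathbf{A}) - \mathcal{L}(\mathbf{P})$ as a leading linear term $\mathbf{T}^{-1/2}(\mathbf{A} - \mathbf{P})\mathbf{T}^{-1/2}$ plus diagonal-noise corrections involving $\mathbf{T} - \mathbf{D}$. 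Each row of $\tilde{\mathbf{U}}_{\mathbf{P}}$ has Euclidean norm $O(n^{-1/2})$ (since $\tilde{\mathbf{U}}_{\mathbf{P}}$ spans the column space of $\tilde{\mathbf{X}}$, whose rows are $O(n^{-1/2})$); a conditional variance computation then shows that each entry of the $d \times d$ matrix $\tilde{\mathbf{U}}_{\mathbf{P}}^{\top} \mathbf{T}^{-1/2}(\mathbf{A} - \mathbf{P}) \mathbf{T}^{-1/2} \tilde{\mathbf{U}}_{\mathbf{P}}$ has standard deviation of order $(n\sqrt{\rho_n})^{-1} \leq (n\rho_n)^{-1}$, and Bernstein (or Hoeffding) applied entry-wise produces the sought spectral-norm bound $O_{\mathbb{P}}((n\rho_n)^{-1})$ since $d$ is fixed. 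The Taylor-expansion correction terms are handled analogously, using $\|\mathbf{T} - \mathbf{D}\|_{\infty} = O_{\mathbb{P}}(\sqrt{n\rho_n \log n})$ and similar conditional variance bounds.

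With \eqref{eq:lem2a} in hand, Proposition~\ref{prop:2} permits the substitution $\tilde{\mathbf{U}}_{\mathbf{P}}^{\top} \tilde{\mathbf{U}}_{\mathbf{A}} = \mathbf{W}^{*} + O_{\mathbb{P}}((n\rho_n)^{-1})$, yielding the approximate intertwining $\mathbf{W}^{*} \tilde{\mathbf{S}}_{\mathbf{A}} - \tilde{\mathbf{S}}_{\mathbf{P}} \mathbf{W}^{*} = O_{\mathbb{P}}((n\rho_n)^{-1})$. Setting $F = \mathbf{W}^{*} \tilde{\mathbf{S}}_{\mathbf{A}}^{1/2} - \tilde{\mathbf{S}}_{\mathbf{P}}^{1/2} \mathbf{W}^{*}$, direct computation yields the Sylvester-type identity
\[
\tilde{\mathbf{S}}_{\mathbf{P}}^{1/2} F + F \tilde{\mathbf{S}}_{\mathbf{A}}^{1/2} = \mathbf{W}^{*} \tilde{\mathbf{S}}_{\mathbf{A}} - \tilde{\mathbf{S}}_{\mathbf{P}} \mathbf{W}^{*}.
\]
Both $\tilde{\mathbf{S}}_{\mathbf{A}}^{1/2}$ and $\tilde{\mathbf{S}}_{\mathbf{P}}^{1/2}$ are positive definite with smallest eigenvalue bounded below in probability (via Weyl together with the fact that the non-zero eigenvalues of $\mathcal{L}(\mathbf{P})$ are $\Theta(1)$), so the spectra of $\tilde{\mathbf{S}}_{\mathbf{P}}^{1/2}$ and $-\tilde{\mathbf{S}}_{\mathbf{A}}^{1/2}$ are separated by $\Theta(1)$ and the standard Sylvester bound gives $\|F\| = O_{\mathbb{P}}((n\rho_n)^{-1})$; substituting $\tilde{\mathbf{U}}_{\mathbf{P}}^{\top} \tilde{\mathbf{U}}_{\mathbf{A}}$ back in for $\mathbf{W}^{*}$ via Proposition~\ref{prop:2} yields \eqref{eq:lem2b}. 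Finally, \eqref{eq:lem2c} follows from the algebraic rewrite
\[
\mathbf{W}^{*} \tilde{\mathbf{S}}_{\mathbf{A}}^{-1/2} - \tilde{\mathbf{S}}_{\mathbf{P}}^{-1/2} \mathbf{W}^{*} = -\tilde{\mathbf{S}}_{\mathbf{P}}^{-1/2} \bigl(\mathbf{W}^{*} \tilde{\mathbf{S}}_{\mathbf{A}}^{1/2} - \tilde{\mathbf{S}}_{\mathbf{P}}^{1/2} \mathbf{W}^{*}\bigr) \tilde{\mathbf{S}}_{\mathbf{A}}^{-1/2}
\]
together with the bounded norms $\|\tilde{\mathbf{S}}_{\mathbf{A}}^{-1/2}\|, \|\tilde{\mathbf{S}}_{\mathbf{P}}^{-1/2}\| = O_{\mathbb{P}}(1)$ and one more application of Proposition~\ref{prop:2}. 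The main obstacle throughout is the sharper-than-spectral-norm control of $\tilde{\mathbf{U}}_{\mathbf{P}}^{\top}(\mathcal{L}(\mathbf{A}) - \mathcal{L}(\mathbf{P}))\tilde{\mathbf{U}}_{\mathbf{P}}$; everything after that is perturbation bookkeeping and a standard Sylvester bound.
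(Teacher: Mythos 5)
Your proposal is correct and follows essentially the same route as the paper: the reduction of \eqref{eq:lem2a} to $\tilde{\mathbf{U}}_{\mathbf{P}}^{\top}(\mathcal{L}(\mathbf{A})-\mathcal{L}(\mathbf{P}))\tilde{\mathbf{U}}_{\mathbf{A}}$, the split into the Davis--Kahan-controlled perpendicular piece and the quadratic form $\tilde{\mathbf{U}}_{\mathbf{P}}^{\top}(\mathcal{L}(\mathbf{A})-\mathcal{L}(\mathbf{P}))\tilde{\mathbf{U}}_{\mathbf{P}}$ handled by Taylor-expanding $\mathbf{D}^{-1/2}$ and concentrating entrywise, is exactly the paper's argument (its Lemma~\ref{lemma:quadratic-up-ua}). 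Your Sylvester-equation step for \eqref{eq:lem2b} and \eqref{eq:lem2c} is just the coordinate-free form of the paper's entrywise identity $r_{ij}(\tilde{\lambda}_j^{1/2}(\mathbf{A})-\tilde{\lambda}_i^{1/2}(\mathbf{P})) = r_{ij}(\tilde{\lambda}_j(\mathbf{A})-\tilde{\lambda}_i(\mathbf{P}))/(\tilde{\lambda}_j^{1/2}(\mathbf{A})+\tilde{\lambda}_i^{1/2}(\mathbf{P}))$, so the two arguments coincide.
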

In proving Lemma~\ref{lem:2}, we need the following technical result. Lemma~\ref{lem:2} and Lemma~\ref{lemma:quadratic-up-ua} are the key technical lemmas of this paper. Roughly speaking, Lemma~\ref{lem:2} along with Proposition~\ref{prop:2} allows us to interchange the order of the orthogonal transformation $\mathbf{W}^{*}$ with the diagonal scaling matrices $\mathbf{S}_{\mathbf{A}}$ or $\mathbf{S}_{\mathbf{A}}$; Lemma~\ref{lemma:quadratic-up-ua} simplifies various expressions involving $\mathbf{A}, \mathbf{D}, \mathcal{L}(\mathbf{A})$ and $\tilde{\mathbf{U}}_{\mathbf{A}}$. 
\begin{lemma}
  \label{lemma:quadratic-up-ua}
  Let $(\mathbf{A}, \mathbf{X}) \sim \mathrm{RDPG}(F)$ with sparsity factor $\rho_n$. Then the following holds simultaneously
  \begin{gather}
  \label{eq:uLA-LPu}
  \mathbf{D}^{-1/2} - \mathbf{T}^{-1/2} = \tfrac{1}{2} \mathbf{T}^{-3/2} (\mathbf{T} - \mathbf{D}) + O_{\mathbb{P}}((n \rho_n)^{-3/2}), \\
  \label{eq:uLA-LPu1b}
  \mathcal{L}(\mathbf{A}) = \mathbf{T}^{-1/2}(\mathbf{A} - \mathbf{P}) \mathbf{T}^{-1/2} + \mathbf{D}^{-1/2} \mathbf{P} \mathbf{D}^{-1/2} + O_{\mathbb{P}}((n \rho_n)^{-1}),
  \end{gather}
  \begin{equation}
  \begin{split}
  \label{eq:uLA-LPu2}
    \mathbf{D}^{-1/2}
      \mathbf{P} \mathbf{D}^{-1/2} - \mathcal{L}(\mathbf{P}) &= 
      \tfrac{1}{2} \mathbf{T}^{-3/2} (\mathbf{T} - \mathbf{D})
      \mathbf{P} \mathbf{D}^{-1/2} \\ & + \tfrac{1}{2} \mathbf{T}^{-1/2}
       \mathbf{P} \mathbf{T}^{-3/2}(\mathbf{T} - \mathbf{D}) + O_{\mathbb{P}}((n \rho_n)^{-1}).
    \end{split}
  \end{equation}
  \begin{gather}
  \label{eq:uLA-LPu3a}
    \tilde{\mathbf{U}}_{\mathbf{A}} - \tilde{\mathbf{U}}_{\mathbf{P}} \tilde{\mathbf{U}}_{\mathbf{P}}^{\top} \tilde{\mathbf{U}}_{\mathbf{A}} = O_{\mathbb{P}}((n \rho_n)^{-1/2}). \\ 
      \label{eq:uLA-LPu3b}
       \mathbf{T}^{-1/2} \mathbf{P}
      \mathbf{T}^{-3/2}(\mathbf{T} - \mathbf{D})
      \tilde{\mathbf{U}}_{\mathbf{P}} = O_{\mathbb{P}}((n \rho_n)^{-1}), \\
        \label{eq:uLA-LPu3c}
      \tilde{\mathbf{U}}_{\mathbf{P}}^{\top} \mathbf{T}^{-3/2} (\mathbf{T} - \mathbf{D})
      \mathbf{P} \mathbf{D}^{-1/2} = O_{\mathbb{P}}((n \rho_n)^{-1}), \\
        \label{eq:uLA-LPu3d}
          \tilde{\mathbf{U}}_{\mathbf{P}}^{\top} (\mathcal{L}(\mathbf{A}) -
    \mathcal{L}(\mathbf{P})) \tilde{\mathbf{U}}_{\mathbf{P}} = O_{\mathbb{P}}((n \rho_n)^{-1}). 
\end{gather}
\end{lemma}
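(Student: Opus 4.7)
The eight bounds form a natural chain, and I would prove them in the order stated, with Eq.~\eqref{eq:uLA-LPu} as the backbone. For \eqref{eq:uLA-LPu} itself I would apply the scalar second-order Taylor expansion $x^{-1/2} - y^{-1/2} = -\tfrac{1}{2} y^{-3/2}(x-y) + O(y^{-5/2}(x-y)^2)$ entrywise with $x=d_i$, $y=t_i$. Since $d_i - t_i = \sum_{j\ne i}(a_{ij}-p_{ij})$ is a sum of independent centered Bernoulli variables with variance $\Theta(n\rho_n)$, Bernstein's inequality combined with a union bound gives $\sup_i |d_i-t_i| = O_{\mathbb{P}}(\sqrt{n\rho_n})$, while the strong law of large numbers applied to the latent positions yields $t_i = \Theta(n\rho_n)$ uniformly in $i$ with high probability. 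The Taylor remainder is then a diagonal matrix with entries of order $O_{\mathbb{P}}((n\rho_n)^{-3/2})$, which controls both the spectral and Frobenius norms.

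For \eqref{eq:uLA-LPu1b} and \eqref{eq:uLA-LPu2} I would write $\mathbf{D}^{-1/2} = \mathbf{T}^{-1/2} + \mathbf{F}$ with $\|\mathbf{F}\| = O_{\mathbb{P}}((n\rho_n)^{-1})$ from the previous step, split $\mathbf{A} = \mathbf{P} + (\mathbf{A}-\mathbf{P})$ inside $\mathcal{L}(\mathbf{A}) = \mathbf{D}^{-1/2}\mathbf{A}\mathbf{D}^{-1/2}$, and expand. The only bookkeeping is verifying that the various cross terms are $O_{\mathbb{P}}((n\rho_n)^{-1})$, which follows from $\|\mathbf{A}-\mathbf{P}\| = O_{\mathbb{P}}(\sqrt{n\rho_n})$ in Lemma~\ref{lem:lu-peng}, $\|\mathbf{T}^{-1/2}\| = O((n\rho_n)^{-1/2})$, and $\|\mathbf{P}\| = \Theta(n\rho_n)$. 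A final replacement of a trailing $\mathbf{T}^{-1/2}$ by $\mathbf{D}^{-1/2}$ in the first linear-in-$\mathbf{F}$ term of the $\mathbf{P}$-sandwich produces the asymmetric form in \eqref{eq:uLA-LPu2}. For the angular estimate \eqref{eq:uLA-LPu3a} I would use the identity $\tilde{\mathbf{U}}_{\mathbf{A}} - \tilde{\mathbf{U}}_{\mathbf{P}}\tilde{\mathbf{U}}_{\mathbf{P}}^{\top}\tilde{\mathbf{U}}_{\mathbf{A}} = (\mathbf{I}-\tilde{\mathbf{U}}_{\mathbf{P}}\tilde{\mathbf{U}}_{\mathbf{P}}^{\top})(\tilde{\mathbf{U}}_{\mathbf{A}}\tilde{\mathbf{U}}_{\mathbf{A}}^{\top} - \tilde{\mathbf{U}}_{\mathbf{P}}\tilde{\mathbf{U}}_{\mathbf{P}}^{\top})\tilde{\mathbf{U}}_{\mathbf{A}}$ and invoke the Davis--Kahan projection bound $\|\tilde{\mathbf{U}}_{\mathbf{A}}\tilde{\mathbf{U}}_{\mathbf{A}}^{\top} - \tilde{\mathbf{U}}_{\mathbf{P}}\tilde{\mathbf{U}}_{\mathbf{P}}^{\top}\| = O_{\mathbb{P}}((n\rho_n)^{-1/2})$ already derived in Proposition~\ref{prop:2}.

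The main obstacle is Eqs.~\eqref{eq:uLA-LPu3b}--\eqref{eq:uLA-LPu3d}: a naive use of sub-multiplicativity yields only $O_{\mathbb{P}}((n\rho_n)^{-1/2})$ per expression, a factor of $\sqrt{n\rho_n}$ weaker than claimed. The missing cancellation will come from the fact that $\mathbf{T}$, $\mathbf{P}$, and $\tilde{\mathbf{U}}_{\mathbf{P}}$ are $\sigma(\mathbf{X})$-measurable, hence independent of the Bernoulli noise $\{a_{ij}-p_{ij}\}_{i<j}$, so conditional on $\mathbf{X}$ every relevant quantity is a scalar linear or bilinear form in those independent noise variables whose coefficient $\ell^2$-norm can be computed explicitly. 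Concretely, via $t_j-d_j = \sum_{l\ne j}(p_{jl}-a_{jl})$ each entry of $\mathbf{T}^{-1/2}\mathbf{P}\mathbf{T}^{-3/2}(\mathbf{T}-\mathbf{D})\tilde{\mathbf{U}}_{\mathbf{P}}$ rewrites as $\sum_{j<l} c_{jl}(p_{jl}-a_{jl})$ whose squared coefficient $\ell^2$-norm is $O(n^{-3}\rho_n^{-1})$ (using $p_{ij} = \Theta(\rho_n)$, $t_i = \Theta(n\rho_n)$, and the orthonormality of the columns of $\tilde{\mathbf{U}}_{\mathbf{P}}$); Bernstein conditionally on $\mathbf{X}$ gives each entry size $O_{\mathbb{P}}(n^{-3/2}\rho_n^{-1/2})$, and summing over the $nd$ squared entries yields the claimed Frobenius bound $O_{\mathbb{P}}((n\rho_n)^{-1})$. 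Eq.~\eqref{eq:uLA-LPu3c} follows by the transpose-symmetric argument, after first replacing its trailing $\mathbf{D}^{-1/2}$ by $\mathbf{T}^{-1/2}$ via \eqref{eq:uLA-LPu} at negligible cost. Finally, substituting \eqref{eq:uLA-LPu1b}--\eqref{eq:uLA-LPu2} into \eqref{eq:uLA-LPu3d}, the only genuinely new summand is $\tilde{\mathbf{U}}_{\mathbf{P}}^{\top}\mathbf{T}^{-1/2}(\mathbf{A}-\mathbf{P})\mathbf{T}^{-1/2}\tilde{\mathbf{U}}_{\mathbf{P}}$, a $d\times d$ bilinear form $\sum_{i<j} c_{ij}^{(k,l)}(a_{ij}-p_{ij})$ with $\sigma(\mathbf{X})$-measurable coefficients whose per-entry squared $\ell^2$-norm is $O(n^{-2}\rho_n^{-1})$, yielding per-entry size $O_{\mathbb{P}}(n^{-1}\rho_n^{-1/2}) = O_{\mathbb{P}}((n\rho_n)^{-1})$. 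The technical care throughout is to perform every concentration step conditionally on $\mathbf{X}$, uniformize via a union bound over the $O(nd)$ or $O(d^2)$ entries, and check that the $\sigma(\mathbf{X})$-measurable prefactors have their claimed deterministic order with high probability.
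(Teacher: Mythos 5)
Your proposal is correct and follows essentially the same route as the paper: the algebraic/Taylor expansion of $\mathbf{D}^{-1/2}$ with a union bound over vertices, the spectral-norm concentration of $\mathbf{A}-\mathbf{P}$ for the cross terms, Davis--Kahan for \eqref{eq:uLA-LPu3a}, and—crucially—the recognition that \eqref{eq:uLA-LPu3b}--\eqref{eq:uLA-LPu3d} cannot be obtained by submultiplicativity alone but must be treated as linear and bilinear forms in the independent centered Bernoulli entries conditional on $\mathbf{X}$, concentrated entrywise and union-bounded. The only cosmetic differences are that the paper uses Hoeffding rather than Bernstein and routes the bound for \eqref{eq:uLA-LPu3b}--\eqref{eq:uLA-LPu3c} through the factorization $\mathbf{P}=\rho_n\mathbf{X}\mathbf{X}^{\top}$ and Cauchy--Schwarz (reducing to a $d$-dimensional vector times an $O_{\mathbb{P}}(1)$ spectral factor) instead of summing squared entries of the full $n\times d$ matrix.
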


We continue with the proof of Eq.~\eqref{eq:LSE-main1}. Let $\bm{\Pi} = \tilde{\mathbf{U}}_{\mathbf{P}} \tilde{\mathbf{U}}_{\mathbf{P}}^{\top}$ and $\bm{\Pi}^{\perp} = \mathbf{I} - \bm{\Pi}$.
Proposition~\ref{prop:2} and Lemma~\ref{lem:2} then yield
  \begin{equation*}
    \begin{split}
      \tilde{\mathbf{U}}_{\mathbf{A}} \tilde{\mathbf{S}}_{\mathbf{A}}^{1/2}
     - \tilde{\mathbf{U}}_{\mathbf{P}} \tilde{\mathbf{S}}_{\mathbf{P}}^{1/2}
     \mathbf{W}^{*} & = 
     \tilde{\mathbf{U}}_{\mathbf{A}} \tilde{\mathbf{S}}_{\mathbf{A}}^{1/2}
     - \tilde{\mathbf{U}}_{\mathbf{P}} \tilde{\mathbf{S}}_{\mathbf{P}}^{1/2}
     \tilde{\mathbf{U}}_{\mathbf{P}}^{\top}
     \tilde{\mathbf{U}}_{\mathbf{A}}  + O_{\mathbb{P}}( (n \rho_n)^{-1}) \\
     &= \tilde{\mathbf{U}}_{\mathbf{A}} \tilde{\mathbf{S}}_{\mathbf{A}}^{1/2}
     - \tilde{\mathbf{U}}_{\mathbf{P}} 
     \tilde{\mathbf{U}}_{\mathbf{P}}^{\top}
     \tilde{\mathbf{U}}_{\mathbf{A}}
     \tilde{\mathbf{S}}_{\mathbf{A}}^{1/2}  + O_{\mathbb{P}}((n \rho_n)^{-1})
     \\
     &= \bm{\Pi}^{\perp} \mathcal{L}(\mathbf{A})
     \tilde{\mathbf{U}}_{\mathbf{A}} \tilde{\mathbf{S}}_{\mathbf{A}}^{-1/2}
     + O_{\mathbb{P}}((n \rho_n)^{-1}). 
     \end{split}
\end{equation*}
Since $\mathcal{L}(\mathbf{P}) = \tilde{\mathbf{U}}_\mathbf{P} \tilde{\mathbf{S}}_{\mathbf{P}} \tilde{\mathbf{U}}_{\mathbf{P}}^{\top}$, 
 $\bm{\Pi}^{\perp} \mathcal{L}(\mathbf{P})
     = \bm{0}$ and hence
     \begin{equation}
       \label{eq:7}
      \tilde{\mathbf{U}}_{\mathbf{A}} \tilde{\mathbf{S}}_{\mathbf{A}}^{1/2}
     - \tilde{\mathbf{U}}_{\mathbf{P}} \tilde{\mathbf{S}}_{\mathbf{P}}^{1/2}
     \mathbf{W}^{*} = \bm{\Pi}^{\perp} (\mathcal{L}(\mathbf{A})
     - \mathcal{L}(\mathbf{P}))
     \tilde{\mathbf{U}}_{\mathbf{A}}
     \tilde{\mathbf{S}}_{\mathbf{A}}^{-1/2} + O_{\mathbb{P}}((n \rho_n)^{-1}).
     \end{equation}
   In addition,
    \begin{equation*}
    \begin{split}
(\mathcal{L}(\mathbf{A})
     - \mathcal{L}(\mathbf{P}))
     \tilde{\mathbf{U}}_{\mathbf{A}} \tilde{\mathbf{S}}_{\mathbf{A}}^{-1/2} 
     & = (\mathcal{L}(\mathbf{A})
     - \mathcal{L}(\mathbf{P})) \bm{\Pi}^{\perp}
     \tilde{\mathbf{U}}_{\mathbf{A}}\tilde{\mathbf{S}}_{\mathbf{A}}^{-1/2}
     + (\mathcal{L}(\mathbf{A})
     - \mathcal{L}(\mathbf{P})) \bm{\Pi}
     \tilde{\mathbf{U}}_{\mathbf{A}} \tilde{\mathbf{S}}_{\mathbf{A}}^{-1/2}
      \\
     &= O_{\mathbb{P}}((n \rho_n)^{-1}) + (\mathcal{L}(\mathbf{A})
     - \mathcal{L}(\mathbf{P})) \bm{\Pi}
     \tilde{\mathbf{U}}_{\mathbf{A}} \tilde{\mathbf{S}}_{\mathbf{A}}^{-1/2},
     \end{split}
\end{equation*}
where we bound $(\mathcal{L}(\mathbf{A})
     - \mathcal{L}(\mathbf{P})) \bm{\Pi}^{\perp}
     \tilde{\mathbf{U}}_{\mathbf{A}} \tilde{\mathbf{S}}_{\mathbf{A}}^{-1/2}$ using Eq.~\eqref{eq:uLA-LPu3a} and the submultiplicatity of the spectral norm. 
Eq.~\eqref{eq:7} then implies
\begin{equation}
\label{eq:7b}
\begin{split}
      \tilde{\mathbf{U}}_{\mathbf{A}} \tilde{\mathbf{S}}_{\mathbf{A}}^{1/2}
     - \tilde{\mathbf{U}}_{\mathbf{P}} \tilde{\mathbf{S}}_{\mathbf{P}}^{1/2}
     \mathbf{W}^{*} &= 
\bm{\Pi}^{\perp} (\mathcal{L}(\mathbf{A})
     - \mathcal{L}(\mathbf{P}))      \tilde{\mathbf{U}}_{\mathbf{A}}
     \tilde{\mathbf{S}}_{\mathbf{A}}^{-1/2} + O_{\mathbb{P}}((n \rho_n)^{-1})
     \\ &= \bm{\Pi}^{\perp} (\mathcal{L}(\mathbf{A})
     - \mathcal{L}(\mathbf{P})) \bm{\Pi}
     \tilde{\mathbf{U}}_{\mathbf{A}}
     \tilde{\mathbf{S}}_{\mathbf{A}}^{-1/2} +  O_{\mathbb{P}}((n \rho_n)^{-1}).
     \end{split}
     \end{equation}
     By Eq.~\eqref{eq:uLA-LPu3d} and sub-multiplicativity of the Frobenius norm, we also have
     \begin{equation*}
  \begin{split}
    \bm{\Pi}^{\perp} (\mathcal{L}(\mathbf{A})
     - \mathcal{L}(\mathbf{P}))  \bm{\Pi} \tilde{\mathbf{U}}_{\mathbf{A}}
     \tilde{\mathbf{S}}_{\mathbf{A}}^{-1/2}   
     & = 
     (\mathcal{L}(\mathbf{A})
     - \mathcal{L}(\mathbf{P})) \bm{\Pi} \tilde{\mathbf{U}}_{\mathbf{A}}
     \tilde{\mathbf{S}}_{\mathbf{A}}^{-1/2}
     - \bm{\Pi} (\mathcal{L}(\mathbf{A})
     - \mathcal{L}(\mathbf{P})) \bm{\Pi} \tilde{\mathbf{U}}_{\mathbf{A}}
     \tilde{\mathbf{S}}_{\mathbf{A}}^{-1/2}\\
     &= (\mathcal{L}(\mathbf{A}) - 
      \mathcal{L}(\mathbf{P})) \bm{\Pi} \tilde{\mathbf{U}}_{\mathbf{A}}
     \tilde{\mathbf{S}}_{\mathbf{A}}^{-1/2}
     + O_{\mathbb{P}}((n \rho_n)^{-1}).
   \end{split}
  \end{equation*}
  Eq.~\eqref{eq:7b} then becomes 
\begin{equation}
\label{eq:7c}
  \begin{split}
  \tilde{\mathbf{U}}_{\mathbf{A}} \tilde{\mathbf{S}}_{\mathbf{A}}^{1/2}
     - \tilde{\mathbf{U}}_{\mathbf{P}} \tilde{\mathbf{S}}_{\mathbf{P}}^{1/2}
     \mathbf{W}^{*} &=
    \bm{\Pi}^{\perp} (\mathcal{L}(\mathbf{A})
     - \mathcal{L}(\mathbf{P}))  \bm{\Pi}    \tilde{\mathbf{U}}_{\mathbf{A}}
     \tilde{\mathbf{S}}_{\mathbf{A}}^{-1/2} + O_{\mathbb{P}}((n \rho_n)^{-1})
     \\ &= (\mathcal{L}(\mathbf{A})
     - \mathcal{L}(\mathbf{P})) \bm{\Pi}
     \tilde{\mathbf{U}}_{\mathbf{A}}
     \tilde{\mathbf{S}}_{\mathbf{A}}^{-1/2} + O_{\mathbb{P}}((n \rho_n)^{-1}) \\
     &= (\mathcal{L}(\mathbf{A})
     - \mathcal{L}(\mathbf{P})) \tilde{\mathbf{U}}_{\mathbf{P}}
     \tilde{\mathbf{U}}_{\mathbf{P}}^{\top} 
     \tilde{\mathbf{U}}_{\mathbf{A}}
     \tilde{\mathbf{S}}_{\mathbf{A}}^{-1/2} + O_{\mathbb{P}}((n \rho_n)^{-1})\\
     &= (\mathcal{L}(\mathbf{A})
     - \mathcal{L}(\mathbf{P})) \tilde{\mathbf{U}}_{\mathbf{P}}
     \mathbf{W}^{*}     \tilde{\mathbf{S}}_{\mathbf{A}}^{-1/2} + O_{\mathbb{P}}((n \rho_n)^{-1}) \\
     &= (\mathcal{L}(\mathbf{A})
     - \mathcal{L}(\mathbf{P})) \tilde{\mathbf{U}}_{\mathbf{P}}
     \tilde{\mathbf{S}}_{\mathbf{P}}^{-1/2} \mathbf{W}^{*} + O_{\mathbb{P}}((n \rho_n)^{-1}).
   \end{split}
  \end{equation}
  Recall from Eq.~\eqref{eq:uLA-LPu1b} the decomposition $$\mathcal{L}(\mathbf{A}) =
    \mathbf{T}^{-1/2}(\mathbf{A}
    - \mathbf{P}) \mathbf{T}^{-1/2} + \mathbf{D}^{-1/2}
    \mathbf{P} \mathbf{D}^{-1/2} + O_{\mathbb{P}}((n \rho_n)^{-1}).$$ Therefore, from Eq.~\eqref{eq:7c}, we have
    \begin{equation}
    \label{eq:7d}
    \begin{split}
            \tilde{\mathbf{U}}_{\mathbf{A}} \tilde{\mathbf{S}}_{\mathbf{A}}^{1/2}
     - \tilde{\mathbf{U}}_{\mathbf{P}} \tilde{\mathbf{S}}_{\mathbf{P}}^{1/2}
     \tilde{\mathbf{W}}^{*} = O_{\mathbb{P}}((n \rho_n)^{-1}) &+ \mathbf{T}^{-1/2}(\mathbf{A}
    - \mathbf{P}) \mathbf{T}^{-1/2} \tilde{\mathbf{U}}_{\mathbf{P}}
     \tilde{\mathbf{S}}_{\mathbf{P}}^{-1/2} \mathbf{W}^{*} \\ &+  (\mathbf{D}^{-1/2}
    \mathbf{P} \mathbf{D}^{-1/2} - \mathbf{T}^{-1/2} \mathbf{P}
    \mathbf{T}^{-1/2}) \tilde{\mathbf{U}}_{\mathbf{P}}
     \tilde{\mathbf{S}}_{\mathbf{P}}^{-1/2} \mathbf{W}^{*}.
     \end{split}
    \end{equation}
    We next recall from Eq.~\eqref{eq:uLA-LPu2} the decomposition
    \begin{equation*}
    \begin{split}
      \mathbf{D}^{-1/2}
      \mathbf{P} \mathbf{D}^{-1/2} - \mathbf{T}^{-1/2} \mathbf{P}
      \mathbf{T}^{-1/2} =
      \tfrac{1}{2} \mathbf{T}^{-3/2} (\mathbf{T} - \mathbf{D})
      \mathbf{P} \mathbf{D}^{-1/2} &+ \tfrac{1}{2} \mathbf{T}^{-1/2}
      \mathbf{P} \mathbf{T}^{-3/2}(\mathbf{T} - \mathbf{D}) \\ &+
      O_{\mathbb{P}}((n \rho_n)^{-3/2}). 
      \end{split}
    \end{equation*}
    In addition, we recall from Eq.~\eqref{eq:uLA-LPu3b} that
    \begin{equation*}
    \begin{split}
      \mathbf{T}^{-1/2} \mathbf{P}
      \mathbf{T}^{-3/2}(\mathbf{T} - \mathbf{D})
      \tilde{\mathbf{U}}_{\mathbf{P}} &= O_{\mathbb{P}}((n \rho_n)^{-1}).
      \end{split}
    \end{equation*}
    Eq.~\eqref{eq:7d} therefore reduces to
    \begin{equation}
    \label{eq:7e}
      \begin{split}
      \tilde{\mathbf{U}}_{\mathbf{A}} \tilde{\mathbf{S}}_{\mathbf{A}}^{1/2}
     - \tilde{\mathbf{U}}_{\mathbf{P}} \tilde{\mathbf{S}}_{\mathbf{P}}^{1/2}
    \mathbf{W}^{*}  = O_{\mathbb{P}}((n \rho_n)^{-1}) &+ \mathbf{T}^{-1/2}(\mathbf{A}
    - \mathbf{P}) \mathbf{T}^{-1/2} \tilde{\mathbf{U}}_{\mathbf{P}}
     \tilde{\mathbf{S}}_{\mathbf{P}}^{-1/2} \mathbf{W}^{*} \\ &+       \tfrac{1}{2} \mathbf{T}^{-3/2} (\mathbf{T} - \mathbf{D})
      \mathbf{P} \mathbf{D}^{-1/2} \tilde{\mathbf{U}}_{\mathbf{P}}
     \tilde{\mathbf{S}}_{\mathbf{P}}^{-1/2} \mathbf{W}^{*}. 
      \end{split} 
    \end{equation}
    Now \begin{equation*}
    \begin{split}
    \mathbf{T}^{-3/2} (\mathbf{T} - \mathbf{D})
      \mathbf{P} \mathbf{D}^{-1/2} &= \mathbf{T}^{-3/2} (\mathbf{T} - \mathbf{D})
      \mathbf{P} (\mathbf{D}^{-1/2} - \mathbf{T}^{-1/2} +
      \mathbf{T}^{-1/2}) \\ &= \mathbf{T}^{-3/2} (\mathbf{T} - \mathbf{D})
      \mathbf{P} \mathbf{T}^{-1/2} + O_{\mathbb{P}}((n \rho_n)^{-1}),
      \end{split}
      \end{equation*}
      and thus Eq.~\eqref{eq:7e} further simplifies to
      \begin{equation}
        \begin{split}
        \label{eq:7f}
         \tilde{\mathbf{U}}_{\mathbf{A}} \tilde{\mathbf{S}}_{\mathbf{A}}^{1/2}
     - \tilde{\mathbf{U}}_{\mathbf{P}} \tilde{\mathbf{S}}_{\mathbf{P}}^{1/2}
     \mathbf{W}^{*}  = O_{\mathbb{P}}((n \rho_n)^{-1}) &+ \mathbf{T}^{-1/2}(\mathbf{A}
    - \mathbf{P}) \mathbf{T}^{-1/2} \tilde{\mathbf{U}}_{\mathbf{P}}
     \tilde{\mathbf{S}}_{\mathbf{P}}^{-1/2} \mathbf{W}^{*} \\ &+ \tfrac{1}{2} \mathbf{T}^{-3/2} (\mathbf{T} - \mathbf{D})
      \mathbf{P} \mathbf{T}^{-1/2} \tilde{\mathbf{U}}_{\mathbf{P}}
     \tilde{\mathbf{S}}_{\mathbf{P}}^{-1/2} \mathbf{W}^{*}.
      \end{split}
      \end{equation}
      Since $\mathbf{T}$ and $\mathbf{D}$ are diagonal matrices, we note that
      \begin{equation*}
      \begin{split}
      \mathbf{T}^{-3/2} (\mathbf{T} - \mathbf{D}) \mathbf{P} \mathbf{T}^{-1/2} \tilde{\mathbf{U}}_{\mathbf{P}} \tilde{\mathbf{S}}_{\mathbf{P}}^{-1/2} \mathbf{W}^{*} & = \mathbf{T}^{-1} (\mathbf{T} - \mathbf{D}) \mathbf{T}^{-1/2} \mathbf{P} \mathbf{T}^{-1/2}
      \tilde{\mathbf{U}}_{\mathbf{P}} \tilde{\mathbf{S}}_{\mathbf{P}}^{-1/2} \mathbf{W}^{*} \\ & = 
\mathbf{T}^{-1} (\mathbf{T} - \mathbf{D}) \mathcal{L}(\mathbf{P}) \tilde{\mathbf{U}}_{\mathbf{P}} \tilde{\mathbf{S}}_{\mathbf{P}}^{-1/2} \mathbf{W}^{*}
     \\ & =  \mathbf{T}^{-1} (\mathbf{T} - \mathbf{D}) \tilde{\mathbf{U}}_{\mathbf{P}} \tilde{\mathbf{S}}_{\mathbf{P}} \tilde{\mathbf{S}}_{\mathbf{P}}^{-1/2} \mathbf{W}^{*} 
      \\
      &= \mathbf{T}^{-1} (\mathbf{T} - \mathbf{D}) \tilde{\mathbf{U}}_{\mathbf{P}} \tilde{\mathbf{S}}_{\mathbf{P}}^{1/2} \mathbf{W}^{*}.
      \end{split}
      \end{equation*}
      We therefore arrive at
      \begin{equation}
      \label{eq:LSE-main0}
        \begin{split}
         \tilde{\mathbf{U}}_{\mathbf{A}} \tilde{\mathbf{S}}_{\mathbf{A}}^{1/2}
     - \tilde{\mathbf{U}}_{\mathbf{P}} \tilde{\mathbf{S}}_{\mathbf{P}}^{1/2}
     \mathbf{W}^{*}  = O_{\mathbb{P}}((n \rho_n)^{-1}) &+  \mathbf{T}^{-1/2}(\mathbf{A}
    - \mathbf{P}) \mathbf{T}^{-1/2} \tilde{\mathbf{U}}_{\mathbf{P}}
     \tilde{\mathbf{S}}_{\mathbf{P}}^{-1/2} \mathbf{W}^{*} \\ &+       \tfrac{1}{2} \mathbf{T}^{-1} (\mathbf{T} - \mathbf{D})
     \tilde{\mathbf{U}}_{\mathbf{P}}
     \tilde{\mathbf{S}}_{\mathbf{P}}^{1/2} \mathbf{W}^{*}. 
        \end{split}
      \end{equation}
   To conclude the proof of Eq.~\eqref{eq:LSE-main1}, we recall that $\tilde{\mathbf{X}}
   \tilde{\mathbf{X}}^{\top} = \mathcal{L}(\mathbf{P}) =
   \tilde{\mathbf{U}}_{\mathbf{P}} \tilde{\mathbf{S}}_{\mathbf{P}}
   \tilde{\mathbf{U}}_{\mathbf{P}}^{\top}$; hence
   $\tilde{\mathbf{X}} = \tilde{\mathbf{U}}_{\mathbf{P}}
   \tilde{\mathbf{S}}_{\mathbf{P}}^{1/2} \tilde{\mathbf{W}}$ for some orthogonal
   matrix $\tilde{\mathbf{W}}$. Therefore
   \begin{gather*}
   \tilde{\mathbf{U}}_{\mathbf{P}}
     \tilde{\mathbf{S}}_{\mathbf{P}}^{1/2} \mathbf{W}^{*} = \tilde{\mathbf{U}}_{\mathbf{P}}
   \tilde{\mathbf{S}}_{\mathbf{P}}^{1/2} \tilde{\mathbf{W}} \tilde{\mathbf{W}}^{\top} \mathbf{W}^{*} = \tilde{\mathbf{X}} \tilde{\mathbf{W}}^{\top} \mathbf{W}^{*} \\ \tilde{\mathbf{U}}_{\mathbf{P}}
     \tilde{\mathbf{S}}_{\mathbf{P}}^{-1/2} \mathbf{W}^{*} =  \tilde{\mathbf{U}}_{\mathbf{P}}
     \tilde{\mathbf{S}}_{\mathbf{P}}^{1/2} \tilde{\mathbf{W}} \tilde{\mathbf{W}}^{\top} \tilde{\mathbf{S}}_{\mathbf{P}}^{-1} \tilde{\mathbf{W}} \tilde{\mathbf{W}}^{\top} \mathbf{W}^{*} = \tilde{\mathbf{X}} (\tilde{\mathbf{X}}^{\top} \tilde{\mathbf{X}})^{-1} \tilde{\mathbf{W}}^{\top} \mathbf{W}^{*}.
   \end{gather*}
   Substituting the above equations into Eq.~\eqref{eq:LSE-main0} yields 
   \begin{equation*}
   \begin{split}
     \tilde{\mathbf{U}}_{\mathbf{A}} \tilde{\mathbf{S}}_{\mathbf{A}}^{1/2}
     - \tilde{\mathbf{X}} \tilde{\mathbf{W}}^{\top}
     \mathbf{W}^{*}  = O_{\mathbb{P}}((n \rho_n)^{-1}) &+  \mathbf{T}^{-1/2}(\mathbf{A}
    - \mathbf{P}) \mathbf{T}^{-1/2} \tilde{\mathbf{X}} (\tilde{\mathbf{X}}^{\top} \tilde{\mathbf{X}})^{-1} \tilde{\mathbf{W}}^{\top} \mathbf{W}^{*}. \\ &+       \tfrac{1}{2} \mathbf{T}^{-1} (\mathbf{T} - \mathbf{D}) \tilde{\mathbf{X}} \tilde{\mathbf{W}}^{\top} \mathbf{W}^{*}
    \end{split}
   \end{equation*}
   Equivalently, 
   \begin{equation*}
   \begin{split}
     \tilde{\mathbf{U}}_{\mathbf{A}} \tilde{\mathbf{S}}_{\mathbf{A}}^{1/2} (\mathbf{W}^{*})^{\top} \tilde{\mathbf{W}} 
     - \tilde{\mathbf{X}}   = O_{\mathbb{P}}((n \rho_n)^{-1}) &+  \mathbf{T}^{-1/2}(\mathbf{A}
    - \mathbf{P}) \mathbf{T}^{-1/2} \tilde{\mathbf{X}} (\tilde{\mathbf{X}}^{\top} \tilde{\mathbf{X}})^{-1}  \\ &+       \tfrac{1}{2} \mathbf{T}^{-1} (\mathbf{T} - \mathbf{D}) \tilde{\mathbf{X}}. 
    \end{split}
   \end{equation*}
   Eq.~\eqref{eq:LSE-main1} is thereby established. 
   
   \subsection{Proof of Lemma~\ref{lem:2} and Lemma~\ref{lemma:quadratic-up-ua}}
   We first present the proof of Lemma~\ref{lemma:quadratic-up-ua}. We recall the notations $\mathbf{D} = \mathrm{diag}(\mathbf{A} \bm{1})$ and
  $\mathbf{T} = \mathrm{diag}(\mathbf{P} \bm{1})$. 
  Denote by $d_i$ and $t_i$ the $i$-th diagonal elements of
  $\mathbf{D}$ and $\mathbf{T}$. The $i$-th diagonal
  element of $\mathbf{D}^{-1/2} - \mathbf{T}^{-1/2}$ can be written as
  \begin{equation*}
    \begin{split}
    \frac{1}{\sqrt{d_i}} - \frac{1}{\sqrt{t_i}} &= \frac{t_i -
      d_i}{(\sqrt{d_i} + \sqrt{t_i})\sqrt{d_i} \sqrt{t_i}} \\ &= \frac{t_i
      - d_i}{2 t_i^{3/2}} + (t_i - d_i) \Bigl(\frac{1}{d_i
        \sqrt{t_i} + t_{i} \sqrt{d_i}} - \frac{1}{2t_i^{3/2}} \Bigr) \\
      &= \frac{t_i - d_i}{2 t_i^{3/2}} + (t_i - d_i)
      \frac{t_i(\sqrt{t_i} - \sqrt{d_i}) + (t_i - d_i) \sqrt{t_i})}{2
        t_i^{3/2}(d_i \sqrt{t_i} + t_i \sqrt{d_i})}.
    \end{split}
  \end{equation*}
  We have, by Chernoff's bound, 
  that $|t_i - d_i| = O_{\mathbb{P}}(\sqrt{n \rho_n})$ for any given index $i$, and hence
  $|\sqrt{t_i} - \sqrt{d_i}| = O_{\mathbb{P}}(1)$. Therefore, 
  \begin{equation*}
    (t_i - d_i)
      \frac{t_i(\sqrt{t_i} - \sqrt{d_i}) + (t_i - d_i) \sqrt{t_i})}{2
        t_i^{3/2}(d_i \sqrt{t_i} + t_i \sqrt{d_i})} = O_{\mathbb{P}}(\sqrt{n
        \rho_n}) \frac{O_{\mathbb{P}}(n \rho_n)}{\Omega_{\mathbb{P}}(n^{3} \rho_n^3)} = O_{\mathbb{P}}((n \rho_n)^{-3/2}).
  \end{equation*}
  Upon taking an union bound over all indices $i =
  1,2,\dots, n$, we have
  \begin{equation}
  \label{eq:sqrtD - sqrtT}
    \mathbf{D}^{-1/2} - \mathbf{T}^{-1/2} = \frac{1}{2}
    \mathbf{T}^{-3/2}(\mathbf{T} - \mathbf{D}) + O_{\mathbb{P}}((n \rho_n)^{-3/2} \log{n}).
  \end{equation}
  Eq.~\eqref{eq:uLA-LPu} is thereby established. Eq.~\eqref{eq:uLA-LPu2} follows directly from Eq.~\eqref{eq:uLA-LPu} and the definition of $\mathcal{L}(\mathbf{P}) = \mathbf{T}^{-1/2} \mathbf{P} \mathbf{T}^{-1/2}$.
  We next show Eq.~\eqref{eq:uLA-LPu1b}. Consider the following decomposition of $\mathcal{L}(\mathbf{A})$
  \begin{equation*}
    \begin{split}
    \mathcal{L}(\mathbf{A}) &= \mathbf{D}^{-1/2} (\mathbf{A} - \mathbf{P})
    \mathbf{D}^{-1/2} + \mathbf{D}^{-1/2} \mathbf{P} \mathbf{D}^{-1/2}  
    \\ &= \mathbf{T}^{-1/2} (\mathbf{A} - \mathbf{P}) \mathbf{T}^{-1/2} 
    + \mathbf{T}^{-1/2} (\mathbf{A} - \mathbf{P}) (\mathbf{D}^{-1/2} - \mathbf{T}^{-1/2}) \\ &+ (\mathbf{D}^{-1/2} - \mathbf{T}^{-1/2}) (\mathbf{A} - \mathbf{P}) \mathbf{D}^{-1/2} 
     + \mathbf{D}^{-1/2} \mathbf{P} \mathbf{D}^{-1/2}.
    \end{split}
  \end{equation*}
  By Lemma~\ref{lem:lu-peng}, we have
  \begin{equation}
  \label{eq:A-Ppart2}
    \|(\mathbf{A} -
    \mathbf{P}) \mathbf{T}^{-1/2} \| \leq \|\mathbf{A} -
    \mathbf{P} \| \times \|\mathbf{T}^{-1/2} \| = O_{\mathbb{P}}(1).
  \end{equation}
  Similarly, Lemma~\ref{lem:lu-peng} and Chernoff bound yield
  \begin{equation}
  \label{eq:A-P}
  \|(\mathbf{A} -
    \mathbf{P}) \mathbf{D}^{-1/2} \| \leq \|(\mathbf{A} -
    \mathbf{P}) \| \times \|\mathbf{D}^{-1/2} \| = O_{\mathbb{P}}(1).
 \end{equation}
  Combining Eq.~\eqref{eq:sqrtD - sqrtT} and Eq.~\eqref{eq:A-P}, we have
  \begin{equation*}
    \begin{split}
    \|(\mathbf{D}^{-1/2} - \mathbf{T}^{-1/2}) (\mathbf{A} -
     \mathbf{P}) \mathbf{D}^{-1/2} \| & \leq 
   (\|\mathbf{T}^{-3/2}
    (\mathbf{D} - \mathbf{T}) \|/2 +  O_{\mathbb{P}}((n \rho_n)^{-3/2})) \times O_{\mathbb{P}}(1)
    \\ & = O_{\mathbb{P}}((n \rho_n)^{-1}).
    \end{split}
  \end{equation*}
  Similarly, Eq.~\eqref{eq:sqrtD - sqrtT} and Eq.~\eqref{eq:A-Ppart2} implies
     $$ \|\mathbf{T}^{-1/2} (\mathbf{A} -
     \mathbf{P})( \mathbf{D}^{-1/2} - \mathbf{T}^{-1/2})\| = O_{\mathbb{P}}((n \rho_n)^{-1}).$$
  We thus have 
  \begin{equation}
  \label{eq:lA_error}
 \mathcal{L}(\mathbf{A})=
    \mathbf{T}^{-1/2}(\mathbf{A}
    - \mathbf{P}) \mathbf{T}^{-1/2} + \mathbf{D}^{-1/2}
    \mathbf{P} \mathbf{D}^{-1/2} + O_{\mathbb{P}}((n \rho_n)^{-1}).
  \end{equation}
  Eq.~\eqref{eq:uLA-LPu1b} is thereby established. 
  
  We next derive Eq.~\eqref{eq:uLA-LPu3b} through Eq.~\eqref{eq:uLA-LPu3d}. From Eq.~\eqref{eq:lA_error}, we have
  \begin{equation}
  \label{eq:U_p_decomp1}
  \begin{split}
     \tilde{\mathbf{U}}_{\mathbf{P}}^{\top} (\mathcal{L}(\mathbf{A}) -
    \mathcal{L}(\mathbf{P})) \tilde{\mathbf{U}}_{\mathbf{P}} &=
    \tilde{\mathbf{U}}_{\mathbf{P}}^{\top} \mathbf{T}^{-1/2}(\mathbf{A}
    - \mathbf{P}) \mathbf{T}^{-1/2} \tilde{\mathbf{U}}_{\mathbf{P}} \\ &+ \tilde{\mathbf{U}}_{\mathbf{P}}^{\top} (\mathbf{D}^{-1/2}
    \mathbf{P} \mathbf{D}^{-1/2} - \mathbf{T}^{-1/2} \mathbf{P}
    \mathbf{T}^{-1/2}) \tilde{\mathbf{U}}_{\mathbf{P}} \\ &+ O_{\mathbb{P}}((n \rho_n)^{-1}).
    \end{split}
  \end{equation}
  We first bound the spectral norm of $\tilde{\mathbf{U}}_{\mathbf{P}}^{\top} \mathbf{T}^{-1/2}(\mathbf{A}
    - \mathbf{P}) \mathbf{T}^{-1/2}
    \tilde{\mathbf{U}}_{\mathbf{P}}$. Let $\tilde{\bm{u}}_{i}$ be the
    $i$-th column of $\tilde{\mathbf{U}}_{\mathbf{P}} \mathbf{T}^{-1/2}$; the $ij$-th
    entry of $\tilde{\mathbf{U}}_{\mathbf{P}}^{\top} \mathbf{T}^{-1/2}(\mathbf{A}
    - \mathbf{P}) \mathbf{T}^{-1/2}
    \tilde{\mathbf{U}}_{\mathbf{P}}$ is then of the form
    \begin{equation*}
      \tilde{\bm{u}}_i^{\top} (\mathbf{A} -
      \mathbf{P}) \tilde{\bm{u}}_j = \sum_{k < l} 2
      \tilde{\bm{u}}_{ik} (a_{kl} - p_{kl})
      \tilde{\bm{u}}_{jl} + \sum_{k} \tilde{\bm{u}}_{ik}
      p_{kk}  \tilde{\bm{u}}_{jk}
    \end{equation*}
    where $\tilde{\bm{u}}_{ik}$ is the $k$-th element of the vector $\tilde{\bm{u}}_i$. We note that
    $$|\sum_{k} \tilde{\bm{u}}_{ik}
      p_{kk}  \tilde{\bm{u}}_{jk}| \leq \rho_n \|\tilde{\bm{u}}_i \|
      \times \|\tilde{\bm{u}}_j \| \leq \rho_n \delta_n^{-1} =
      O_{\mathbb{P}}(\rho_n (n \rho_n)^{-1}).$$  
      In addtion, $\sum_{k < l} 2
      \tilde{\bm{u}}_{ik} (a_{kl} - p_{kl})
      \tilde{\bm{u}}_{jl}$ is, conditioned on $\mathbf{P}$, a sum of mean $0$ random variables. 
    Hoeffding's inequality then implies
    \begin{equation*}
      \begin{split}
      \mathbb{P}\,\, \Bigl[ \Bigl| \sum_{k < l} 2
      \tilde{\bm{u}}_{ik} (a_{kl} - p_{kl})
      \tilde{\bm{u}}_{lj} \Bigr| \geq t \,\, \Bigr] & \leq \exp\Bigl(-
      \frac{t^2}{2(\sum_{k < l} \tilde{\bm{u}}_{ik}^{2}
        \tilde{\bm{u}}_{jl}^{2})} \Bigr) \\ & \leq
      \exp \Bigl( -      \frac{t^2}{2 \sum_{k} \sum_{l} \tilde{\bm{u}}_{ik}^{2}
        \tilde{\bm{u}}_{jl}^{2}}\Bigr) \\ & \leq 
      \exp \Bigl( -      \frac{t^2}{2 \delta^{-2}} \Bigr). 
      \end{split}
    \end{equation*}
    Hence $\tilde{\bm{u}}_i^{\top} (\mathbf{A} -
      \mathbf{P}) \tilde{\bm{u}}_j = O_{\mathbb{P}}(\delta^{-1})$. As $ \tilde{\mathbf{U}}_{\mathbf{P}}^{\top} \mathbf{T}^{-1/2}(\mathbf{A}
    - \mathbf{P}) \mathbf{T}^{-1/2} \tilde{\mathbf{U}}_{\mathbf{P}}$ is a $d \times d$ matrix, 
    a union bound then implies
      \begin{equation}
      \label{eq:sqrtT_A-P}
        \tilde{\mathbf{U}}_{\mathbf{P}}^{\top} \mathbf{T}^{-1/2}(\mathbf{A}
    - \mathbf{P}) \mathbf{T}^{-1/2}
    \tilde{\mathbf{U}}_{\mathbf{P}} = O_{\mathbb{P}}(\delta^{-1}) = O_{\mathbb{P}}((n \rho_n)^{-1}).
      \end{equation}
  

We next bound the spectral norm of $\tilde{\mathbf{U}}_{\mathbf{P}}^{\top} (\mathbf{D}^{-1/2}
    \mathbf{P} \mathbf{D}^{-1/2} - \mathbf{T}^{-1/2} \mathbf{P}
    \mathbf{T}^{-1/2}) \tilde{\mathbf{U}}_{\mathbf{P}}$. Let $\zeta_{ij}$ denote the $ij$-th entry of $\tilde{\mathbf{U}}_{\mathbf{P}}^{\top} (\mathbf{D}^{-1/2}
    \mathbf{P} \mathbf{D}^{-1/2} - \mathbf{T}^{-1/2} \mathbf{P}
    \mathbf{T}^{-1/2}) \tilde{\mathbf{U}}_{\mathbf{P}}$. From Eq.~\eqref{eq:sqrtD - sqrtT}, we have
    \begin{equation*}
    \begin{split}
    \zeta_{ij} &= \tilde{\bm{u}}_i^{\top} \Bigl((\mathbf{D}^{-1/2} - \mathbf{T}^{-1/2})
      \mathbf{P} \mathbf{D}^{-1/2} + \mathbf{T}^{-1/2} \mathbf{P}
      (\mathbf{D}^{-1/2} - \mathbf{T}^{-1/2})\Bigr) \tilde{\bm{u}}_{j} \\
      &=
      \frac{1}{2} \tilde{\bm{u}}_{i} \Bigl(\mathbf{T}^{-3/2} (\mathbf{T} - \mathbf{D})
      \mathbf{P} \mathbf{D}^{-1/2}  + \mathbf{T}^{-1/2}
      \mathbf{P} \mathbf{T}^{-3/2}(\mathbf{T} - \mathbf{D})\Bigr) \tilde{\bm{u}}_{j} +
      O_{\mathbb{P}}((n \rho_n)^{-3/2}). 
      \end{split} 
    \end{equation*}
   
      Now  let $\zeta_{ij}^{(1)}$ and $\zeta_{ij}^{(2)}$ denote the quantities
      \begin{gather*}
      \zeta_{ij}^{(1)} = \frac{1}{2} \tilde{\bm{u}}_{i}^{\top} \mathbf{T}^{-3/2} (\mathbf{T} - \mathbf{D})
      \mathbf{P} \mathbf{D}^{-1/2} \tilde{\bm{u}}_{j}, \\
       \zeta_{ij}^{(2)} = \frac{1}{2} \tilde{\bm{u}}_{i}^{\top} \mathbf{T}^{-1/2}
      \mathbf{P} \mathbf{T}^{-3/2}(\mathbf{T} - \mathbf{D}) \tilde{\bm{u}}_{j}.
      \end{gather*} 

      Because $\mathbf{P} = \rho_n \mathbf{X}
      \mathbf{X}^{\top}$, we have
      \begin{align*}
      \zeta_{ij}^{(1)}
      & \leq \frac{1}{2} \| \rho_n^{1/2} \tilde{\bm{u}}_i^{\top} \mathbf{T}^{-3/2} (\mathbf{T} -
      \mathbf{D}) \mathbf{X} \| \times \| \rho_n^{1/2}
      \mathbf{X}^{\top} \mathbf{D}^{-1/2} \tilde{\bm{u}}_{j} \|, \\
      \zeta_{ij}^{(2)} & \leq \frac{1}{2}  \| \rho_n^{1/2} \tilde{\bm{u}}_{i}^{\top} \mathbf{T}^{-1/2} \mathbf{X} \| \times \|\rho_n^{1/2} \mathbf{X}^{\top} \mathbf{T}^{-3/2} (\mathbf{T} - \mathbf{D}) \tilde{\bm{u}}_j \|.
      \end{align*}
      For $k \in \{1,2,\dots, d\}$, let $\bm{x}_k$ denote the $k$-th column of $\mathbf{X}$. Furthermore, for $l \in \{1,2,\dots,n\}$, let $x_{kl}$ denote the $l$-th entry of $\bm{x}_k$ -- equivalently the $k$-th entry of $X_l$ (recall that $\mathbf{X} = [X_1 \mid \cdots \mid X_n]^{\top}$). Also let $\tilde{u}_{kl}$ denotes the $l$-th entry of $\tilde{\bm{u}}_k$. Then $\rho_n^{1/2} \tilde{\bm{u}}_{i}^{\top} \mathbf{T}^{-3/2} (\mathbf{T} -
      \mathbf{D}) \mathbf{X}$ is a vector in $\mathbb{R}^{d}$ whose
      $k$-th element is of the form
\begin{equation*}
  \begin{split}
\rho_n^{1/2} \tilde{\bm{u}}_{i}^{\top}
\mathbf{T}^{-3/2} (\mathbf{D} - \mathbf{T}) \bm{x}_k  & = 
  \rho_n^{1/2} \sum_{l} \frac{\tilde{u}_{il}}{t_l^{3/2}} (d_l -
  t_l) x_{kl} \\ &=
  \rho_n^{1/2} \sum_{l} \sum_{m} \frac{\tilde{u}_{il}}{t_{l}^{3/2}}
  (a_{lm} - p_{lm}) x_{kl} \\ &= 2 \rho_n^{1/2}  \sum_{l < m}
    \frac{\tilde{u}_{il}}{t_l^{3/2}} (a_{lm} -
    p_{lm}) x_{kl} + \rho_n^{1/2} \sum_{l} \frac{\tilde{u}_{il}}{
      t_l^{3/2}} p_{ll} x_{kl}
  \end{split}
\end{equation*}
Conditioned on $\mathbf{P}$, the above is a sum of
mean $0$ random variables and a term of order $O((n
\rho_n)^{-3/2})$. Hoeffding's inequality then yields 
\begin{equation*}
  \begin{split}
\mathbb{P} \Bigl[ \Bigl| 2 \rho_n^{1/2} \sum_{l < m}
    \frac{\tilde{u}_{il}}{ t_l^{3/2}} (a_{lm} - p_{lm}) x_{kl} \Bigr| \geq s \Bigr]  & \leq 2 \exp
    \Bigl( - \frac{s^2}{ 2 \rho_n \sum_{l < m} t_l^{-3} \tilde{u}_{il}^{2}
      x_{kl}^{2} } \Bigr) \\ & \leq 2 \exp
    \Bigl( - \frac{s^2}{ 2 n \rho_n \sum_{l} t_l^{-3} \tilde{u}_{il}^{2}
      x_{kl}^{2} } \Bigr) \\ & \leq 2 \exp \Bigl( - \frac{s^2}{ 2 n
      \rho_n \delta_n^{-3} \sum_{l} \tilde{u}_{il}^{2}} \Bigr) 
    \\ & \leq 2 \exp \Bigl( - \frac{s^2 \delta_n^{3}}{2 n \rho_n \|\tilde{\bm{u}}_i\|^2} \Bigr) \\
    & \leq 2 \exp \Bigl( - \frac{s^2 \delta^{3}}{2 n \rho_n} \Bigr)
  \end{split}
\end{equation*}
where we used the fact that $x_{kl}^{2} \leq 1$ for all indices $k$ and $l$ (as $(\mathbf{A}, \mathbf{X})
\sim \mathrm{RDPG}(F)$). We thus have 
\begin{equation}
\label{eq:zeta_ij1}
\rho_n^{1/2} \tilde{\bm{u}}_{i}^{\top}
\mathbf{T}^{-3/2} (\mathbf{D} - \mathbf{T}) \bm{x}_k = O_{\mathbb{P}}((n \rho_n)^{-1})
\end{equation} 
A union bound over the $d$ entries of $\rho_n^{1/2} \tilde{\bm{u}}_i^{\top} \mathbf{T}^{-3/2} (\mathbf{T} -
      \mathbf{D}) \mathbf{X}$ along with the bound $\|\rho_n^{1/2}
      \mathbf{T}^{-1/2} \mathbf{X} \| = O_{\mathbb{P}}(1)$ yield that $\zeta_{ij}^{(1)} =
      O_{\mathbb{P}}((n \rho_n)^{-1})$. An identical argument also yield that $\zeta_{ij}^{(2)} = O_{\mathbb{P}}((n \rho_n)^{-1})$. Therefore, $\zeta_{ij} = O((n \rho_n)^{-1})$. A union bound over the indices $i,j \in \{1,2,\dots,d\}$ also implies
      \begin{gather}
      \label{eq:union-zeta_ij1}
        \mathbf{T}^{-1/2} \mathbf{P}
      \mathbf{T}^{-3/2}(\mathbf{T} - \mathbf{D})
      \tilde{\mathbf{U}}_{\mathbf{P}} = O_{\mathbb{P}}((n \rho_n)^{-1}), \\
        \label{eq:union-zeta_ij2}
      \tilde{\mathbf{U}}_{\mathbf{P}}^{\top} \mathbf{T}^{-3/2} (\mathbf{T} - \mathbf{D})
      \mathbf{P} \mathbf{D}^{-1/2} = O_{\mathbb{P}}((n \rho_n)^{-1}), \\
      \label{eq:union-zeta_ij}
      \|\tilde{\mathbf{U}}_{\mathbf{P}}^{\top} (\mathbf{D}^{-1/2}
    \mathbf{P} \mathbf{D}^{-1/2} - \mathbf{T}^{-1/2} \mathbf{P}
    \mathbf{T}^{-1/2}) \tilde{\mathbf{U}}_{\mathbf{P}}\| = O_{\mathbb{P}}((n \rho_n)^{-1}).
      \end{gather}
     We thus derive Eq.~\eqref{eq:uLA-LPu3b} and Eq.~\eqref{eq:uLA-LPu3c}. Eq.~\eqref{eq:uLA-LPu3d} follows from Eq.~\eqref{eq:U_p_decomp1}, Eq.~\eqref{eq:sqrtT_A-P} and Eq.~\eqref{eq:union-zeta_ij}. Lemma~\ref{lemma:quadratic-up-ua} is thereby established.
 
Lemma~\ref{lem:2} now follows directly from Lemma~\ref{lemma:quadratic-up-ua}. Indeed, by Eq.~\eqref{eq:uLA-LPu3a} and Eq.~\eqref{eq:uLA-LPu3d}, we have
\begin{equation} 
\label{eq:lem:2-part1}
\begin{split}
\tilde{\mathbf{U}}_{\mathbf{P}}^{\top}
    \tilde{\mathbf{U}}_{\mathbf{A}} \tilde{\mathbf{S}}_{\mathbf{A}} -
    \tilde{\mathbf{S}}_{\mathbf{P}}
    \tilde{\mathbf{U}}_{\mathbf{P}}^{\top} \tilde{\mathbf{U}}_{\mathbf{A}} &=
    \tilde{\mathbf{U}}_{\mathbf{P}}^{\top} \mathcal{L}({\mathbf{A}})
    \tilde{\mathbf{U}}_{\mathbf{A}} - \tilde{\mathbf{U}}_{\mathbf{P}}^{\top}
    \mathcal{L}({\mathbf{P}}) \tilde{\mathbf{U}}_{\mathbf{A}} \\ &=
    \tilde{\mathbf{U}}_{\mathbf{P}}^{\top} (\mathcal{L}(\mathbf{A}) -
    \mathcal{L}(\mathbf{P})) (\mathbf{R} + \tilde{\mathbf{U}}_{\mathbf{P}}
    \tilde{\mathbf{U}}_{\mathbf{P}}^{\top}
    \tilde{\mathbf{U}}_{\mathbf{A}}) \\ &=
    O_{\mathbb{P}}((n \rho_n)^{-1}) + \tilde{\mathbf{U}}_{\mathbf{P}} (\mathcal{L}(\mathbf{A}) -
    \mathcal{L}(\mathbf{P})) \tilde{\mathbf{U}}_{\mathbf{P}}
    \tilde{\mathbf{U}}_{\mathbf{P}}^{\top}
    \tilde{\mathbf{U}}_{\mathbf{A}} \\ &= O_{\mathbb{P}}((n \rho_n)^{-1}).
    \end{split}
\end{equation}
Eq.~\eqref{eq:lem2a} is thereby established. We now establish Eq.~\eqref{eq:lem2b}, noting that the same argument applies also to Eq.~\eqref{eq:lem2c}. For $i,j \in \{1,2,\dots, d\}$, let $r_{ij}$ denote the $ij$-th entry of $\tilde{\mathbf{U}}_{\mathbf{P}}^{\top}
    \tilde{\mathbf{U}}_{\mathbf{A}}$. Also, for $i \in \{1,2,\dots, d\}$, let $\tilde{\lambda}_i(\mathbf{A})$ and $\tilde{\lambda}_{j}(\mathbf{P})$ denote the $i$-th eigenvalue of $\mathcal{L}(\mathbf{A})$ and $\mathcal{L}(\mathbf{P})$, respectively. 
Then the $ij$-th entry of $\tilde{\mathbf{U}}_{\mathbf{P}}^{\top}
    \tilde{\mathbf{U}}_{\mathbf{A}} \tilde{\mathbf{S}}_{\mathbf{A}}^{1/2} - \tilde{\mathbf{S}}_{\mathbf{P}}^{1/2} \tilde{\mathbf{U}}_{\mathbf{P}}^{\top} \tilde{\mathbf{U}}_{\mathbf{A}}$ is of the form
    $$ r_{ij} (\tilde{\lambda}_{j}^{1/2}(\mathbf{A}) - \tilde{\lambda}_{i}^{1/2}(\mathbf{P})) = \frac{r_{ij}(\tilde{\lambda}_{j}(\mathbf{A}) - \tilde{\lambda}_{i}(\mathbf{P}))}{\tilde{\lambda}_{j}^{1/2}(\mathbf{A}) + \tilde{\lambda}_{i}^{1/2}(\mathbf{P}))}.$$
    Since $\tilde{\lambda}_i(\mathbf{A}) = \Theta_{\mathbb{P}}(1)$ and $\tilde{\lambda}_{j}(\mathbf{P}) = \Theta_{\mathbb{P}}(1)$, the previous expression and Eq.~\eqref{eq:lem:2-part1} yield
    $$ r_{ij} (\tilde{\lambda}_{j}^{1/2}(\mathbf{A}) - \tilde{\lambda}_{i}^{1/2}(\mathbf{P})) = O_{\mathbb{P}}((n \rho_n)^{-1}). $$
    A union bound over $i,j$ then implies Eq.~\eqref{eq:lem2b}. 

   \subsection{Proof of Eq.~\eqref{eq:4} and Eq.~\eqref{eq:14}}
   \label{sec:proof-LSE-3}
    Recall Eq.~\eqref{eq:LSE-main1}, i.e., with $\zeta = (\breve{\mathbf{X}} \mathbf{W} - \tilde{\mathbf{X}} )$, we have
    $$ \|\zeta\|_{F} = \|\mathbf{T}^{-1/2}(\mathbf{A} - \mathbf{P}) \mathbf{T}^{-1/2} \tilde{\mathbf{X}} (\tilde{\mathbf{X}}^{\top} \tilde{\mathbf{X}})^{-1} + \tfrac{1}{2} \mathbf{T}^{-1}(\mathbf{T} - \mathbf{D}) \tilde{\mathbf{X}} \|_{F} + O_{\mathbb{P}}((n \rho_n)^{-1}).$$
    The above implies, 
    \begin{equation*}
    \begin{split} \|\zeta\|_{F}^{2} &= \|\mathbf{T}^{-1/2}(\mathbf{A} - \mathbf{P}) \mathbf{T}^{-1/2} \tilde{\mathbf{X}} (\tilde{\mathbf{X}}^{\top} \tilde{\mathbf{X}})^{-1}\|_{F}^{2} + \tfrac{1}{4} \|\mathbf{T}^{-1}(\mathbf{T} - \mathbf{D}) \tilde{\mathbf{X}} \|_{F}^{2} \\ 
    &+ \mathrm{tr} \,\, \tilde{\mathbf{X}}^{\top} \mathbf{T}^{-1}(\mathbf{T} - \mathbf{D}) \mathbf{T}^{-1/2}(\mathbf{A} - \mathbf{P}) \mathbf{T}^{-1/2} \tilde{\mathbf{X}} (\tilde{\mathbf{X}}^{\top} \tilde{\mathbf{X}})^{-1}
    + O_{\mathbb{P}}((n \rho_n)^{-3/2}).
    \end{split}
    \end{equation*}
    We show Eq.~\eqref{eq:4} and Eq.~\eqref{eq:14} by analyzing each term in the right hand side of the above display. In particular, we shall show that these terms are concentrated around their expected values; evaluation of these expected values, in the limit as $n \rightarrow \infty$, yield Eq.~\eqref{eq:4} and Eq.~\eqref{eq:14}. 

    We first consider the term $Z = \|\mathbf{T}^{-1/2}(\mathbf{A} - \mathbf{P}) \mathbf{T}^{-1/2} \tilde{\mathbf{X}} (\tilde{\mathbf{X}}^{\top} \tilde{\mathbf{X}})^{-1}\|_{F}^{2}$. We note that conditional on $\mathbf{P}$, $Z$ is a function of the $n(n-1)/2$
independent random variables $\{a_{ij}\}_{i < j}$. It is therefore expected that $Z$ will be concentrated around its expectation $\mathbb{E}[Z]$ where the expectation is taken with respect to $\mathbf{A}$, conditional on $\mathbf{P}$. We verify this below. 

Let $\mathbf{A}' = (a'_{ij})$ be an independent copy of $\mathbf{A}$, i.e., the upper triangular entries of $\mathbf{A}'$ are independent Bernoulli random variables with mean parameters $\{p_{ij} \}_{i < j}$.
Let $\mathbf{A}^{(ij)}$ be the matrix obtained by replacing the $(i,j)$ and $(j,i)$ entries of
$\mathbf{A}$ by $a'_{ij}$ and let $Z^{(ij)} = \|\mathbf{T}^{-1/2}(\mathbf{A}^{(ij)} - \mathbf{P}) \mathbf{T}^{-1/2} \tilde{\mathbf{X}} (\tilde{\mathbf{X}}^{\top} \tilde{\mathbf{X}})^{-1}\|_{F}^{2}$. We show concentration of $Z$ around $\mathbb{E}[Z]$ using 
the following concentration inequality from \cite[Theorem~5 and Theorem~6]{boucheron2003}.

\begin{theorem}
\label{THM:log_Sobolev}
Assume that there exists positive constants $a$ and $b$ such that 
$$ \sum_{i < j}(Z - Z^{(ij)})^2 \leq aZ + b. $$
Then for all $t > 0$,
\begin{gather} \mathbb{P}[Z - \mathbb{E}[Z] \geq t] \leq \exp\Bigl(\frac{-t^2}{4 a \mathbb{E}[Z] + 4b + 2at}\Bigr), \\
\mathbb{P}[Z - \mathbb{E}[Z] \leq -t] \leq \exp\Bigl(\frac{-t^2}{4 a \mathbb{E}[Z]} \Bigr).
\end{gather}
\end{theorem}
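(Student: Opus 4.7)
The plan is to prove Theorem~\ref{THM:log_Sobolev} via the entropy method, the standard approach for deriving Bernstein-type concentration from self-bounding conditions. Writing $F(\lambda) = \mathbb{E}[e^{\lambda Z}]$ and $\psi(\lambda) = \log F(\lambda)$, the Herbst argument reduces the proof to bounding the entropy functional $\mathrm{Ent}[e^{\lambda Z}] = \lambda F'(\lambda) - F(\lambda)\psi(\lambda)$; once this is controlled by a suitable linear combination of $F(\lambda)$ and $F'(\lambda)$, one obtains a differential inequality for $\psi$ which, combined with a Chernoff bound, yields both tail estimates.

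The first step is tensorization of entropy. Letting $\mathbb{E}^{(ij)}$ denote conditional expectation given the collection $\{a_{kl}\}_{(k,l)\neq(i,j)}$, we have
$$\mathrm{Ent}[e^{\lambda Z}] \leq \sum_{i<j} \mathbb{E}\bigl[\mathrm{Ent}^{(ij)}[e^{\lambda Z}]\bigr].$$
Since only $a_{ij}$ remains random in the inner entropy, a two-point modified log-Sobolev inequality applies. Using the independent copy $a'_{ij}$ and the coupling $(Z, Z^{(ij)})$ supplied in the theorem, standard calculations yield for $\lambda > 0$ the one-sided bound
$$\mathrm{Ent}^{(ij)}[e^{\lambda Z}] \leq \lambda^2 \, \mathbb{E}^{(ij)}\bigl[(Z - Z^{(ij)})^2 \, e^{\lambda Z} \, \mathbf{1}\{Z \geq Z^{(ij)}\}\bigr],$$
together with the symmetric bound using $\mathbf{1}\{Z \leq Z^{(ij)}\}$ for $\lambda < 0$. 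The asymmetric indicator is crucial: it is what ultimately produces the heavier right tail versus the sub-Gaussian left tail in the conclusion.

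Summing over $(i,j)$ and invoking the hypothesis $\sum_{i<j}(Z - Z^{(ij)})^2 \leq aZ + b$, the upper-tail branch gives, for $\lambda \in (0, 1/a)$,
$$\mathrm{Ent}[e^{\lambda Z}] \leq \lambda^2 \bigl(a F'(\lambda) + b F(\lambda)\bigr),$$
which translates into the differential inequality $\lambda \psi'(\lambda) - \psi(\lambda) \leq \lambda^2 (a\psi'(\lambda) + b)$. Performing the Herbst substitution $\phi(\lambda) = \psi(\lambda)/\lambda$ and integrating from $0$ (using $\psi(0) = 0$, $\psi'(0) = \mathbb{E}[Z]$) produces an explicit bound on $\psi(\lambda)$ of the form $\lambda \mathbb{E}[Z] + \lambda^2(a\mathbb{E}[Z] + b)/(1 - a\lambda)$. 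Optimizing the Chernoff bound $\mathbb{P}[Z - \mathbb{E}[Z] \geq t] \leq \exp(\psi(\lambda) - \lambda(\mathbb{E}[Z] + t))$ over $\lambda \in (0, 1/a)$ yields the Bernstein-type upper tail with denominator $4a\mathbb{E}[Z] + 4b + 2at$. For the lower tail, the same argument applied with $\lambda < 0$ gives a bound in which the term involving $\psi'(\lambda)$ contributes with a favorable sign, so no singularity at $\lambda = -1/a$ appears; the resulting differential inequality yields a purely sub-Gaussian bound $\psi(\lambda) - \lambda \mathbb{E}[Z] \leq \lambda^2 a \mathbb{E}[Z]$, and Chernoff optimization produces the clean exponent $-t^2/(4a\mathbb{E}[Z])$.

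The main obstacle is the precise modified log-Sobolev inequality with the one-sided indicator, and the associated constant tracking needed to produce the specific factors $4a, 4b, 2a$ appearing in the denominators. The qualitative Bernstein/sub-Gaussian dichotomy between the two tails is immediate from the sign analysis, but the sharpness of the constants requires careful juggling of the Herbst integration bounds with the two-point symmetrization identity $\mathrm{Ent}^{(ij)}[e^{\lambda Z}] \leq \tfrac12 \mathbb{E}^{(ij)}\mathbb{E}'^{(ij)}[\lambda(Z - Z^{(ij)})(e^{\lambda Z} - e^{\lambda Z^{(ij)}})]$ and the bound $x(e^x - 1) \leq x^2 e^{x \vee 0}$; this is precisely where the arguments of \cite{boucheron2003} do the heavy lifting.
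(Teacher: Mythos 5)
The paper does not actually prove Theorem~\ref{THM:log_Sobolev}: it is imported verbatim as Theorems~5 and~6 of \cite{boucheron2003}, so there is no internal proof to compare against. Your sketch is, in substance, a reconstruction of the argument in that reference --- tensorization of entropy, the symmetrized two-point bound $\mathrm{Ent}^{(ij)}[e^{\lambda Z}] \leq \lambda^2\,\mathbb{E}^{(ij)}\bigl[(Z-Z^{(ij)})^2 e^{\lambda Z}\mathbf{1}\{Z\geq Z^{(ij)}\}\bigr]$ (your factor $\lambda^2$ without a $1/2$ is right, since the $1/2$ from the symmetrization identity is consumed by restricting to one side), the resulting differential inequality for $\psi(\lambda)/\lambda$, and Herbst integration. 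Your upper-tail bookkeeping is exactly correct: $\psi(\lambda)-\lambda\mathbb{E}[Z]\leq \lambda^2(a\mathbb{E}[Z]+b)/(1-a\lambda)$ on $(0,1/a)$, and the choice $\lambda=t/(2(a\mathbb{E}[Z]+b)+at)$ yields the denominator $4a\mathbb{E}[Z]+4b+2at$. The one soft spot is the lower tail: carrying out your own integration for $\lambda<0$ (where the one-sided indicator flips to $\mathbf{1}\{Z\leq Z^{(ij)}\}$, which is still covered by the two-sided hypothesis) gives $\psi(\lambda)(1-a\lambda)\leq\lambda\mathbb{E}[Z]+b\lambda^2$ and hence $\psi(\lambda)-\lambda\mathbb{E}[Z]\leq\lambda^2(a\mathbb{E}[Z]+b)/(1-a\lambda)\leq\lambda^2(a\mathbb{E}[Z]+b)$; the $b$ term does not vanish, so the argument delivers $\exp(-t^2/(4a\mathbb{E}[Z]+4b))$, and your claim of a ``purely sub-Gaussian'' bound with variance proxy $a\mathbb{E}[Z]$ alone silently drops the $\lambda^2 b$ contribution. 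The statement as quoted in the paper omits the $4b$ as well (harmless in the paper's applications, where the hypothesis is verified with $b$ absorbed into $aZ$), but a self-contained derivation should retain it.
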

We now bound $\sum_{i < j}(Z - Z^{(ij)})^2$. For notational convenience, we denote
the $i$-th row of $\tilde{\mathbf{X}}(\tilde{\mathbf{X}}^{\top} \tilde{\mathbf{X}})^{-1}$ by $\zeta_i$ and the $i$-th row of 
$\mathbf{T}^{-1/2}(\mathbf{A} - \mathbf{P}) \mathbf{T}^{-1/2} \tilde{\mathbf{X}}(\tilde{\mathbf{X}}^{\top} \tilde{\mathbf{X}})^{-1}$ by $\xi_i$.
We shall also denote the
inner product between vectors in Euclidean space by $\langle \cdot,
\cdot \rangle$. For each $i$, $\xi_i = \sum_{j=1}^{n} \tfrac{a_{ij} - p_{ij}}{\sqrt{t_i t_j}} \zeta_j$ and hence
\begin{equation*}
  \begin{split}
  Z &= \sum_{k=1}^{n} \xi_k^{2}
  = \sum_{k=1}^{n} \sum_{\ell=1}^{n} \sum_{\ell'=1}^{n} \frac{(a_{k\ell} - p_{k\ell})(a_{k \ell'} - p_{k \ell'})}{t_k \sqrt{t_{\ell} t_{\ell'}}} \langle \zeta_{\ell}, \zeta_{\ell'} \rangle.
  \end{split}
\end{equation*}
Now $\mathbf{A}$ and $\mathbf{A}^{(ij)}$ differs possibly only in the $(i,j)$
and $(j,i)$ entries; furthermore, the $\{t_i\}$ do not depend on the entries of $\mathbf{A}$ and $\mathbf{A}^{(ij)}$. 
We thus have, upon considering the cases where $k = i$ and $\ell =j$, $k = j$ and $\ell = i$, $k = i$ and $\ell' = j$, and $k = j$ and $\ell' = i$, that
\begin{equation*}
  \begin{split}
  Z - Z^{(ij)} &= \sum_{\ell'=1}^{n} \frac{(a_{ij} - a'_{ij}) (a_{i \ell'} - p_{i \ell'})}{t_i \sqrt{t_{j} t_{\ell'}}}
  \langle \zeta_j, \zeta_{\ell'} \rangle + \sum_{\ell'=1}^{n} \frac{(a_{ji} - a'_{ji})(a_{j \ell'} - p_{j \ell'})}{t_j \sqrt{t_{i} t_{\ell'}}}
  \langle \zeta_i, \zeta_{\ell'} \rangle \\
  &+ \sum_{\ell=1}^{n} \frac{(a_{i \ell} - p_{i \ell}) (a_{ij} - a'_{ij})}{t_i \sqrt{t_{j} t_{\ell}}}
  \langle \zeta_j, \zeta_{\ell} \rangle + \sum_{\ell=1}^{n} \frac{(a_{j \ell} - p_{j \ell})(a_{ji} - a'_{ji})}{t_j \sqrt{t_{i} t_{\ell}}}
  \langle \zeta_i, \zeta_{\ell} \rangle.
  \end{split}
\end{equation*}
Since $a_{ij} = a_{ji}$ and $a'_{ij} = a'_{ji}$, the above simplifies to 
$$ Z - Z^{(ij)} = 2(a_{ij} - a'_{ij}) \sum_{\ell=1}^{n} \Bigl( \frac{a_{i \ell} - p_{i \ell}}{t_i \sqrt{t_j t_{\ell}}} \langle \zeta_j, \zeta_{\ell} \rangle + 
\frac{a_{j \ell} - p_{j \ell}}{t_j \sqrt{t_i t_{\ell}}} \langle \zeta_i, \zeta_{\ell} \rangle \Bigr). $$
We then have, since $a_{ij}$ and $a'_{ij}$ are binary variables, i.e., $|a_{ij} - a'_{ij}| \leq 1$, that 
  \begin{equation*}
    (Z - Z^{(ij)})^{2} \leq 4 \Bigl(\sum_{\ell=1}^{n} \frac{a_{i \ell} - p_{i \ell}}{t_i \sqrt{t_j t_{\ell}}} \langle \zeta_j, \zeta_{\ell} \rangle \Bigr)^{2} + 4 
    \Bigl(\sum_{\ell=1}^{n} \frac{a_{j \ell} - p_{j \ell}}{t_j \sqrt{t_i t_{\ell}}} \langle \zeta_i, \zeta_{\ell} \rangle \Bigr)^{2}
  \end{equation*}
Now $(t_j t_l)^{-1/2} \langle \zeta_j, \zeta_l \rangle$ is the $(l,j)$-th entry of $\mathbf{T}^{-1/2} \tilde{\mathbf{X}}(\tilde{\mathbf{X}}^{\top} \tilde{\mathbf{X}})^{-1} (\tilde{\mathbf{X}}^{\top} \tilde{\mathbf{X}})^{-1} \tilde{\mathbf{X}} \mathbf{T}^{-1/2}$. 
Thus, $\sum_{\ell=1}^{n} \frac{a_{i \ell} - p_{i \ell}}{t_i \sqrt{t_j t_{\ell}}} \langle \zeta_j, \zeta_{\ell} \rangle$ is the $(i,j)$-th entry of
$\mathbf{T}^{-1} (\mathbf{A} - \mathbf{P}) \mathbf{T}^{-1/2} \tilde{\mathbf{X}}(\tilde{\mathbf{X}}^{\top} \tilde{\mathbf{X}})^{-2} \tilde{\mathbf{X}} \mathbf{T}^{-1/2}$. We therefore have,
\begin{equation*}
\begin{split}
    \sum_{i < j} (Z - Z^{(ij)})^{2} & \leq 4 \sum_{i < j}  \Bigl(\sum_{\ell=1}^{n} \frac{a_{i \ell} - p_{i \ell}}{t_i \sqrt{t_j t_{\ell}}} \langle \zeta_j, \zeta_{\ell} \rangle \Bigr)^{2} + 4 \sum_{i < j}
    \Bigl(\sum_{\ell=1}^{n} \frac{a_{j \ell} - p_{j \ell}}{t_j \sqrt{t_i t_{\ell}}} \langle \zeta_i, \zeta_{\ell} \rangle \Bigr)^{2} \\
    &\leq 8 \sum_{i=1}^{n} \sum_{j=1}^{n} \Bigl(\sum_{\ell=1}^{n} \frac{a_{i \ell} - p_{i \ell}}{t_i \sqrt{t_j t_{\ell}}} \langle \zeta_j, \zeta_{\ell} \rangle \Bigr)^{2} 
    \\ & \leq 8 \|\mathbf{T}^{-1} (\mathbf{A} - \mathbf{P}) \mathbf{T}^{-1/2} \tilde{\mathbf{X}}(\tilde{\mathbf{X}}^{\top} \tilde{\mathbf{X}})^{-2} \tilde{\mathbf{X}} \mathbf{T}^{-1/2} \|_{F}^{2} \\
    & \leq 8 \|\mathbf{T}^{-1/2} (\mathbf{A} - \mathbf{P}) \mathbf{T}^{-1/2} \tilde{\mathbf{X}}(\tilde{\mathbf{X}}^{\top} \tilde{\mathbf{X}})^{-1} \|_{F}^{2}  \|(\tilde{\mathbf{X}}^{\top} \tilde{\mathbf{X}})^{-1} \tilde{\mathbf{X}} \mathbf{T}^{-1/2} \|^{2} \|\mathbf{T}^{-1/2} \|^2 
    \\ & \leq 8 Z \|(\tilde{\mathbf{X}}^{\top} \tilde{\mathbf{X}})^{-1} \tilde{\mathbf{X}} \mathbf{T}^{-1/2} \|^{2} \|\mathbf{T}^{-1/2} \|^2 \\
    &\leq 8 Z \| \rho_n^{-1/2} (\mathbf{X}^{\top} \mathbf{T}^{-1} \mathbf{X})^{-1} \mathbf{X} \mathbf{T}^{-1} \|^{2} \|\mathbf{T}^{-1/2} \|^2 \\
    &\leq 8 \rho_n^{-1} Z \|(\mathbf{X}^{\top} \mathbf{T}^{-1} \mathbf{X})^{-1} \|^{2} \|\mathbf{X} \|^{2} \|\mathbf{T}^{-1} \|^2 \|\mathbf{T}^{-1/2} \|^2 \\
    & \leq 8 \rho_n^{-1} Z \|(\mathbf{X}^{\top} \mathbf{T}^{-1} \mathbf{X})^{-1} \|^{2} n \delta^{-3} \\
    & \leq C \rho_n^{-1} Z \|(\mathbf{X}^{\top} \mathbf{T}^{-1} \mathbf{X})^{-1} \|^{2} n (n \rho_n)^{-3} \\
    & \leq C (n \rho_n)^{-2} Z.
  \end{split}
\end{equation*}
for some constant $C$; note that $C$ denote a generic constant, not depending on $Z$, in the above display and could change from line to line. In the above derivation, we have used the fact that $C_0 \sqrt{n} \leq \|\mathbf{X} \| \leq \sqrt{n}$ for some constant $C_0 > 0$ and $\|\mathbf{T}\| \geq \delta \geq C_1 n \rho_n$ for some constant $C_1 >0$. 

We then have, by Theorem~\ref{THM:log_Sobolev}, that for all $t > 0$, 
\begin{gather}
  \label{eq:log-sobolev-part1b}
  \mathbb{P}[Z - \mathbb{E}[Z] > t] \leq \exp\Bigl( \frac{-Ct^2}{(n \rho_n)^{-2} \mathbb{E}[Z] + 2 (n \rho_n)^{-2} t} \Bigr) \\
  \mathbb{P}[Z - \mathbb{E}[Z] > -t] \leq \exp\Bigl( \frac{-Ct^2}{(n \rho_n)^{-2} \mathbb{E}[Z]} \Bigr).
\end{gather}
In addition, it is straightforward to see that $\mathbb{E}[Z] \leq C_3 (n \rho_n)^{-1}$, for some constant $C_3 > 0$; here the expectation is taken with respect to $\mathbf{A}$ conditional on $\mathbf{P}$. 
We therefore have that there exists a constant $C > 0$ such that $t = C (n \rho_n)^{-3/2} \log^{1/2}{n}$ yield
\begin{equation}
\label{eq:LSE-main-part1}
\begin{split}
 Z &= \mathbb{E}[Z] + O_{\mathbb{P}}((n \rho_n)^{-3/2} \log^{1/2}) \\
 &= \mathbb{E} \|\mathbf{T}^{-1/2} (\mathbf{A} - \mathbf{P}) \mathbf{T}^{-1/2} \tilde{\mathbf{X}}(\tilde{\mathbf{X}}^{\top} \tilde{\mathbf{X}})^{-1} \|_{F}^{2}  + O_{\mathbb{P}}((n \rho_n)^{-3/2} \log^{1/2})
 \end{split}
\end{equation} 

We now evaluate $\mathbb{E}[Z]$. We have 
   \begin{equation*}
     \begin{split}
     \mathbb{E}[Z] &= \mathbb{E} \Bigl[\mathrm{tr} \,\,  
(\tilde{\mathbf{X}}^{\top} \tilde{\mathbf{X}})^{-1} \tilde{\mathbf{X}}^{\top}
     \mathbf{T}^{-1/2}
   (\mathbf{A} - \mathbf{P}) \mathbf{T}^{-1} (\mathbf{A} -
   \mathbf{P}) \mathbf{T}^{-1/2} \tilde{\mathbf{X}}(\tilde{\mathbf{X}}^{\top} \tilde{\mathbf{X}})^{-1}
   \Bigr] \\
   &= \mathrm{tr} \,\, 
(\tilde{\mathbf{X}}^{\top} \tilde{\mathbf{X}})^{-1} \tilde{\mathbf{X}}^{\top}
   \mathbb{E} \Bigl[\mathbf{T}^{-1/2} (\mathbf{A} - \mathbf{P}) \mathbf{T}^{-1} (\mathbf{A} -
   \mathbf{P}) \mathbf{T}^{-1/2} \Bigr] \tilde{\mathbf{X}}(\tilde{\mathbf{X}}^{\top} \tilde{\mathbf{X}})^{-1}.
     \end{split}
   \end{equation*}  

   We note that $\mathbf{T}^{-1/2} (\mathbf{A} - \mathbf{P}) \mathbf{T}^{-1}
   (\mathbf{A} - \mathbf{P}) \mathbf{T}^{-1/2}$ is a $n \times n$ matrix whose
   $ij$-th entry $\xi_{ij}$ is of the form
   \begin{equation*}
     \xi_{ij} := \sum_{k} t_i^{-1/2} t_k^{-1} t_{j}^{-1/2} (a_{ik} - p_{ik})(a_{kj} - p_{kj}) 
   \end{equation*}
   and hence
   \begin{equation*}
     \mathbb{E}[\xi_{ij}]
     = \begin{cases} 0 & \text{if $i \not = j$} \\
       \sum_{k} t_i^{-1} t_k^{-1} p_{ik} (1 - p_{ik}) & \text{if $i = j$} \end{cases} 
   \end{equation*}
   We shall denote by $\tilde{\mathbf{M}}$ the diagonal matrix
   $(\mathbb{E}[\xi_{ij}])$ as given above. Then 
   \begin{equation*}
      n \rho_n \mathbb{E}[Z] = 
n \rho_n \mathrm{tr} \, \, (\tilde{\mathbf{X}}^{\top}
\tilde{\mathbf{X}})^{-1} \tilde{\mathbf{X}}^{\top} 
   \tilde{\mathbf{M}} \tilde{\mathbf{X}} (\tilde{\mathbf{X}}^{\top}
\tilde{\mathbf{X}})^{-1}
\end{equation*}
We first recall from Eq.~\eqref{eq:tildeX-outer} that $\tilde{\mathbf{X}}^{\top} \tilde{\mathbf{X}} \overset{\mathrm{a.s.}}{\rightarrow} \tilde{\Delta}$ and $(\tilde{\mathbf{X}}^{\top} \tilde{\mathbf{X}})^{-1} \overset{\mathrm{a.s.}}{\rightarrow} \tilde{\Delta}^{-1}$ as $n \rightarrow \infty$. 
We next consider $n \rho_n \tilde{\mathbf{X}}^{\top} \tilde{\mathbf{M}}
\tilde{\mathbf{X}}$. Let $\tilde{m}_i$ denote the $i$-th diagonal element of $\tilde{\mathbf{M}}$. We have
\begin{equation*}
  \begin{split}
  n \rho_n \tilde{\mathbf{X}}^{\top} \tilde{\mathbf{M}}
\tilde{\mathbf{X}} &= n \rho_n \sum_{i} \frac{\rho_n X_i
    X_i^{\top}\tilde{m}_i}{t_i} \\&= n \rho_n \sum_{i}
  \frac{ \rho_n X_i X_i^{\top}
    \tilde{m}_i}{\rho_n \sum_{j} X_i^{\top} X_j} \\ &= 
  \sum_{i} \frac{X_i X_i^{\top} n \rho_n \tilde{m}_i}{n X_i^{\top} \mu} + \sum_{i}
  \frac{X_i X_i^{\top} n \rho_n \tilde{m}_i}{n X_i^{\top} \mu} \Bigl(\frac{ n X_i^{\top} \mu - \sum_{j}
    X_i^{\top} X_j}{\sum_{j} X_i^{\top} X_j} \Bigr).
  \end{split}
\end{equation*}
Similar to our derivation of Eq.~\eqref{eq:tildeX-outer}, we have
\begin{equation*}
\begin{split}
  - (\sup_{j \in [n]}{ n \rho_n \tilde{m}_j c_j}) \sum_{i} \frac{X_i X_i^{\top}}{n
    X_i^{\top} \mu} & \preceq \sum_{i} \frac{X_i X_i^{\top} n \rho_n \tilde{m}_i}{n
    X_i^{\top} \mu} \Bigl(\frac{n X_i^{\top} \mu - \sum_{j} X_i^{\top}
    X_j}{\sum_{j} X_i^{\top} X_J} 
   \Bigr) \\ &\preceq (\sup_{j \in [n]} n \rho_n \tilde{m}_j c_j) \sum_{i} \frac{X_i X_i^{\top}}{n
    X_i^{\top} \mu}.
    \end{split}
\end{equation*}
In addition, for each index $i$, 
\begin{gather*}
n \rho_n \tilde{m}_i =
n \rho_n \sum_{k} t_i^{-1} t_k^{-1} p_{ik} (1 - p_{ik}) = O_{\mathbb{P}}(1)
\end{gather*} 
and hence $\sup_{i \in [n]} n \rho_n \tilde{m}_i c_i \overset{\mathrm{a.s.}}{\longrightarrow} 0$
as $n \rightarrow \infty$. Therefore
\begin{equation}
  \label{eq:12}
  \sum_{i} \frac{X_i X_i^{\top} n \rho_n \tilde{m}_i}{n
    X_i^{\top} \mu} \Bigl(\frac{n X_i^{\top} \mu - \sum_{j} X_i^{\top}
    X_j}{\sum_{j} X_i^{\top} X_J} 
   \Bigr) \overset{\mathrm{a.s}}{\longrightarrow} \bm{0} 
\end{equation}
as $n \rightarrow \infty$. We thus only need to consider
\begin{equation*}
  \begin{split}
  \sum_{i} \tfrac{X_i X_i^{\top} n \rho_n \tilde{m}_i}{n X_i^{\top}
    \mu} & = \sum_{i} \sum_{k} \tfrac{\rho_n X_i X_i^{\top} 
  p_{ik} ( 1- p_{ik})}{(X_i^{\top} \mu) t_i t_k} \\ &=
\sum_{i} \sum_{k} \tfrac{ \rho_n X_i X_i^{\top} \rho_n X_i^{\top} X_k (1 - \rho_n
     X_i^{\top} X_k)}{(X_i^{\top} \mu) \sum_{j} \rho_n X_i^{\top} X_j
     \sum_{l} \rho_n X_k^{\top} X_l} \\
   &= \sum_{i} \sum_{k}   \tfrac{X_i X_i^{\top} (X_i^{\top} X_k - \rho_n X_i^{\top}
     X_k X_k^{\top} X_i)}{(X_i^{\top} \mu) \sum_{j} X_i^{\top} X_j
     \sum_{l} X_k^{\top} X_l} \\
   &= \sum_{i} \sum_{k} \tfrac{X_i X_i^{\top} (X_i^{\top} X_k - \rho_n X_i^{\top}
     X_k X_k^{\top} X_i)}{n^{2} (X_i^{\top} \mu)^{2} (X_k
     ^{\top} \mu) } \\ &+ \sum_{i} \sum_{k} \tfrac{X_i X_i^{\top} (X_i^{\top} X_k - \rho_n X_i^{\top}
     X_k X_k^{\top} X_i)}{n^{2} (X_i^{\top} \mu)^{2} (X_k^{\top} \mu)} \Bigl(\tfrac{n^{2} (X_i^{\top}
     \mu)(X_k^{\top} \mu)  - \sum_{j} X_i^{\top} X_j \sum_{l}
   X_k^{\top} X_l}{ \sum_{j} X_i^{\top} X_j \sum_{l}
   X_k^{\top} X_l} \Bigr)
  \end{split}
\end{equation*}
An analogous argument to that used in deriving Eq.~\eqref{eq:12} yield
\begin{equation*}
  \sum_{i} \sum_{k} \tfrac{X_i X_i^{\top} (X_i^{\top} X_k - \rho_n X_i^{\top}
     X_k X_k^{\top} X_i)}{n^{2} (X_i^{\top} \mu)^{2} (X_k^{\top} \mu)} \Bigl(\tfrac{n^{2} (X_i^{\top}
     \mu)(X_k^{\top} \mu)  - \sum_{j} X_i^{\top} X_j \sum_{l}
   X_k^{\top} X_l}{ \sum_{j} X_i^{\top} X_j \sum_{l}
   X_k^{\top} X_l} \Bigr) \overset{\mathrm{a.s.}}{\longrightarrow} \bm{0}
\end{equation*}
as $n \rightarrow \infty$. It thus remains to evaluate
$$ \sum_{i} \sum_{k} \frac{X_i X_i^{\top} (X_i^{\top} X_k - \rho_n X_i^{\top}
     X_k X_k^{\top} X_i)}{n^{2} (X_i^{\top} \mu)^{2} (X_k
     ^{\top} \mu) }.$$

The strong law of large numbers implies
\begin{gather*}
  \sum_{i} \sum_{k} \frac{X_i X_i^{\top} X_i^{\top} X_k}{n^{2} (X_i^{\top} \mu)^{2} (X_k
     ^{\top} \mu) } \overset{\mathrm{a.s.}}{\longrightarrow} \mathbb{E} \Bigl[\frac{ X_1 X_1^{\top}
     X_1^{\top} \tilde{\mu}}{ (X_1^{\top} \mu)^2 } \Bigr] \\
\rho_n \sum_{i} \sum_{k} \frac{X_i X_i^{\top} X_i^{\top}
     X_k X_k^{\top} X_i}{n^{2} (X_i^{\top} \mu)^{2} (X_k
     ^{\top} \mu) } \rightarrow \rho_n \mathbb{E} \Bigl[ \frac{X_1
   X_1^{\top} X_1^{\top} \tilde{\Delta} X_1}{(X_1^{\top} \mu)^{2}} \Bigr].
 \end{gather*}
We invoke Slutsky's theorem and conclude that
\begin{equation}
\label{eq:slutsky-lse-part1}
  \begin{split}
     n \rho_n Z &= n \rho_n \|\mathbf{T}^{-1/2}(\mathbf{A} - \mathbf{P}) \mathbf{T}^{-1/2} \tilde{\mathbf{X}} (\tilde{\mathbf{X}}^{\top} \tilde{\mathbf{X}})^{-1}\|_{F}^{2} \\ &=
     n \rho_n
\mathrm{tr} \, \, (\tilde{\mathbf{X}}^{\top}
\tilde{\mathbf{X}})^{-1} \tilde{\mathbf{X}}^{\top} 
   \tilde{\mathbf{M}} \tilde{\mathbf{X}} (\tilde{\mathbf{X}}^{\top}
\tilde{\mathbf{X}})^{-1} + O_{\mathbb{P}}((n \rho_n)^{-1/2} \log^{1/2}{n})  \\ & \rightarrow \mathrm{tr}
\tilde{\Delta}^{-1} \mathbb{E}\Bigl[\frac{X_1 X_1^{\top} (X_1^{\top}
    \tilde{\mu} - \rho_n X_1^{\top} \tilde{\Delta}
    X_1)}{(X_1^{\top} \mu)^{2}} \Bigr] \tilde{\Delta}^{-1}.
  \end{split}
\end{equation}

We next bound $Z := \|(\mathbf{T} - \mathbf{D}) \mathbf{T}^{-1} \tilde{\mathbf{X}} \|^{2}_{F}$. $Z$ is again a function of the $n(n-1)/2$
independent random variables $\{a_{ij} \}_{i < j}$. Let $Z^{(ij)} = \|(\mathbf{T} - \mathbf{D}^{(ij)}) \mathbf{T}^{-1} \tilde{\mathbf{X}} \|$ where $\mathbf{D}^{(ij)}$ is the diagonal matrix whose diagonal entries are the degrees of $\mathbf{A}^{(ij)}$; we recall that $\mathbf{A}^{(ij)}$ is obtained by replacing the $(i,j)$ and $(j,i)$ entries of $\mathbf{A}$ with an independent copy $a'_{ij}$ of $a_{ij}$. We now bound $\sum_{i < j} (Z - Z^{(ij)})^2$. Let $\tilde{X}_i$ denote the $i$-th row of $\tilde{\mathbf{X}}$. Then
\begin{equation*}
Z = \sum_{k} \frac{(t_k - d_k)^{2}}{t_k^2} \|\tilde{X}_k\|^2,
\end{equation*}
and hence (with $d^{(ij)}_{k}$ denoting the degree of vertex $k$ in $\mathbf{A}^{(ij)}$)
\begin{equation*}
\begin{split}
Z - Z^{(ij)} &= \sum_{k} \bigl((t_k - d_k)^{2} - (t_k - d^{(ij)}_k)^2\bigr) \frac{\|\tilde{X}_k\|^2}{t_k^2} \\ &= \sum_{k} (d^{(ij)}_k - d_k) (2 t_k - d_k - d^{(ij)}_k) \frac{ \|\tilde{X}_k\|^2}{t_k^2}
\\ &= (a'_{ij} - a_{ij}) \Bigl((2 t_i - 2 d_i + a_{ij} - a'_{ij}) \frac{\|\tilde{X}_i\|^2}{t_i^2}  + (2 t_j - 2 d_j + a_{ij} - a'_{ij}) \frac{\| \tilde{X}_j \|^2}{t_j^2} \Bigr).
\end{split}
\end{equation*}
Using the fact that $(b + c)^2 \leq 2b^2 + 2c^2$ and that $a_{ij} = a_{ji}$, $a'_{ij} = a'_{ji}$ we have
\begin{equation*}
(Z - Z^{(ij)})^{2} \leq 2 (a'_{ij} - a_{ij})^{2} (2 t_i - 2 d_i + a_{ij} - a'_{ij})^{2} \frac{\|\tilde{X}_i \|^{4}}{t_i^4} + 2 (a'_{ji} - a_{ji})^{2} (2 t_j - 2 d_j + a_{ji} - a'_{ji})^{2}
 \frac{\|\tilde{X}_j\|^{4}}{t_j^4},
\end{equation*}
from which we derive
\begin{equation*}
\begin{split}
\allowdisplaybreaks[2]
 \sum_{i < j} (Z - Z^{(ij)})^{2} 
 & \leq \sum_{i=1}^{n} \sum_{j=1}^{n} (a'_{ij} - a_{ij})^{2} \bigl(16( t_i - d_i)^{2} + 4 \bigr) \frac{\|\tilde{X}_i\|^4}{t_i^4} \\
 & \leq \sum_{i=1}^{n} \sum_{j=1}^{n} \bigl(16 (t_i - d_i)^2 + 4 \bigr) \frac{\|\tilde{X}_i\|^4}{t_i^4} \\
 & \leq \sum_{i=1}^{n} \sum_{j=1}^{n} \bigl(16 (t_i - d_i)^2 + 4 \bigr) \frac{\|\tilde{X}_i\|^2}{t_i^2} \frac{\rho_n t_i^{-1} \|X_i\|^2}{t_i^2} \\
 & \leq C \sum_{i=1}^{n} \sum_{j=1}^{n} \bigl(16 (t_i - d_i)^2 + 4 \bigr) \frac{\|\tilde{X}_i\|^2}{t_i^2} n^{-3} \rho_n^{-2} \\
 & \leq C_1 (n \rho_n)^{-2} Z + C_2 (n \rho_n)^{-4} \leq C_3 (n \rho_n)^{-2} Z
 \end{split}
\end{equation*}
for some constants $C_1, C_2, C_3 > 0$. Once again, we apply Theorem~\ref{THM:log_Sobolev} to conclude
\begin{gather}
  \label{eq:log-sobolev-part1a}
  \mathbb{P}[Z - \mathbb{E}[Z] > t] \leq \exp\Bigl( \frac{-Ct^2}{(n \rho_n)^{-2} \mathbb{E}[Z] + 2 (n \rho_n)^{-2} t} \Bigr) \\
  \mathbb{P}[Z - \mathbb{E}[Z] > -t] \leq \exp\Bigl( \frac{-Ct^2}{(n \rho_n)^{-2} \mathbb{E}[Z]} \Bigr).
\end{gather}
In addition, $\mathbb{E}[Z] = \mathbb{E}[\sum_{k} (t_k - d_k)^2 t_k^{-2} \|\tilde{X}_k\|^2] \leq C (n \rho_n)^{-1}$ for some constant $C > 0$; here the expectation is taken with respect to $\mathbf{D}$ conditional on $\mathbf{P}$. 
We thus conclude 
\begin{equation}
\label{eq:Z-Z2}
\begin{split}
Z &= \|(\mathbf{T} - \mathbf{D}) \mathbf{T}^{-1} \tilde{\mathbf{X}} \|_{F}^2 \\ &= \mathbb{E}[\|\mathbf{T} - \mathbf{D}) \mathbf{T}^{-1} \tilde{\mathbf{X}} \|_{F}^2 + O_{\mathbb{P}}((n \rho_n)^{-3/2} \log^{1/2}(n)). 
\end{split}
\end{equation}
We now evaluate $n \rho_n \mathbb{E}[\|\tfrac{1}{2} \mathbf{T}^{-1} (\mathbf{T} - \mathbf{D}) \tilde{\mathbf{X}} \|_{F}^2]$. We only sketch the argument, noting that the details follow in a similar manner to that used in deriving Eq.~\eqref{eq:slutsky-lse-part1}. 
We have that
\begin{equation*}
\begin{split}
n \rho_n \mathbb{E}[\|\frac{1}{2} \mathbf{T}^{-1} (\mathbf{T} - \mathbf{D}) \tilde{\mathbf{X}} \|_{F}^2] &= n \rho_n \frac{1}{4} \mathrm{tr} \tilde{\mathbf{X}}^{\top}
\mathbf{T}^{-1} \mathbb{E}[(\mathbf{T} - \mathbf{D})^2] \mathbf{T}^{-1} \tilde{\mathbf{X}}.
\end{split}
\end{equation*} 
Now $\mathbf{T}^{-1} \mathbb{E}[(\mathbf{T} - \mathbf{D})^{2}] \mathbf{T}^{-1}$ is a diagonal matrix whose $i$-th diagonal entry is of the form
$ t_i^{-2} \sum_{j} p_{ij} (1 - p_{ij})$. Hence, 
\begin{equation*}
\begin{split}
 n \rho_n \tilde{\mathbf{X}}^{\top} \mathbf{T}^{-1} \mathbb{E}[(\mathbf{T} - \mathbf{D})^{2}] \mathbf{T}^{-1} \tilde{\mathbf{X}} &= n \rho_n^2 \sum_{i} t_i^{-3} X_i X_i^{\top} \sum_{j} p_{ij} (1 - p_{ij})  \\
 &= \sum_{i} \frac{n \rho_n^2 X_i X_i^{\top}}{(n \rho_n X_i^{\top} \mu)^{-3}} \sum_{j} \rho_n X_i^{\top} X_j (1 - \rho_n X_i^{\top} X_j) + o_{\mathbb{P}}(1)\\
 &= \sum_{i} n^{-1} \frac{X_i X_i^{\top}}{(X_i^{\top} \mu)^3} \sum_{j} n^{-1} X_i^{\top} X_j (1 - \rho_n X_i^{\top} X_j) + o_{\mathbb{P}}(1)
 \\ &= \sum_{i} n^{-1} \frac{X_i X_i^{\top}}{(X_i^{\top} \mu)^3} \sum_{j} n^{-1} X_i^{\top} X_j (1 - \rho_n X_j^{\top} X_i) + o_{\mathbb{P}}(1)
 \end{split}
\end{equation*}
We therefore have
\begin{equation}
\label{eq:slutsky-lse-part2}
\begin{split}
 n \rho_n \mathbb{E}[\|\frac{1}{2} \mathbf{T}^{-1} (\mathbf{T} - \mathbf{D}) \tilde{\mathbf{X}} \|_{F}^2] & \overset{\mathrm{a.s.}}{\longrightarrow} \frac{1}{4} \mathrm{tr} \Bigl( \mathbb{E}\Bigl[\frac{X_i X_i^{\top}}{(X_i^{\top} \mu)^2} \Bigl(1 - \frac{\rho_n X_i^{\top} \Delta X_i}{(X_i^{\top} \mu)^3}\Bigr) \Bigr]
 \end{split}
 \end{equation}
as $n \rightarrow \infty$. 

Finally we consider $Z := n \rho_n \mathrm{tr} \,\, \tilde{\mathbf{X}}^{\top} \mathbf{T}^{-1} (\mathbf{T} - \mathbf{D}) \mathbf{T}^{-1/2}(\mathbf{A} - \mathbf{P}) \mathbf{T}^{-1/2} \tilde{\mathbf{X}} (\tilde{\mathbf{X}}^{\top} \tilde{\mathbf{X}})^{-1}$. A similar, albeit slightly more tedious, argument to that used in deriving Eq.~\eqref{eq:LSE-main-part1} and Eq.~\eqref{eq:Z-Z2} yields
\begin{equation*}
\begin{split}
Z &= \mathrm{tr} \,\, \tilde{\mathbf{X}}^{\top} \mathbf{T}^{-1} (\mathbf{T} - \mathbf{D}) \mathbf{T}^{-1/2}(\mathbf{A} - \mathbf{P}) \mathbf{T}^{-1/2} \tilde{\mathbf{X}} (\tilde{\mathbf{X}}^{\top} \tilde{\mathbf{X}})^{-1} \\
&= \mathrm{tr} \,\, \mathbb{E} \Bigl[ \tilde{\mathbf{X}}^{\top} \mathbf{T}^{-1} (\mathbf{T} - \mathbf{D}) \mathbf{T}^{-1/2}(\mathbf{A} - \mathbf{P}) \mathbf{T}^{-1/2} \tilde{\mathbf{X}} (\tilde{\mathbf{X}}^{\top} \tilde{\mathbf{X}})^{-1} \Bigr] + O_{\mathbb{P}}((n \rho_n)^{-3/2} \log^{1/2}{n}).
\end{split}
\end{equation*}
We now evaluate $\mathbb{E}[Z]$. We have
$$ \mathbb{E}[Z] = \mathrm{tr} \,\, \tilde{\mathbf{X}}^{\top} \mathbf{T}^{-3/2} \mathbb{E} \Bigl[(\mathbf{T} - \mathbf{D}) (\mathbf{A} - \mathbf{P}) \Bigr] \mathbf{T}^{-1/2} \tilde{\mathbf{X}} (\tilde{\mathbf{X}}^{\top} \tilde{\mathbf{X}})^{-1} $$
Now the $ij$-th entry of $\mathbb{E}[(\mathbf{A} - \mathbf{P})(\mathbf{T} - \mathbf{D})]$ is of the form
\begin{equation*}
\begin{split}
\mathbb{E}[(a_{ij} - p_{ij})(t_j - d_j)] &= \mathbb{E}[(a_{ij} - p_{ij}) \sum_{k} (p_{jk} - a_{jk})] \\ &= \sum_{k} \mathbb{E}[(a_{ij} - p_{ij})(p_{kj} - a_{kj})] = - p_{ij}(1 - p_{ij}),
\end{split}
\end{equation*} 
and hence, with $\circ$ denoting the Hadamard product of matrices, 
\begin{equation}
\begin{split}
 n \rho_n \mathbb{E}[Z] 
&= - n \rho_n \mathrm{tr} \,\, \tilde{\mathbf{X}}^{\top}  \mathbf{T}^{-3/2} (\mathbf{P} - \mathbf{P} \circ \mathbf{P}) \mathbf{T}^{-1/2} \tilde{\mathbf{X}} (\tilde{\mathbf{X}}^{\top} \tilde{\mathbf{X}})^{-1} \\
&= - n \rho_n \mathrm{tr} \,\, \tilde{\mathbf{X}}^{\top} \mathbf{T}^{-1} \bigl(\mathbf{T}^{-1/2} \mathbf{P} \mathbf{T}^{-1/2} - \mathbf{T}^{-1/2} (\mathbf{P} \circ \mathbf{P}) \mathbf{T}^{-1/2} \bigr) \tilde{\mathbf{X}} (\tilde{\mathbf{X}}^{\top} \tilde{\mathbf{X}})^{-1} \\
&= - n \rho_n \mathrm{tr} \,\, \tilde{\mathbf{X}}^{\top} \mathbf{T}^{-1} \bigl(\tilde{\mathbf{X}} \tilde{\mathbf{X}}^{\top} - \mathbf{T}^{-1/2} (\mathbf{P} \circ \mathbf{P}) \mathbf{T}^{-1/2} \bigr) \tilde{\mathbf{X}} (\tilde{\mathbf{X}}^{\top} \tilde{\mathbf{X}})^{-1} \\
&= - n \rho_n \mathrm{tr} \,\, \tilde{\mathbf{X}}^{\top} \mathbf{T}^{-1} \tilde{\mathbf{X}} + n \rho_n \mathrm{tr} \,\, \tilde{\mathbf{X}}^{\top} \mathbf{T}^{-3/2} (\mathbf{P} \circ \mathbf{P}) \mathbf{T}^{-1/2} \tilde{\mathbf{X}} (\tilde{\mathbf{X}}^{\top} \tilde{\mathbf{X}})^{-1}.
\end{split}
\end{equation}
We first consider the term $n \rho_n \mathrm{tr} \,\,  \tilde{\mathbf{X}}^{\top} \mathbf{T}^{-1} \tilde{\mathbf{X}}$. We have
\begin{equation*}
\begin{split}
 n \rho_n \mathrm{tr} \,\,  \tilde{\mathbf{X}}^{\top} \mathbf{T}^{-1} \tilde{\mathbf{X}} &= - n \rho_n \sum_{i} \frac{\rho_n X_i X_i^{\top}}{t_i^2} = - \frac{1}{n} \sum_{i} \frac{X_i X_i^{\top}}{(X_i^{\top} \mu)^2} + o_{\mathbb{P}}(1), 
 \end{split}
 \end{equation*}
 and hence
 \begin{equation}
 \label{eq:slutsky-lse-part3a}
 - n \rho_n \mathrm{tr} \,\,  \tilde{\mathbf{X}}^{\top} \mathbf{T}^{-1} \tilde{\mathbf{X}}
  \overset{\mathrm{a.s.}}{\longrightarrow} - \mathrm{tr} \,\, \mathbb{E}\Bigl[ \frac{X_1 X_1^{\top}}{(X_1^{\top} \mu)^2}\Bigr]. 
 \end{equation}
 Finally, we consider the term $n \rho_n \mathrm{tr} \,\, \tilde{\mathbf{X}}^{\top} \mathbf{T}^{-3/2} (\mathbf{P} \circ \mathbf{P}) \mathbf{T}^{-1/2} \tilde{\mathbf{X}} (\tilde{\mathbf{X}}^{\top} \tilde{\mathbf{X}})^{-1}$.
 We recall that 
 $(\tilde{\mathbf{X}}^{\top} \tilde{\mathbf{X}})^{-1} \overset{\mathrm{a.s.}}{\longrightarrow} \tilde{\Delta}^{-1}$ as $n \rightarrow \infty$. In addition,
 \begin{equation*} 
 \begin{split}
 n \rho_n \tilde{\mathbf{X}}^{\top} \mathbf{T}^{-3/2} (\mathbf{P} \circ \mathbf{P}) \mathbf{T}^{-1/2} \tilde{\mathbf{X}} &=
 n \rho_n^2 \mathbf{X}^{\top} \mathbf{T}^{-2} (\mathbf{P} \circ \mathbf{P}) \mathbf{T}^{-1} \mathbf{X} \\
 &= n \rho_n^2 \sum_{i} \sum_{j} \frac{p_{ij}^2}{t_i^2 t_j} X_i X_j^{\top} \\
 &= n \rho_n ^2 \sum_{i} \sum_{j} \frac{p_{ij}^2}{(n \rho_n)^3 (X_i^{\top} \mu)^2 X_j^{\top} \mu} X_i X_j^{\top} + o_{\mathbb{P}}(1) \\
 &= n \rho_n^2 \sum_{i} \sum_{j} \frac{ \rho_n^2 (X_i^{\top} X_j)^2}{(n \rho_n)^3 (X_i^{\top} \mu)^2 X_j^{\top} \mu} X_i X_j^{\top} + o_{\mathbb{P}}(1) \\
 &= \rho_n \sum_{i} \frac{1}{n} \sum_{j} \frac{1}{n} \frac{X_i^{\top} X_j X_j^{\top} X_i}{(X_i^{\top} \mu)^2 X_j^{\top} \mu} X_i X_j^{\top} + o_{\mathbb{P}}(1). 
\end{split}
\end{equation*}
We thus conclude 
\begin{equation}
\label{eq:slutksy-lse-part3b}
n \rho_n \tilde{\mathbf{X}}^{\top} \mathbf{T}^{-3/2} (\mathbf{P} \circ \mathbf{P}) \mathbf{T}^{-1/2} \tilde{\mathbf{X}} (\tilde{\mathbf{X}}^{\top} \tilde{\mathbf{X}})^{-1} \overset{\mathrm{a.s.}}{\longrightarrow} \rho_n \mathrm{tr} \,\, \mathbb{E}\Bigl[\frac{X_1^{\top} X_2 X_2^{\top} X_1}{(X_1^{\top} \mu)^2 X_2^{\top} \mu} X_1 X_2^{\top}\Bigr] \tilde{\Delta}^{-1}
\end{equation}
 where the expectation is taken with respect to $X_1, X_2$ being i.i.d drawn from $F$. 
 Combining Eq.~\eqref{eq:slutsky-lse-part3a} and Eq.~\eqref{eq:slutksy-lse-part3b} yield
 \begin{equation}
 \begin{split}
 \label{eq:slutsky-lse-part3}
 n \rho_n \mathrm{tr} \,\, \mathbb{E} \Bigl[ \tilde{\mathbf{X}}^{\top} \mathbf{T}^{-1} (\mathbf{T} - \mathbf{D}) \mathbf{T}^{-1/2}(\mathbf{A} - \mathbf{P}) \mathbf{T}^{-1/2} \tilde{\mathbf{X}} (\tilde{\mathbf{X}}^{\top} \tilde{\mathbf{X}})^{-1} \Bigr] \\
 \overset{\mathrm{a.s.}}{\longrightarrow} \rho_n \mathrm{tr} \,\, \mathbb{E}\Bigl[\frac{X_1^{\top} X_2 X_2^{\top} X_1}{(X_1^{\top} \mu)^2 X_2^{\top} \mu} X_1 X_2^{\top}\Bigr] \tilde{\Delta}^{-1} - \mathrm{tr} \,\, \mathbb{E}\Bigl[ \frac{X_1 X_1^{\top}}{(X_1^{\top} \mu)^2}\Bigr]. 
 \end{split}
 \end{equation}
Eq.~\eqref{eq:4} and Eq.~\eqref{eq:14} then follows directly from Eq.~\eqref{eq:slutsky-lse-part1}, Eq.~\eqref{eq:slutsky-lse-part2} and 
Eq.~\eqref{eq:slutsky-lse-part3}.

\section{Within-block variances}
\label{sec:within-variance-appendix}
We now verify that Theorem~\ref{THM:between-ASE} and Theorem~\ref{THM:between-LSE} are indeed generalizations of Theorem~3.1 and Theorem~3.2 from \cite{bickel_sarkar_2013}. Suppose that $K = d$, i.e., that $\mathbf{B}$ is invertible. Then denoting by $\bm{\nu}$ the $d \times d$ matrix $\bm{\nu} = \bigl[ \nu_1 \mid \nu_2 \mid \cdots \mid \nu_d \bigr]$, we have that $\bm{\nu}$ is also invertible and that $\mathbf{B} = \bm{\nu}^{\top} \bm{\nu}$ and $\Delta = \bm{\nu} \mathrm{diag}(\bm{\pi}) \bm{\nu}^{\top}$. Let $\bm{z}_k = (\nu_k^{\top} \nu_1 (1 - \nu_k^{\top} \nu_1), \cdots, \nu_{k}^{\top} \nu_d (1 - \nu_{k}^{\top} \nu_d))$. Then
$$ \mathbb{E}[X_1 X_1^{\top} (\nu_k^{\top} X_1 - \nu_k^{\top} X_1 X_1^{\top} \nu_k)] = \bm{\nu} \bigl(\mathrm{diag}(\bm{\pi}) \mathrm{diag}(\bm{z}_k) \bigr) \bm{\nu}^{\top}. $$
Then Eq.~\eqref{eq:d_kk1} in Theorem~\ref{THM:between-ASE} simplifies to
\begin{equation}
\label{eq:d_kk_invertible}
\begin{split}
 n^2 \hat{d}_{kk} & \overset{\mathrm{a.s.}}{\longrightarrow} \mathrm{tr} \,\, \Delta^{-3} \mathbb{E}[X_1 X_1^{\top} (\nu_k^{\top} X_1 - \nu_k^{\top} X_1 X_1^{\top} \nu_k)] \\ &= \mathrm{\tr} \,\, (\bm{\nu} \mathrm{diag}(\bm{\pi}) \bm{\nu}^{\top})^{-3} \bm{\nu} \bigl(\mathrm{diag}(\bm{\pi}) \mathrm{diag}(\bm{z}_k) \bigr) \bm{\nu}^{\top} \\
 &= \mathrm{\tr} \,\, ((\bm{\nu}^{\top})^{-1} \mathrm{diag}(\bm{\pi})^{-1} \bm{\nu}^{-1})^{3} \bm{\nu} \bigl(\mathrm{diag}(\bm{\pi}) \mathrm{diag}(\bm{z}_k) \bigr) \bm{\nu}^{\top} \\
 &= \mathrm{\tr} \,\, \bigl(\mathrm{diag}(\bm{\pi})^{-1} \bm{\nu}^{-1} (\bm{\nu}^{\top})^{-1}\bigr)^{2} \mathrm{diag}(\bm{z}_k) 
 \\ &= \mathrm{tr} \,\, \bigl(\mathrm{diag}(\bm{\pi})^{-1} \mathbf{B}^{-1}\bigr)^{2} \mathrm{diag}(\bm{z}_k) \\
 &= \mathrm{tr} \,\, \bigl(\mathrm{diag}(\bm{\pi})^{-1/2} \mathbf{B}^{-1} \mathrm{diag}(\bm{\pi})^{-1/2}\bigr)^{2} \mathrm{diag}(\bm{z}_k)
 \\ &= \sum_{l} \sum_{l'} \frac{\nu_k^{\top} \nu_l (1 - \nu_k^{\top} \nu_l)(\mathbf{B}^{-1}_{ll'})^{2}}{\pi_{l} \pi_{l'}} \\ &= \sum_{l} \sum_{l'} \frac{\mathbf{B}_{kl} (1 - \mathbf{B}_{kl}) (\mathbf{B}^{-1}_{ll'})^{2}}{\pi_{l} \pi_{l'}}
 \end{split}
\end{equation}
where $\mathbf{B}^{-1}_{ll'}$ is the $ll'$-th entry of $\mathbf{B}^{-1}$. We note that the above expression for $\hat{d}_{kk}$ can be written purely in terms of the entries of $\mathbf{B}$ and $\bm{\pi}$ without the need to find the $\nu_1, \nu_2, \dots, \nu_d$ explicitly.

We compare Eq.~\eqref{eq:d_kk_invertible} with Theorem~3.1 in \cite{bickel_sarkar_2013}. Let $\mathbf{A}$ be sampled from a stochastic blockmodel with parameters $\mathbf{B} = \Bigl[\begin{matrix} \alpha_n & \beta_n \\ \beta_n & \gamma_n \end{matrix} \Bigr]$ and $\bm{\pi} = (\pi_1, \pi_2)$ with $\alpha_n \beta_n \not = \gamma_n^2$. In 
\cite{bickel_sarkar_2013}, it is assume that the number of vertices assigned to block $1$ and block $2$ are $n \pi_1$ and $n \pi_2$, respectively. For ease of exposition and without loss of generality, suppose that the row indices of $\mathbf{A}$ are such that the first $n \pi_1$ rows correspond to vertices assigned to block $1$ and the last $n \pi_2 = n - n \pi_1$ rows correspond to vertices assigned to block $2$.
Let $\bm{v}_1$ and $\bm{v}_2$ denote the eigenvectors corresponding to the largest and second largest eigenvector of $\mathbf{P} = \mathbf{Z} \mathbf{B} \mathbf{Z}^{\top}$ where $\mathbf{Z}$ is a $n \times 2$ matrix whose $i$-th row is $(1,0)$ for $i = 1,2,\dots, n\pi_1$ and is $(0,1)$ for $i = n \pi_1 + 1, n \pi_1 + 2, \dots, n$. We then have that $\bm{v}_1 = (x_1, x_1, \dots, x_1, y_1, y_1, \dots, y_1)$ for some $x_1$, $y_1$, i.e., the first $n \pi_1$ elements of $\bm{v}_1$ are $x_1$ and the remaining $n \pi_2$ elements are $x_2$.
Similarly, we have $\bm{v}_2 = (x_2, x_2, \dots, x_2, y_2, y_2, \dots, y_2)$ for some $x_2, y_2$. Then Eq.~(3.1) in \cite{bickel_sarkar_2013} states that (the notation $a_n \sim b_n$ in \cite{bickel_sarkar_2013} means $a_n/b_n = 1 + o_{\mathbb{P}}(1)$)
\begin{equation}
\label{eq:Sarkar1}
 \hat{d}_{11} \sim \Bigl[ \Bigl(\frac{x_1^{2}}{\lambda_1^2} + \frac{x_2^{2}}{\lambda_2^2}\Bigr) n \pi_1 \alpha_n (1 - \alpha_n) + 
\Bigl(\frac{y_1^{2}}{\lambda_1^2} + \frac{y_2^{2}}{\lambda_2^2}\Bigr) n \pi_2 \gamma_n(1 - \gamma_n) \Bigr]
\end{equation}
where $\lambda_1$ and $\lambda_2$ are the largest and second largest eigenvalues of $\mathbf{P}$. We can rewrite Eq.~\eqref{eq:Sarkar1} as
\begin{equation}
\hat{d}_{11} \sim \mathrm{tr} \,\, (\mathbf{P}^{\dagger})^{2} \mathrm{diag}((\alpha_n(1 - \alpha_n), \dots \gamma_n(1 - \gamma_n), \dots))
\end{equation}
where $\mathbf{P}^{\dagger}$ is the Moore-Penrose pseudo-inverse of $\mathbf{P}$ and the first $n \pi_1$ entries of the diagonal matrix $\mathrm{diag}(\alpha_n(1 - \alpha_n), \dots \gamma_n(1 - \gamma_n), \dots)$ are $\alpha_n(1 - \alpha_n)$ while the remaining $n \pi_2$ diagonal entries are $\gamma_n(1 - \gamma_n)$. As $\mathbf{Z}$ is of full-column rank, we have $\mathbf{Z}^{\dagger} = (\mathbf{Z}^{\top} \mathbf{Z})^{-1} \mathbf{Z}^{\top} = \mathrm{diag}((1/(n \pi_1), 1/(n \pi_2))) \mathbf{Z}^{\top}$. Furthermore, $\mathbf{B}$ is invertible and hence
\begin{equation*} 
\mathbf{P}^{\dagger}  = (\mathbf{Z} \mathbf{B} \mathbf{Z}^{\top})^{\dagger} = (\mathbf{Z}^{\top})^{\dagger} \mathbf{B}^{-1} \mathbf{Z}^{\dagger} 
= n^{-2} \mathbf{Z} \mathrm{diag}(\bm{\pi})^{-1} \mathbf{B}^{-1} \mathrm{diag}(\bm{\pi})^{-1} \mathbf{Z}^{\top}.
\end{equation*}
Therefore,
\begin{equation*}
\begin{split}
(\mathbf{P}^{\dagger})^{2}  %
& = 
n^{-3} \mathbf{Z} \mathrm{diag}(\bm{\pi})^{-1} \mathbf{B}^{-1} \mathrm{diag}(\bm{\pi})^{-1} \mathbf{B}^{-1} \mathrm{diag}(\bm{\pi})^{-1} \mathbf{Z}^{\top}
\end{split}
\end{equation*}
and hence
\begin{equation*}
\begin{split}
n^2 \hat{d}_{11} & \sim n^2 \mathrm{tr} \,\, (\mathbf{P}^{\dagger})^{2} \mathrm{diag}((\alpha_n(1 - \alpha_n), \dots, \gamma_n(1 - \gamma_n), \dots)) \\ & \sim n^{-1} \mathrm{tr} \,\, \mathbf{Z} (\mathrm{diag}(\bm{\pi})^{-1} \mathbf{B}^{-1})^{2} \mathrm{diag}(\bm{\pi})^{-1} \mathbf{Z}^{\top} \mathrm{diag}((\alpha_n(1 - \alpha_n), \dots \gamma_n(1 - \gamma_n), \dots)) \\
& \sim \mathrm{tr} \,\, (\mathrm{diag}(\bm{\pi})^{-1} \mathbf{B}^{-1})^{2} \mathrm{diag}((\alpha_n(1 - \alpha_n),\gamma_n(1 - \gamma_n)))
\end{split}
\end{equation*}
which is a special case of Eq.~\eqref{eq:d_kk_invertible}. Theorem~\ref{THM:between-ASE} is thus an extension of Theorem~3.1 in \cite{bickel_sarkar_2013} to general $K$-block stochastic blockmodels, provided that the block probability matrix is positive semidefinite.

We now consider $n^{2} \tilde{d}_{kk}$. When $\mathbf{B}$ is invertible, Eq.~\eqref{eq:d_kk_tilde1} in Theorem~\ref{THM:between-LSE} can be simplified in a manner similar to the derivation of Eq.~\eqref{eq:d_kk_invertible}. Let $\bm{\mu} = (\mu_1, \mu_2, \dots \mu_d)$ where $\mu_k = \nu_k^{\top} \mu$. Then $\tilde{\Delta} = \bm{\nu} (\mathrm{diag}(\bm{\pi})\mathrm{diag}(\bm{\mu})^{-1}) \bm{\nu}^{\top}$. The right hand side of Eq.~\eqref{eq:d_kk_tilde1} can be decompose as $\zeta_1 - \zeta_2 + \zeta_3$ with $\zeta_1$ given by 
\begin{equation*}
\begin{split}
\zeta_1 &= \mathrm{tr} \,\, \tilde{\Delta}^{-3} \mathbb{E}\Bigl[ \frac{X_1 X_1^{\top}}{(X_1^{\top} \mu)^2} \frac{\nu_k^{\top} X_1 - \nu_k^{\top} X_1 X_1^{\top} \nu_k}{\mu_k} \Bigr] \\
&= \frac{1}{\mu_k} \mathrm{tr} \,\, \tilde{\Delta}^{-3} \bm{\nu}^{\top} (\mathrm{diag}(\bm{\pi}) \mathrm{diag}(\bm{\mu})^{-2} \mathrm{diag}(\bm{z}_k)) \bm{\nu}^{\top} \\
&= \frac{1}{\nu_k^{\top} \mu} \mathrm{tr} \,\, (\mathrm{diag}(\bm{\pi})^{-1}\mathrm{diag}(\bm{\mu}) \bm{\nu}^{-1} (\bm{\nu}^{\top})^{-1})^{2} \mathrm{diag}(\bm{\mu})^{-1}\mathrm{diag}(\bm{z}_k) \\
&= \frac{1}{\mu_k} \mathrm{tr} \,\, (\mathrm{diag}(\bm{\pi})^{-1}\mathrm{diag}(\bm{\mu}) \mathbf{B}^{-1})^{2} \mathrm{diag}(\bm{\mu})^{-1} \mathrm{diag}(\bm{z}_k) \\
&= \frac{1}{\mu_k} \mathrm{tr} \,\, (\mathrm{diag}(\bm{\pi})^{-1/2} \mathrm{diag}(\bm{\mu})^{1/2} \mathbf{B}^{-1} \mathrm{diag}(\bm{\mu})^{1/2} \mathrm{diag}(\bm{\pi})^{-1/2})^{2} \mathrm{diag}(\bm{\mu})^{-1}\mathrm{diag}(\bm{z}_k) \\
&= \sum_{l} \sum_{l'} \frac{(\mathbf{B}^{-1}_{ll'})^{2} \mu_{l} \mu_{l'}}{\pi_l \pi_{l'}} \frac{\nu_k^{\top} \nu_l (1 - \nu_k^{\top} \nu_l)}{\mu_l \mu_k}
= \sum_{l} \sum_{l'} \frac{\mathbf{B}_{kl}(1 - \mathbf{B}_{kl})(\mathbf{B}^{-1}_{ll'})^{2} \mu_{l'}}{\pi_l \pi_{l'} \mu_k}. 
\end{split}
\end{equation*}
Let $\bm{e}_k$ denote the vector whose $i$-th element is $1$ if $i = k$ and $0$ otherwise. For $\zeta_2$, we have 
\begin{equation*}
\begin{split}
\zeta_2 &= \mathrm{tr} \,\, \tilde{\Delta}^{-2} \mathbb{E}\Bigl[ \frac{X_1 
 \nu_k^{\top}}{X_1^{\top} \mu} \frac{\nu_k^{\top} X_1 - \nu_k^{\top} X_1 X_1^{\top} \nu_k}{\mu_k^2} \Bigr] \\
&= \frac{1}{\mu_k^2} \mathrm{tr} \,\, \tilde{\Delta}^{-2} \bm{\nu}^{\top} (\mathrm{diag}(\bm{\pi}) \mathrm{diag}(\bm{\mu})^{-1} \mathrm{diag}(\bm{z}_k)) \bm{1} \nu_k^{\top} \\
&= \frac{1}{\mu_k^2} \mathrm{tr} \,\, \mathrm{diag}(\bm{\pi})^{-1} \mathrm{diag}(\bm{\mu}) \bm{\nu}^{-1} (\bm{\nu}^{\top})^{-1} \mathrm{diag}(\bm{z}_k) \bm{1} \bm{e}_k^{\top} \\
&= \frac{1}{\mu_k^{2}} \mathrm{tr} \,\, \mathrm{diag}(\bm{\pi})^{-1}\mathrm{diag}(\bm{\mu}) \mathbf{B}^{-1} \mathrm{diag}(\bm{z}_k) \bm{1} \bm{e}_k^{\top} \\
&= \frac{1}{\pi_k \mu_k} \sum_{l} \nu_k^{\top} \nu_l (1 - \nu_k^{\top} \nu_l) \mathbf{B}^{-1}_{kl} = \frac{1}{\pi_k \mu_k} \sum_{l} \mathbf{B}_{kl}(1 - \mathbf{B}_{kl}) \mathbf{B}^{-1}_{kl}.
\end{split}
\end{equation*}
Finally for $\zeta_3$ we have
\begin{equation*}
\begin{split}
\zeta_3 &= \frac{1}{4 \mu_k^3} \tr \,\, \tilde{\Delta}^{-1} \nu_k \nu_k^{\top} \mathbb{E}[\nu_k^{\top} X_1 - \nu_k^{\top} X_1 X_1^{\top} \nu_k] \\
&= \frac{\mathbb{E}[\nu_k^{\top} X_1 - \nu_k^{\top} X_1 X_1^{\top} \nu_k]}{4 \mu_k^3} \mathrm{tr} \,\, \nu_k^{\top} \tilde{\Delta}^{-1} \nu_k \\
&= \frac{\mathbb{E}[\nu_k^{\top} X_1 - \nu_k^{\top} X_1 X_1^{\top} \nu_k]}{4 \mu_k^3} \mathrm{tr} \,\, \nu_k^{\top} (\bm{\nu}^{\top})^{-1}
(\mathrm{diag}(\bm{\pi})^{-1} \mathrm{diag}(\bm{\mu})) \bm{\nu}^{-1} \nu_k \\
&= \frac{\mathbb{E}[\nu_k^{\top} X_1 - \nu_k^{\top} X_1 X_1^{\top} \nu_k]}{4 \mu_k^3} \frac{\mu_k}{\pi_k} = \frac{\sum_{l} \pi_l \mathbf{B}_{kl}(1 - \mathbf{B}_{kl})}{4 \pi_k \mu_k^2}.
\end{split}
\end{equation*}
As $\mu_k = \sum_{l} \pi_l \nu_k^{\top} \nu_l = \sum_{l} \pi_l \mathbf{B}_{kl}$, $\zeta_1$, $\zeta_2$ and $\zeta_3$ can also be written purely in terms of the entries of $\mathbf{B}$ and $\bm{\pi}$.

For the two-block stochastic blockmodel, Eq.~(3.3) in \cite{bickel_sarkar_2013} states that
\begin{equation}
\label{eq:sarkar-2}
n^{2} \tilde{d}_{11} \sim \frac{\alpha_n(1 - \alpha_n)}{\mu_1^2} \Bigl(\frac{1}{4} + \frac{\pi_2 \gamma_n}{\mu_1 \tilde{\lambda}_2^2}\Bigr) + \frac{\gamma_n (1 - \gamma_n)}{\mu_1^2} \Bigl(\frac{\pi_2}{4 \pi_1} + \frac{\pi_1 \alpha_n}{\mu_2 \tilde{\lambda}_2^2}\Bigr)
\end{equation}
where $\tilde{\lambda}_2 = \pi_1 \pi_2 (\alpha_n \beta_n - \gamma_n^2)/(\mu_1 \mu_2)$ is the second largest eigenvalue of $\mathcal{L}(\mathbf{P})$ (c.f. Lemma~6.1 in \cite{bickel_sarkar_2013}).
Verifying that $\zeta_1 - \zeta_2 + \zeta_3$ does indeed yield Eq.~\eqref{eq:sarkar-2} for the two-block stochastic blockmodel is a straightforward computation. We omit the details. Theorem~\ref{THM:between-LSE} is thus an extension of Theorem~3.2 in \cite{bickel_sarkar_2013} for general $K$-blocks stochastic blockmodels whenever the matrix of block probabilities is positive semidefinite. 

\end{document}